\def\1{\bm{1}}
\DeclareMathAlphabet{\mathsfit}{\encodingdefault}{\sfdefault}{m}{sl}
\SetMathAlphabet{\mathsfit}{bold}{\encodingdefault}{\sfdefault}{bx}{n}
\newcommand{\E}{\mathbb{E}}
\newcommand{\R}{\mathbb{R}}
\newcommand{\KL}{D_{\mathrm{KL}}}
\newcommand{\algorithmicinput}{\textbf{input}}
\newcommand{\algorithmicoutput}{\textbf{output}}
\newcommand{\INPUT}{\item[\algorithmicinput]}
\newcommand{\OUTPUT}{\item[\algorithmicoutput]}
\renewcommand{\paragraph}[1]{\noindent{\textbf{#1}}\quad}
\newcommand{\np}[1]{\numprint{#1}}
\newtheorem{theorem}{Theorem}[section]
\newtheorem{lemma}[theorem]{Lemma}
\newtheorem*{rep@theorem}{\rep@title}
\newcommand{\newreptheorem}[2]{%
\newenvironment{rep#1}[1]{%
 \def\rep@title{#2 \ref{##1}}%
 \begin{rep@theorem}}%
 {\end{rep@theorem}}}
\newcommand{\A}{\mathcal{A}}
\renewcommand{\H}{\mathcal{H}}
\newcommand{\M}{\mathcal{M}}
\renewcommand{\R}{\mathbb{R}}
\newcommand{\T}{\mathcal{T}}
\newcommand{\X}{\mathcal{X}}
\newcommand{\Y}{\mathcal{Y}}
\renewcommand{\E}{\operatorname*{\mathbb{E}}}
\newcommand{\Q}{\mathcal{Q}}
\renewcommand{\P}{\mathcal{P}}
\newcommand{\Pc}{\mathcal{P}}
\newcommand{\er}{\text{er}}
\newcommand{\her}{\widehat{\er}}
\newcommand{\ter}{\widetilde{\er}}
\newcommand{\vnet}{v}
\newcommand{\hnet}{h}
\newcommand{\phinet}{\phi}
\DeclareRobustCommand\onedot{\futurelet\@let@token\@onedot}
\def\@onedot{\ifx\@let@token.\else.\null\fi\xspace}
\def\iid{{i.i.d}\onedot}
\def\eg{{e.g}\onedot} 
\def\ie{{i.e}\onedot}
\newcommand{\acronym}{PeFLL\xspace}
\newcommand{\method}{Personalized Federated Learning by Learning to Learn\xspace}
\title{\acronym: Personalized Federated Learning \\by Learning to Learn}
\author{%
  Jonathan Scott \\
  Institute of Science and Technology Austria (ISTA) \\ 
  \texttt{jonathan.scott@ist.ac.at} \\
  \And
  Hossein Zakerinia\\ 
  Institute of Science and Technology Austria (ISTA) \\ 
  \texttt{hossein.zakerinia@ist.ac.at} \\
  \And
  Christoph H. Lampert \\
  Institute of Science and Technology Austria (ISTA) \\
  \texttt{chl@ist.ac.at} 
}
\newlength{\densesection}
\newlength{\densesubsection}
\begin{document}

\maketitle

\begin{abstract}
We present \acronym, a new personalized federated learning algorithm that improves over the state-of-the-art in three aspects: 1) it produces more accurate models, especially in the low-data regime,
and not only for clients present during its training phase, but also for any that may emerge in the future;
2) it reduces the amount of on-client computation and client-server communication by providing future clients with ready-to-use personalized models that require no additional finetuning or optimization;
3) it comes with theoretical guarantees that establish generalization from the observed clients to future ones.

At the core of \acronym lies a \emph{learning-to-learn} approach that jointly trains an embedding network and a hypernetwork. The embedding network is used to represent clients in a latent descriptor space in a way that reflects their similarity to each other. The hypernetwork takes as input such descriptors and outputs the parameters of fully personalized client models.
In combination, both networks constitute a learning algorithm that achieves state-of-the-art performance in several personalized federated learning benchmarks.
Code 
\end{abstract}

\vspace{-\densesection}
\section{Introduction}
\vspace{-\densesection}

Federated learning (FL)~\citep{fedavg} has emerged as a de-facto standard for 
privacy-preserving machine learning in a distributed setting, where a 
multitude of clients, \eg, user devices, collaboratively learn prediction 
models without sharing their data directly with any other party.
In practice, the clients' data distributions may differ significantly from each other, 
which makes learning a single global model sub-optimal \citep{challenges_methods, advances}. 
Personalized federated learning (pFL)~\citep{federated_multitask} is a means of dealing with such statistical heterogeneity, by allowing clients to learn individual models while still benefiting from each other. 

A key challenge of personalized federated learning lies in balancing the benefit of increased data 
that is available for joint training with the need to remain faithful to each client's own 
distribution. 
Most methods achieve this through a combination of global model training and some form of finetuning 
or training of a client-specific model. 
However, such an approach has a number of shortcomings. 
Consider, for instance, a federated learning system over millions of mobile devices, such as \emph{Gboard}~\citep{next_word}. Only a fraction of all possible devices are likely to be seen during training, devices can drop out or in at any time, and it should be possible to produce models also for devices that enter the system at a later time. 
Having to train or finetune a new model for each such new device incurs computational costs. This is particularly unfortunate when the computation happens on the client devices, which are typically of low computational power.
Furthermore, high communication costs emerge when model parameters or updates have to be communicated repeatedly between the server and the client. 
The summary effect is a high latency before the personalized model is ready for use. 
Additionally, the quality of the personalized model is unreliable, as it depends on the amount of data available on the client, which in personalized federated learning tends to be small. 

In this work, we address all of the above challenges by introducing a new personalized federated learning algorithm.
In \acronym (for \emph{\method}), it is the server that produces ready-to-use personalized models for any client by means of a single forward pass through a \emph{hypernetwork}.
The hypernetwork's input is a descriptor vector, which any client can produce by forward passing some of its data through an \emph{embedding network}. 
The output of the hypernetwork is the parameters of a personalized model, which the client can use directly without the need for further training or finetuning.
Thereby \textbf{\acronym overcomes the problems of high latency and client-side computation cost} in model personalization from which previous approaches suffered 
when they required client-side optimization and/or 
multi-round client-server communication. 

The combined effect of evaluating first the embedding network and then 
the hypernetwork constitutes a \emph{learning method}: it produces model parameters from input data. 
The process is parametrized by the networks' weights, which 
\acronym trains end-to-end across a training set of clients. 
This \emph{learning-to-learn} viewpoint allows for a number of 
advantageous design choices.
For example, \textbf{\acronym benefits when many clients are 
available for training}, where other approaches might 
suffer in prediction quality.
The reason is that the client models are created by end-to-end
trained networks, and the more clients participate in the 
training and the more diverse these are, the better the 
networks' generalization abilities to future clients. 
Also, \textbf{\acronym is able to handle clients with small datasets}
better than previous approaches. 
Because the descriptor evaluation is a feed-forward process, it 
is not susceptible to overfitting, as any form of on-client 
training or finetuning would be. 
At the same time, already during training the client descriptors 
are computed from mini-batches, thereby anticipating the setting 
of clients with little data. 
Once the training phase is concluded, \textbf{\acronym can even be used 
to create personalized models for clients with only unlabeled data.}

\acronym's strong empirical results, on which we report in Section~\ref{sec:experiments} are not a coincidence, but they 
can be explained by its roots in theory. 
Its training objective is derived from a \textbf{new generalization bound}, which we prove in Section~\ref{sec:generalization} and which yields tighter bounds in the federated setting than previous ones. 
\acronym's training procedure distributes the necessary computation between the server and the clients in a way that ensures that the computationally most expensive task happen on the server yet stays truthful to the federated paradigm.
The optimization is stateless, meaning that clients can drop 
out of the training set or enter it at any time. 
Nevertheless, as we show in Section~\ref{subsec:convergence}, it \textbf{converges at a rate comparable to standard federated averaging}.

\vspace{-\densesection}
\section{Related Work\protect\footnote{After publishing the paper we became aware of the earlier
work~\cite{late_to_the_party} which uses a similar approach to ours.}}
\vspace{-\densesection}
\emph{Personalized federated learning (pFL)} is currently one of the promising techniques for learning models that are adapted to individual clients' data distributions without jeopardizing their privacy.
Existing approaches typically follow one of multiple blueprints:
\emph{multi-task methods}~\citep{federated_multitask,fed_multi_under_mixture,fedU,ditto,pfedme,lower_bounds,mixture_local_global, ye23b, zhang23w, qin2023reliable}
learn individual per-client models, while sharing information 
between clients, \eg through regularization towards a central model. 
\emph{Meta-learning methods} learn a shared model, which 
the clients can quickly adapt or finetune to their individual data 
distributions, \eg by a small number of gradient updates~\citep{perfedavg,fed_maml2, wu2023personalized}.
\emph{Decomposition-based methods}~\citep{fedper, fedrep,fedURepL, think_locally, chen23aj, wu23z} split the learnable 
parameters into two groups: those that are meant to be shared between clients (\eg a feature representation) and those that are learned on a per-client basis (\eg classification heads). 
\emph{Clustering-based methods}~\citep{clusteredFL,three_approaches} 
divide the clients into a fixed number of subgroups and learn individual models for each cluster. 

\emph{Hypernetworks}~\citep{hyp_nets} have only recently been employed in the context of pFL~\citep{pfedLA, pfedlhns}.
Typically, the hypernetworks reside on the clients, which limits the methods' efficiency. 
Closest to our work are \citet{pfedHN} and 
\citet{li2023fedtp}, which also generate each clients' personalized model using server-side hypernetworks. 
These works adopt a non-parametric approach in which the server learns and stores individual descriptor vectors per client. 
However, such an approach has a number of downsides.
First, it entails a stateful optimization problem, which is undesirable because it means the server has to know at any time in advance which client is participating in training to retrieve their descriptors. 
Second, the number of parameters stored at the server grows with the number of clients, which can cause scalability issues in large-scale applications.
Finally, in such a setting the hypernetwork can only be evaluated for clients with which the server has interactive before. For new clients, descriptors must first to be inferred, and this requires an optimization 
procedure with multiple client-server communication rounds. 
In contrast, in \acronym the server learns just the embedding network, and the client descriptors are computed on-the-fly using just forward-evaluations on the clients. 
As a consequence, clients can drop in or out at any time, any number of clients can be handled with a fixed memory budget, and even for previously unseen clients, no iterative optimization is required. 

\emph{Learning-to-Learn.} Outside of federated learning, the idea that learning from several tasks should make it easier to learn future tasks has appeared under different names, such as 
\emph{continual learning}~\citep{ring1994continual,mitchell2015never},
\emph{lifelong learning}~\citep{thrun1995,chen2018lifelong,parisi2019continual}, 
\emph{learning to learn}~\citep{thrun1998,andrychowicz2016learning}, 
\emph{inductive bias learning}~\citep{baxter2000model},
\emph{meta-learning}~\citep{vilalta2002perspective,MAML,hospedales2021meta}.
In this work, we use the term \emph{learning-to-learn}, because it best describes the aspect that the central object that \acronym learns is a mechanism for 
predicting model parameters from training data, \ie essentially it learns a learning algorithm. 

The central question of interest is if the learned learning 
algorithm \emph{generalizes}, \ie, if it produces good models 
not only for the datasets used during its training phase, but 
also for future datasets.
Corresponding results are typically generalization bounds in a PAC-Bayesian framework~\citep{pentina2014pac,amit2018meta,rothfuss2021pacoh,guan2022fast,rezazadeh2022unified}, 
as we also provide in this work. 
Closest to our work in this regard is~\citet{rezazadeh2022unified},
which addresses the task of hyperparameter learning.
However, their bound is not well adapted to the federated learning setting 
in which the number of clients is large and the amount of data per 
client might be small.
For a more detailed comparison, see Section~\ref{sec:generalization}
and Appendix~\ref{app:generalization}.

\vspace{-\densesection}
\section{Method}\label{sec:method}
\vspace{-\densesection}
We now formally introduce the problem we address and introduce the proposed \acronym algorithm.
We assume a standard supervised federated learning setting with a (possibly very large) number, $n$, of clients. Each of these has a data set, 
$S_i=\{(x^i_1,y^i_1),\dots,(x^i_{m^i},y^i_{m^i})\}\subset\X\times\Y$, 
for $i\in\{1,\dots,n\}$, sampled from a client-dependent data distribution $D_i$.
Throughout the paper, we adopt a \emph{non-\iid} setting, \ie, the data distributions can differ between clients, $D_i \neq D_j$ for $i\neq j$.
For any model, $\theta\in\R^d$ (we identify models with their parameter vectors), 
the client can compute its training loss, $\mathcal{L}(\theta;S_i)=\frac{1}{m^i}\sum_{j=1}^{m^i} \ell(x^i_j,y^i_j,\theta)$,
where $\ell:\X\times\Y\times\R^d\rightarrow\R_+$ is a loss function. 
The goal is to learn client-specific models, $\theta_i$, in a way that exploits the benefit of sharing information between clients, while adhering to the principles of federated learning. 

\acronym adopts a \emph{hypernetwork}
approach: for any client, a personalized model, $\theta\in\R^d$, 
is produced as the output of a shared deep network, $h:\R^l\to\R^d$, 
that takes as input a \emph{client descriptor} $v\in\R^l$. 
To compute client descriptors, we use an embedding network, 
$\phi: \X\times\Y \rightarrow \R^l$, which takes individual 
data points as input and averages the resulting embeddings:\footnote{This construction is motivated by the fact that $v$ should be a permutation invariant (set) function~\citep{zaheer2017deep}, but for the sake of conciseness, one can also take it simply as an efficient design choice.}
$\vnet(S)=\frac{1}{|S|}\sum_{(x,y)\in S}\phinet(x,y)$.
We denote the hypernetwork's parameters as $\eta_h$ and the embedding network's parameters as $\eta_v$. As shorthand, we write $\eta=(\eta_h,\eta_v)$.

Evaluating the embedding network followed by the  hypernetwork can be seen as a \emph{learning algorithm}: it transforms a set of training data points into model parameters. 
Consequently, learning $\eta$ can be seen as a form of \emph{learning-to-learn}.
Specifically, we propose the \emph{\acronym (\method)} algorithm, which consists of solving the following optimization problem, where $\|\cdot\|$ denotes the $L^2$-norm,
\begin{equation}
\min_{\eta_h,\eta_v} \quad \lambda_h\|\eta_h\|^2 + \lambda_v\|\eta_v\|^2 + \sum_{i=1}^n \Big[\mathcal{L}\big(\,\hnet(\vnet(S_i; \eta_v); \eta_h); S_i\,\big) + \lambda_{\theta}\big\|\hnet(\vnet(S_i; \eta_v); \eta_h)\big\|^2\Big].\label{eq:objective}
\end{equation}

\acronym performs this optimization in a way that is compatible with 
the constraints imposed by the federated learning setup. 
Structurally, it splits the objective~\eqref{eq:objective} into 
several parts: a \emph{server objective}, which consist 
of the first two (regularization) terms, $f(\eta_h,\eta_v)=\lambda_h\|\eta_h\|^2 + \lambda_v\|\eta_v\|^2$, and multiple \emph{per-client objectives},
each of which consists of the terms inside the summation 
$f_i(\theta_i;S_i)=\mathcal{L}(\theta_i;S_i) + \lambda_{\theta}\big\|\theta_i\|^2$.
The terms are coupled through the identity 
$\theta_i = \hnet(\vnet(S_i; \eta_v); \eta_h)$.
This split allows \acronym to distribute the necessary computation 
efficiently, preserving the privacy of the client data, 
and minimizing the necessary communication overhead.
Pseudocode of the specific steps is provided in 
Algorithms~\ref{alg:predict} and~\ref{alg:train}.

\begin{figure}[t]
{\begin{minipage}{.48\textwidth}
\begin{algorithm}[H]\small
\caption{\texttt{\acronym-predict}}\label{alg:predict}
    \begin{algorithmic}[1]
    \INPUT target client with private dataset $S$
        \STATE \emph{Server} sends embedding network $\eta_v$ to \emph{client}
        \label{algline:server_send_embedding_network2}
        \STATE \emph{Client} selects a data batch $B \subseteq S$
        \label{algline:client_selects_subset2}
        \STATE \emph{Client} computes $v=\vnet(B;\eta_v)$
        \label{algline:client_computes_descriptor2}
        \STATE \emph{Client} sends descriptor $v$ to \emph{server}
        \label{algline:client_sends_descriptor2}
        \STATE \emph{Server} computes $\theta=\hnet(v;\eta_h)$ \label{algline:server_evaluates_hypernetwork}
        \STATE \emph{Server} sends personalized model $\theta$ to \emph{client}
        \label{algline:server_sends_model2}
    \end{algorithmic}
\end{algorithm}
\end{minipage}
\qquad
\begin{minipage}{.45\textwidth}
\begin{figure}[H]
\hspace{.75cm}\begin{tikzpicture}[
  node distance=1.6cm, auto,
  every node/.style={scale=0.8,font=\sffamily, align=center, rounded corners, minimum width=2cm},
  arrow/.style={->, > = latex', bend angle=45, bend left}]
  \tikzmath{\rot = -59;}
  \node[draw, text width=6cm, fill=blue!30, text=white] (server) {Server};
  \node[draw, text width=6cm, fill=red!30, text=white, below=of server] (client) {Client};
  %
  \draw [arrow] ([xshift=-2cm]server.south) -- node[midway,above,rotate=\rot] {embedding} node[midway,below,rotate=\rot] {network $\eta_v$} ([xshift=-1cm]client.north);
  \draw [arrow] ([xshift=-.5cm]client.north) -- node[midway,above,rotate=-\rot] {client} node[midway,below,rotate=-\rot] {descriptor $v$} ([xshift=.5cm]server.south);
  \draw [arrow] ([xshift=1cm]server.south) -- node[midway,above,rotate=\rot] {personal} node[midway,below,rotate=\rot] {model $\theta$} ([xshift=2cm]client.north);
  \end{tikzpicture}
\caption{Communication protocol of \acronym-predict for generating personalized models. 
}\label{fig:communication}
\label{fig:communication_predictiontime} 
\end{figure}
\end{minipage}
} 

{\begin{minipage}{.65\textwidth}
\begin{algorithm}[H]\small
\caption{\texttt{\acronym-train}}\label{alg:train}
    \begin{algorithmic}[1]
    \INPUT number of training steps, $T$
    \INPUT number of clients to select per step, $c$
        \FOR{$t = 1,\dots, T$}
        \STATE $P \gets$ \emph{Server} randomly samples $c$ clients
        \label{algline:client_batch}
        \STATE \emph{Server} broadcasts embedding network $\eta_v$ to $P$
        \label{algline:server_broadcasts_embedding_network}
        \FOR{client $i \in P$ in parallel} 
        \STATE \emph{Client} selects a data batch $B \subseteq S_i$
        \label{algline:client_selects_subset}
        \STATE $v_i \gets \vnet(B;\eta_v)$, \emph{client} computes descriptor
        \label{algline:client_computes_descriptor}
        \STATE \emph{Client} sends $v_i$ to \emph{server}
        \label{algline:client_sends_descriptor}
        \STATE $\theta_i \gets \hnet(v_i;\eta_h)$ \emph{server} computes personalized model
        \label{algline:predict_personal_model}
        \STATE \emph{Server} sends $\theta_i$ to \emph{client} 
        \label{algline:server_sends_model}
        \STATE $\theta^{\text{new}}_i \gets$  client runs $k$ steps of local SGD on $f_i(\theta_i)$ \label{algline:client_local_sgd}
        \STATE \emph{Client} sends $\Delta \theta_i := \theta^{\text{new}}_i - \theta_i$   to \emph{server}
        \label{algline:client_sends_deltatheta}
        \STATE $(\Delta \eta_h^{(i)},\Delta v_i) \gets$ server runs \texttt{backprop} with error vector   $\Delta\theta_i$         \label{algline:backprop_etah}\!\!\!\!
        \STATE \emph{Server} sends $\Delta v_i$ to \emph{client}
        \label{algline:server_send_deltav}
        \STATE $\Delta \eta_v^{(i)} \gets$ client runs \texttt{backprop} with error vector  $\Delta v_i$  
        \label{algline:backprop_etav}
        \STATE \emph{Client} sends $\Delta \eta_v^{(i)}$ to \emph{server}
        \label{algline:client_sends_deltaetav}
        \ENDFOR
        \STATE $\eta_h \gets (1-2\beta\lambda_h)\eta_h + \frac{1}{|P|} \sum_{i\in P}\Delta \eta_h^{(i)}$
        \label{algline:update_etah}
        \STATE$\eta_v \gets (1-2\beta\lambda_v)\eta_v + \frac{1}{|P|} \sum_{i\in P}\Delta \eta_v^{(i)}$
        \label{algline:update_etav}
        \ENDFOR
    \OUTPUT network parameters $(\eta_h, \eta_v)$
    \end{algorithmic}
\end{algorithm}
\end{minipage}\quad
\begin{minipage}{.33\textwidth}
\begin{figure}[H]
\quad\begin{tikzpicture}[
  node distance=1cm, auto,
  every node/.style={scale=0.8,font=\sffamily, align=center, minimum width=2cm},
  arrow/.style={->, > = latex', bend angle=45, bend left}]
  %
  \node[draw, minimum size=1cm, text width=4.5cm, fill=gray!50, text=white] (model) {Model $\theta_i$\\(personalized, on client)};
  \node[draw, minimum size=2cm, text width=4.5cm, fill=gray!50, text=white, below=of model] (hypernetwork) {Hypernetwork $\eta_h$\\ (shared, on server)};
  \node[draw, minimum size=1cm, text width=4.5cm, fill=gray!50, text=white, below=of hypernetwork] (embedding) {Embedding network $\eta_v$\\(shared, on client)};
  %
  \draw [->, >={Triangle[width=6mm,length=3mm]}, line width=5mm, gray!20!white] ([xshift=-1cm]embedding.north) -- node[midway,above,rotate=-90,black] {$v_i$} ([xshift=-1cm]hypernetwork.south);
  \draw [->, >={Triangle[width=6mm,length=3mm]}, line width=5mm, gray!20!white] ([xshift=1cm]hypernetwork.south) -- node[midway,above,rotate=-90,black,xshift=-1mm] {$\Delta v_i$} ([xshift=1cm]embedding.north);

  \draw [->, >={Triangle[width=6mm,length=3mm]}, line width=5mm, gray!20!white] ([xshift=-1cm]hypernetwork.north) -- node[midway,above,rotate=-90,black] {$\theta_i$} ([xshift=-1cm]model.south);
  \draw [->, >={Triangle[width=6mm,length=3mm]}, line width=5mm, gray!20!white] ([xshift=1cm]model.south) -- node[midway,above,rotate=-90,black,xshift=-1mm] {$\Delta\theta_i$} ([xshift=1cm]hypernetwork.north);
  \end{tikzpicture}
\caption{Data flow for \acronym model generation (forward pass, left) and training (backward pass, right). The client descriptor, $v_i$, and the client model $\theta_i$ are small. Transmitting them and their update vectors is efficient. The hypernetwork, $\eta_h$, can be large, but it remains on the server.}
\label{fig:algorithm}
\end{figure}
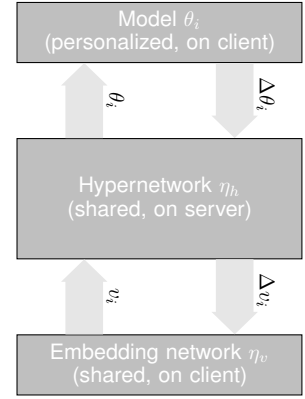
\end{minipage}
} 
\end{figure}

We start by describing the \texttt{\acronym-predict} 
routine (Algorithm~\ref{alg:predict}), which can 
predict a personalized model for any target client.
First, the server sends the current embedding network, $\eta_v$,
to the client (line~\ref{algline:server_send_embedding_network2}), 
which evaluates it on all or a subset of its data to compute the 
client descriptor (line~\ref{algline:client_computes_descriptor2}).
Next, the client sends its descriptor to the server (line~\ref{algline:client_sends_descriptor2}),
who evaluates the hypernetwork on it (line~\ref{algline:server_evaluates_hypernetwork}).
The resulting personalized model is sent back to the client
(line~\ref{algline:server_sends_model2}), where it is
ready for use.
Overall, only two server-to-client and one client-to-server 
communication steps are required before the client has 
obtained a functioning personalized model (see
Figure~\ref{fig:communication}).

The {\texttt{\acronym-train}} routine (Algorithm~\ref{alg:train}) 
mostly adopts a standard stochastic optimization pattern 
in a federated setting, with the exception that certain parts of the forward 
(and backward) pass take place on the clients and others take place on the server. 
Thus \acronym follows the Split Learning \citep{split_learning} paradigm in Federated Learning \citep{split_fed}.
In each iteration the server selects a batch of available clients (line \ref{algline:client_batch})
and broadcasts the embedding model, $\eta_v$, to all of them
(line \ref{algline:server_broadcasts_embedding_network}). 
Then, each client in parallel computes its descriptor, $v_i$,
(\ref{algline:client_computes_descriptor}), sends it to the server (line~\ref{algline:client_sends_descriptor}), and receives a 
personalized model from the server in return (line \ref{algline:predict_personal_model}, \ref{algline:server_sends_model}). 
At this point the forward pass is over and backpropagation starts.
To this end, each client performs local SGD for $k$
steps on its personalized model and personal data
(line \ref{algline:client_local_sgd}). 
It sends the resulting update, $\Delta \theta_i$, to the server (line \ref{algline:client_sends_deltatheta}), where 
it acts as a proxy for $\frac{\partial f_i}{\partial \theta_i}$.
According to the chain rule, $\frac{\partial f_i}{\partial \eta_h} = \frac{\partial f_i}{\partial \theta_i} \frac{\partial \theta_i}{\partial \eta_h}$ and $\frac{\partial f_i}{\partial v_i} = \frac{\partial f_i}{\partial \theta_i} \frac{\partial \theta_i}{\partial v_i}$.
The server can evaluate both expressions using backpropagation
(line \ref{algline:backprop_etah}), because all required expressions 
are available to it now.
It obtains updates $\Delta \eta_h^{(i)}$ and $\Delta v_i$, the latter of which it sends to the client (line~\ref{algline:server_send_deltav}) as a proxy for $\frac{\partial f_i}{\partial v_i}$.
Again based on the chain rule ($\frac{\partial f_i}{\partial \eta_v} = \frac{\partial f_i}{\partial v_i} \frac{\partial v_i}{\partial \eta_v}$), the client computes an update vector for the embedding network, $\Delta \eta_v^{(i)}$ (line~\ref{algline:backprop_etav}), and sends it back to the server (line~\ref{algline:client_sends_deltaetav}).
Finally, the server updates all network parameters from the 
average of per-client contributions as well as the 
contributions from the server objective 
(lines \ref{algline:update_etah}, \ref{algline:update_etav})
\footnote{Note that all of these operations are standard first-order derivatives and matrix multiplications. In contrast to some meta-learning algorithms, no second-order derivatives, such as gradients of gradients,
are required.}.

The above analysis shows that \acronym has a number of desirable properties:
\textbf{1) It distributes the necessary computation in a 
client-friendly way}.
The hypernetwork is stored on the server and evaluated 
only there. It is also never sent via the communication channel.
This is important, because hypernetworks can be quite
large, as already their output layer must have as 
many neurons as the total number of parameters of 
the client model. 
Therefore, they tend to have high memory and computational 
requirements, which client devices might not be able 
to provide. 

\textbf{2) It has low latency and communication cost.}
Generating a model for any client requires 
communicating only few and small quantities: 
a) the parameters of the embedding network, 
which are typically small, b) the client descriptors, 
which are low dimensional vectors, and c) the 
personalized models, which typically also can 
be kept small, because they only have to solve 
client-specific rather than general-purpose tasks.
For the backward pass in the training phase, three 
additional quantities need to be communicated: 
a) the clients' suggested 
parameter updates, which are of equal size as the 
model parameters, b) gradients with respect to the 
client descriptors, which are of equal size as the 
descriptors, and c) the embedding network's updates,
which are of the same size as the embedding network. 
All of these quantities are rather small compared 
to, \eg, the size of the hypernetwork, which 
\acronym avoids sending.

\textbf{3) It respects the federated paradigm.} 
Clients are not required to share their training data but 
only their descriptors and---during training---the gradient 
of the loss with respect to the model parameters. 
Both of these are computed locally on the client 
devices. 
Note that we do not aim for formal privacy guarantees 
in this work. These could, in principle, be provided 
by incorporating
\emph{multi-party computation}~
\citep{evans2018pragmatic} and/or
\emph{differential privacy}~\citep{dwork2014algorithmic}. 
Since a descriptor is an average of a batch of embeddings, this step can be made differentially private with some added noise using standard DP techniques such as Gaussian or Laplace mechanisms.
This, however, could add the need for additional trade-offs, so we leave the analysis to future work. 

\vspace{-\densesubsection}
\subsection{Convergence Guarantees}\label{subsec:convergence}
\vspace{-\densesubsection}
We now establish the convergence of \acronym's training procedure. 
Specifically, we give guarantees in the form of bounding 
the expected average gradient norm, as is common for deep 
stochastic optimization algorithms.
The proof and full formulation can be found in Appendix~\ref{app:optimization}.

\begin{theorem}\label{thm:convergence}
Under standard smoothness and boundedness assumptions (see appendix),  
\acronym's optimization after $T$ steps fulfills 
    \begin{align}
        \frac{1}{T} \sum_{t=1}^{T} \E \| \nabla F(\eta_t)\|^2 &\leq \frac{(F(\eta_0) \!-\! F_{*})}{\sqrt{cT}} \!+\! \frac{L(6\sigma_1^2 \!+\! 4k\gamma_G^2)}{k\sqrt{cT}} \!+\! \frac{224 c L_1^2  b_1^2 b_2^2}{T} \!+\! \frac{8b_1^2\sigma_2^2}{b} \!+\!  \frac{14 L_1^2 b_2^2 \sigma_3^2}{b},
    \label{eq:convergence_rate}
    \end{align}
    where $F$ is the \acronym objective \eqref{eq:objective} with lower bound $F_{*}$.
    $\eta_0$ are the parameter values at initialization, $\eta_1,\dots,\eta_T$ 
    are the intermediate parameter values.
    $L, L_1$ are smoothness parameters of $F$ and the local models.
    $b_1,b_2$ are bounds on the norms of the gradients of the local model and the hypernetwork, respectively.
    $\sigma_1$ is a bound on the variance of stochastic gradients of local models, and $\sigma_2, \sigma_3$ are bounds on the variance due to the clients generating models with data batches of size $b$ instead of their whole training set.
    $\gamma_G$ is a bound on the dissimilarity of clients, $c$ is the number of clients participating at each round, and $k$ is the number of local SGD steps performed by the clients. 
\end{theorem}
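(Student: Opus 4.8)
The plan is to follow the standard template for nonconvex federated optimization --- a descent lemma applied to the smooth objective $F$ of \eqref{eq:objective}, followed by telescoping --- but with two extra layers of bookkeeping that are specific to \acronym: the inner loop of $k$ local SGD steps that produces the proxy gradient $\Delta\theta_i$, and the fact that the descriptors used in Algorithm~\ref{alg:train} are computed from a mini-batch $B$ of size $b$ rather than from the full set $S_i$ that appears in $F$.

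First I would write the server update at round $t$ as $\eta_{t+1}=\eta_t+\beta g_t$, where $g_t$ collects the averaged per-client contributions $\frac{1}{c}\sum_{i\in P_t}(\Delta\eta_h^{(i)},\Delta\eta_v^{(i)})$ together with the explicit regularizer decay from lines~\ref{algline:update_etah}--\ref{algline:update_etav}, so that $g_t$ is a biased, noisy stand-in for $-\nabla F(\eta_t)$. Using $L$-smoothness of $F$ (which follows from the assumed smoothness and boundedness of $\ell$, $h$ and $\vnet$), the descent lemma gives $\E F(\eta_{t+1})\le \E F(\eta_t)+\beta\,\E\langle\nabla F(\eta_t),g_t\rangle+\tfrac{L\beta^2}{2}\E\|g_t\|^2$. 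The core of the proof is then to lower bound $-\E\langle\nabla F(\eta_t),g_t\rangle$ by a positive multiple of $\E\|\nabla F(\eta_t)\|^2$ minus the bias contributions, and to upper bound $\E\|g_t\|^2$ by $b_1^2b_2^2$-type quantities plus variance, using the gradient bounds $b_1,b_2$ on the local model and the hypernetwork Jacobian.

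For the bias/variance control I would decompose $g_t+\nabla F(\eta_t)$ into four pieces and bound each separately. (i) Client-sampling noise is mean zero because $P_t$ is a uniform sample, contributing a $1/c$ variance reduction --- this is the origin of the $\sqrt{cT}$ scaling in the first two terms. (ii) The stochastic-gradient noise of the local SGD steps is also mean zero and contributes the $\sigma_1^2$ term. (iii) The local-update drift: $\Delta\theta_i$ equals $-\alpha$ times a sum of $k$ stochastic gradients taken \emph{along} the local trajectory rather than at $\theta_i$; bounding the trajectory deviation by the usual local-SGD argument (via $L_1$-smoothness of the per-client objective, the gradient bound $b_1$, and the dissimilarity bound $\gamma_G$) and then propagating it through the hypernetwork and embedding Jacobians (whose norms are controlled by $b_2$) produces the $k\gamma_G^2$ term and, from the $\beta^2$ contribution, the $cL_1^2b_1^2b_2^2/T$ term. (iv) The descriptor mini-batch bias: since $\vnet(B;\eta_v)$ differs from $\vnet(S_i;\eta_v)$, the algorithm effectively descends the gradient of a perturbed objective; a bias--variance estimate on the batch average, with variances $\sigma_2^2,\sigma_3^2$, combined with the gradient bounds, yields the two $1/b$ terms, which --- unlike the others --- do not vanish as $T\to\infty$.

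Finally I would chain the per-round inequality, rearrange to isolate $\E\|\nabla F(\eta_t)\|^2$, sum over $t=1,\dots,T$, telescope the $F(\eta_t)$ terms into $F(\eta_0)-F_*$, divide by $T$, and choose the step sizes --- $\beta$ of order $1/\sqrt{cT}$ and the local step size $\alpha$ of order $1/(kL_1)$ --- so that the $\beta^{-1}$, $\beta$ and $\beta^2$ contributions balance into the claimed rate \eqref{eq:convergence_rate}. I expect the main obstacle to be steps (iii) and (iv) taken together: correctly propagating the biased, $k$-step estimate $\Delta\theta_i$ back through the two Jacobians $\partial\theta_i/\partial\eta_h$ and $(\partial\theta_i/\partial v_i)(\partial v_i/\partial\eta_v)$ while simultaneously tracking the descriptor-batch mismatch --- that is, keeping straight which ``gradient'' (that of $F$ versus that of its batch-descriptor surrogate) each error term is being compared against, and making sure the many cross terms created by repeated use of $\|a+b\|^2\le 2\|a\|^2+2\|b\|^2$ and Young's inequality collapse to exactly the five terms on the right-hand side. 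The client-sampling and stochastic-gradient parts, in contrast, are routine.
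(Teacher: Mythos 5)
Your plan matches the paper's proof in all essentials: an $L$-smoothness descent lemma for the composite objective, a lower bound on $-\E\langle\nabla F(\eta_t),\eta_{t+1}-\eta_t\rangle$ via Young's inequality in terms of $\E\|\nabla F(\eta_t)\|^2$ minus per-client estimation error, an upper bound on $\E\|\eta_{t+1}-\eta_t\|^2$ decomposed into client-sampling noise, local-SGD noise $\sigma_1^2$, the $k$-step drift propagated through the gradient/Jacobian bounds $b_1,b_2$, and the two descriptor mini-batch terms $\sigma_2^2,\sigma_3^2$, followed by telescoping with $\beta=\Theta(\sqrt{c}/(k\sqrt{T}))$. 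The only cosmetic differences are that in the paper the $k\gamma_G^2$ term arises from the client-subsampling variance of the heterogeneous gradients $\nabla F_i$ under the bounded-dissimilarity assumption rather than from the local-drift recursion (the drift is bounded crudely by $l^2(b_1^2+\sigma_1^2)$ using the gradient bound alone), and a single step size $\beta$ is used for both the local SGD steps and the global aggregation rather than a separate $\alpha\sim 1/(kL_1)$.
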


The proof resembles
convergence proofs for FedAvg with non-\iid clients, 
but differs in three aspects that we believe makes
our result of independent interest: 1) due to the non-linearity 
of the hypernetwork, gradients computed from batches might
not be unbiased; 2) the updates to the network parameters 
are not simple averages, but consist of multiple gradient steps 
on the clients, which the server processes further using 
the chain rule;
3) the objective includes regularization terms that only the 
server can compute.

\paragraph{Discussion} 
Theorem \ref{thm:convergence} characterizes the convergence rate of \acronym's optimization step in terms of the number of iterations, $T$, and some problem-specific constants. 
For illustration, we first discuss the case where the client descriptors are computed from the complete client dataset ($B=S_i$ in Algorithm~\ref{alg:predict}, line 
\ref{algline:client_selects_subset}).
In that case, $\sigma_2$ and $\sigma_3$ vanish, such that only three terms remain in \eqref{eq:convergence_rate}. 
The first two are of order $\frac{1}{\sqrt{T}}$, while the third one is of order $\frac{1}{T}$. 
For sufficiently large $T$, the first two terms dominate, resulting in the same order of convergence as FedAvg~\citep{karimireddy2020scaffold}. 
If clients compute their descriptors from batches ($B\subsetneq S_i$),
two additional variance terms emerge in the bound,
which depend
on the size of the batches used by the clients to compute 
their descriptors. 
It is always possible to control these terms, though: 
for large $S_i$, one can choose $B$ sufficiently 
large to make the additional terms as small as desired, 
and for small $S_i$, setting $B=S_i$ is practical, which 
will make the additional terms disappear completely, 
see above. 

\vspace{-\densesubsection}
\subsection{Generalization Guarantees}\label{sec:generalization}
\vspace{-\densesubsection}
In this section, we prove a generalization bound that
justifies \acronym's learning objective. 
Following prior work in learning-to-learn theory 
we adopt a PAC-Bayesian framework.
For this, we assume that the embedding network, 
the hypernetwork and the personalized models are
all learned stochastically. 
Specifically, the result of training the hypernetwork and 
embedding network are Gaussian probability distributions 
over their parameters, $\Q_h = \mathcal{N}(\eta_h;\alpha_h\text{Id})$, 
and $\Q_v = \mathcal{N}(\eta_v;\alpha_v\text{Id})$, for
some fixed $\alpha_h,\alpha_v>0$. 
The mean vectors $\eta_h$ and $\eta_v$ are the learnable parameters, 
and $\text{Id}$ is the identity matrix of the respective dimensions.
The (stochastic) prediction model for a client with 
dataset $S$ is obtained by sampling from the Gaussian 
distribution $Q = \mathcal{N}(\theta;\alpha_\theta\text{Id})$
for $\theta=\hnet(\vnet(S;\eta_v);\eta_h)$ and some $\alpha_\theta>0$. 
We formalize clients as tuples $(D_i,S_i)$, where $D_i$ is their
data distribution, and $S_i\stackrel{\iid}{\sim} D_i$ is training data.
For simplicity, we assume that all data sets are of identical size, $m$. 
We assume that both observed clients and new clients 
are sampled from a \emph{client environment}, $\T$~\citep{baxter2000model},
that is, a probability distribution over clients.
Then, we can prove the following guarantee of generalization from observed 
to future clients. 
For the proof and a more general form of the result, see  Appendix~\ref{app:generalization}.

\begin{theorem}\label{thm:pacbayes} 
For all $\delta>0$ the following statement holds 
with probability at least $1-\delta$ over the randomness of the clients. 
For all parameter vectors, $\eta=(\eta_h,\eta_v)$:
\begin{align}
    &\E_{(D,S)\sim\T} \E_{(x,y)\sim D} \E_{\substack{\bar\eta_h\sim\Q_h\\\bar\eta_v\sim \Q_v}}\!\ell\big(x,y,\hnet(\vnet(S;\bar\eta_v);\bar\eta_h)\big)
    \leq 
    \frac{1}{n}\sum_{i=1}^n\frac{1}{m}\!\!\!\sum_{(x,y)\in S_i} \E_{\substack{\bar\eta_h\sim\Q_h\\\bar\eta_v\sim \Q_v}} \ell\big(x,y,\hnet(\vnet(S_i;\bar\eta_v);\bar\eta_h)\big) \nonumber
    \\
    &+\!\sqrt{ \frac{\frac{1}{2\alpha_h}\|\eta_h\|^2 \!+\! \frac{1}{2\alpha_v}\|\eta_v\|^2 \!+\! \log(\frac{4\sqrt{n}}{\delta})}{2n}}
  \!+\!\!\!\!\E_{\substack{\bar\eta_h\sim\Q_h\\\bar\eta_v\sim \Q_v}}\!\!\sqrt{\frac{\frac{1}{2\alpha_\theta}\!\sum_{i=1}^{n}\!\|\hnet(\vnet(S_i;\bar\eta_v);\bar\eta_h)\|^2 \!+\! \log(\frac{8mn}{\delta}) \!+\! 1}{2mn}}\!\!\!\!  \label{eq:pacbayes}
\end{align}
\end{theorem}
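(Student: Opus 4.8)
The plan is to read \eqref{eq:pacbayes} as a \emph{two-level PAC-Bayesian} bound and to derive it by applying a standard PAC-Bayes inequality for bounded losses (e.g.\ Maurer's bound in its square-root form) twice: once over the client environment $\T$, and once over the labelled data of the observed clients. The two Gaussian families in the setup fix the natural choices. At the \emph{environment level} I take the hyper-prior $\mathcal{P}=\mathcal{N}(0,\alpha_h\text{Id})\otimes\mathcal{N}(0,\alpha_v\text{Id})$ and the hyper-posterior $\mathcal{Q}=\Q_h\otimes\Q_v=\mathcal{N}(\eta_h,\alpha_h\text{Id})\otimes\mathcal{N}(\eta_v,\alpha_v\text{Id})$, for which $\KL(\mathcal{Q}\|\mathcal{P})=\tfrac{1}{2\alpha_h}\|\eta_h\|^2+\tfrac{1}{2\alpha_v}\|\eta_v\|^2$. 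At the \emph{data level}, for a fixed draw $\bar\eta=(\bar\eta_h,\bar\eta_v)$ the $i$-th client receives the base posterior $Q_i=\mathcal{N}(\theta_i,\alpha_\theta\text{Id})$ with $\theta_i=\hnet(\vnet(S_i;\bar\eta_v);\bar\eta_h)$, and all clients share the data-independent base prior $\mathcal{N}(0,\alpha_\theta\text{Id})$, so that $\sum_{i}\KL\big(Q_i\,\|\,\mathcal{N}(0,\alpha_\theta\text{Id})\big)=\tfrac{1}{2\alpha_\theta}\sum_i\|\theta_i\|^2$ --- exactly the data-dependent quantity in the last term of \eqref{eq:pacbayes}.

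I would carry this out as follows. For the \emph{environment level}, regard the observed clients $(D_i,S_i)$ as $n$ i.i.d.\ draws from $\T$ and, for a parameter draw $\bar\eta$, let its loss on a client $(D,S)$ be the (bounded) per-client risk $\E_{(x,y)\sim D}\E_{\bar\theta\sim\mathcal{N}(\hnet(\vnet(S;\bar\eta_v);\bar\eta_h),\alpha_\theta\text{Id})}\ell(x,y,\bar\theta)$; Maurer's bound with $\mathcal{P},\mathcal{Q}$ and confidence $1-\delta/2$ then bounds $\E_{(D,S)\sim\T}\E_{\bar\eta\sim\mathcal{Q}}[\cdot]$ by $\tfrac1n\sum_i\E_{\bar\eta\sim\mathcal{Q}}[\cdot]$ plus the first square-root term of \eqref{eq:pacbayes} (the $\log(4\sqrt n/\delta)$ being $\log(2\sqrt n/(\delta/2))$). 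For the \emph{data level}, I condition on $D_1,\dots,D_n$, so that the $mn$ points $\{(x,y)\in S_i\}_i$ are independent, and apply a \emph{single} PAC-Bayes bound over all $mn$ of them jointly, using the product posterior $\bigotimes_iQ_i$ and product prior $\mathcal{N}(0,\alpha_\theta\text{Id})^{\otimes n}$ (a point in $S_i$ being scored by the $i$-th coordinate). Since PAC-Bayes holds uniformly over all base posteriors for a fixed sample, the resulting inequality holds simultaneously for every $\bar\eta$, hence can be integrated against $\bar\eta\sim\mathcal{Q}$; this bounds $\tfrac1n\sum_i\E_{\bar\eta\sim\mathcal{Q}}\E_{(x,y)\sim D_i}\E_{\bar\theta\sim Q_i}\ell$ by $\tfrac1{mn}\sum_i\sum_{(x,y)\in S_i}\E_{\bar\eta\sim\mathcal{Q}}\E_{\bar\theta\sim Q_i}\ell$ plus the last square-root term. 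Because this data-level bound is distribution-free in the $D_i$, it survives taking the expectation over them, so both events hold unconditionally; a union bound ($\delta/2+\delta/2$) and chaining the two inequalities --- the RHS of the environment-level bound is precisely the LHS of the integrated data-level bound --- yield \eqref{eq:pacbayes}. The deterministic-model form in the statement is the corresponding reading for the predictive Gaussian introduced above, with the fully stochastic version proved in Appendix~\ref{app:generalization}.

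Two things need care. The decisive one --- and the source of the improvement over federated-agnostic meta-learning bounds --- is performing the data-level step as one PAC-Bayes bound over all $mn$ points rather than $n$ per-client bounds: the per-client route would pay a $\log(n/\delta)$ in each of the $n$ union-bound pieces and, after pulling the $\sqrt{\cdot}$'s inside via concavity, leave an $n\,\log(\cdot)$ inside the numerator, whereas the joint bound keeps a single $\log(mn/\delta)$ and divides $\sum_i\|\theta_i\|^2$ by $2mn$. The second is the two-level composition: the data-level inequality must be set up as a ``for all base posteriors'' statement at a fixed sample, so that it may legitimately be averaged against the \emph{data-dependent} hyper-posterior $\mathcal{Q}$ --- and the nonlinearity of $\hnet$, which makes the induced family $\{Q_i^{\bar\eta}\}_{\bar\eta}$ irregular, is then harmless precisely because of that uniformity. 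I expect the main obstacle to be (i) justifying the joint PAC-Bayes inequality for data that are only conditionally independent given $(D_1,\dots,D_n)$ and not identically distributed --- most cleanly by conditioning and invoking a PAC-Bayes bound for bounded, independent, non-identically-distributed summands, whose conclusion is distribution-free --- and (ii) the bookkeeping of constants and the split of $\delta$ so the two logarithmic terms come out exactly as written.
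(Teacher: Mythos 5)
Your proposal matches the paper's proof essentially step for step: the paper likewise splits the gap into an environment-level part $\er(\Q)-\ter(\Q)$ (bounded by a standard PAC-Bayes/Maurer inequality over the $n$ i.i.d.\ clients, yielding the $\KL(\Q\|\Pc)$ term with $\log(4\sqrt{n}/\delta)$) and a data-level part $\ter(\Q)-\her(\Q)$ (bounded by a single change-of-measure argument over all $mn$ points with product posterior $Q_1\times\cdots\times Q_n$ against product prior $P\times\cdots\times P$, proved uniformly over all base posteriors so it can be averaged against the data-dependent hyper-posterior, with the $+1$ and $\log(8mn)$ coming from a union bound over a grid of the free parameter $\lambda$), then combines the two by a $\delta/2+\delta/2$ union bound and instantiates the stated Gaussians. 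The point you single out as decisive --- avoiding a $\KL(\Q\|\Pc)/m$ term by exploiting uniformity in the $Q_i$ rather than performing the $\Q\to\Pc$ change of measure at the data level --- is exactly the departure from prior work that the paper itself highlights in its post-proof discussion.
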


\paragraph{Discussion}
Theorem~\ref{thm:pacbayes} states that the expected loss of the learned 
models on future clients (which is the actual quantity of interest) can be controlled 
by the empirical loss on the observed clients' data (which we can compute) plus 
two terms that resemble regularizers. The first term penalizes extreme values in 
the parameters of the hypernetwork and the embedding network. 
Thereby, it prevents overfitting for the part of the learning process 
that accumulates information across clients. 
The second term penalizes extreme values in the output of the hypernetwork,
which are the parameters of the per-client models. By this, it prevents 
overfitting on each client.
Because the guarantee holds uniformly over all parameter vectors, 
we can optimize the right hand side with respect to $\eta$ 
and the guarantee will still be fulfilled for the minimizer. 
The \acronym objective is modeled after the right hand side of 
\eqref{eq:pacbayes} to mirror this optimization in simplified form: 
we drop constant terms and use just the mean vectors of the network 
parameters instead of sampling them stochastically. Also, we drop the 
square roots from the regularization terms to make them numerically 
better behaved.

\paragraph{Relation to previous work}
A generalization bound of similar form as the one underlying Theorem~\ref{thm:pacbayes} 
appeared in~\cite[Theorem 5.2]{rezazadeh2022unified}. 
That bound, however, is not well suited to the federated setting.
First, it contains a term of order $\frac{(n+m)\,\log m\sqrt{n}}{nm}$,
which is not necessarily small for large $n$ (number of clients) 
but small $m$ (samples per client). 
In contrast, the corresponding terms in our bound, $\frac{\log\sqrt{n}}{n}$ 
and $\frac{\log nm}{nm}$, are both small in this regime.
Second, when instantiating the bound from~\citet{rezazadeh2022unified} an 
additional term of order $\frac{1}{m}\|\eta\|$ would appear, which can be 
large in the case where the dimensionality of the network parameters is 
large but $m$ is small. 

\vspace{-\densesection}
\section{Experiments}\label{sec:experiments}
\vspace{-\densesection}
In this section we report on our experimental evaluation\footnote{Our code can be found at \url{https://github.com/jonnyascott/PeFLL}}.
The values reported in every table and plot are given as the mean together with the standard deviation 
across three random seeds. 

\paragraph{Datasets} 
For our experiments, we use three datasets that are standard benchmarks for FL:
CIFAR10/CIFAR100~\citep{cifar} and FEMNIST~\citep{leaf}.
Following prior pFL works, for CIFAR10 and CIFAR100 we form statistically 
heterogeneous clients by randomly assigning a fixed fraction of the total 
number of $C$ classes to each of $n$ clients, for $n \in\{100,500,1000\}$. 
The clients then receive test and train samples from only these classes. 
For \mbox{CIFAR10} each client has $2$ of the $C=10$ classes and for CIFAR100 each client has $10$ of the $C=100$ classes. 
FEMNIST is a federated dataset for handwritten character recognition, with $C=62$ classes (digits and lower/upper case letters) and 817,851 samples. We keep its predefined partition into 3597 clients based on writer identity.
We randomly partition the clients into 90\% seen and 10\% unseen. 
Seen clients are used for training while unseen clients do not contribute to the initial 
training and are only used to later assess each method's performance on new clients as described below.
Additional experiments on the Shakespeare dataset ~\citep{leaf} are provided in Appendix \ref{app:experiments}.
 
\paragraph{Baselines}
We evaluate and report results for the following pFL methods, for which we build on the~\emph{FL-Bench} repository~\footnote{\url{https://github.com/KarhouTam/FL-bench}}:
Per-FedAvg~\citep{perfedavg}, which optimizes the MAML~\citep{MAML} objective in a federated setting;
FedRep~\citep{fedrep}, which trains a global feature extractor and per-client classifier heads;
pFedMe~\citep{pfedme}, which trains a personal model per client using a regularization term to penalize differences from a global model;
kNN-Per~\citep{perfedknn}, which trains a single global model which each client uses individually to extract features of their data for use in a $k$-nearest-neighbor-based classifier;
pFedHN~\citep{pfedHN} which jointly trains a hypernetwork and per client embedding vectors to output a personalized model for each client. 
For reference, we also include results of (non-personalized) FedAvg~\citep{fedavg} and of training a local model separately on each client.

\paragraph{Constructing models for unseen clients} We are interested in the performance of \acronym not just on the training clients but also on clients not seen at training time. As described in Algorithm \ref{alg:predict} inference on new clients is simple and efficient for unseen clients as it does not require any model training, by either the client or the server. With the exception of kNN-Per all other methods require some form of finetuning in order to obtain personalized models for new clients. Per-FedAvg and pFedMe obtain personal models by finetuning the global model locally at each client for some small number of gradient steps. FedRep freezes the global feature extractor and optimizes a randomly initialized head locally at each new client. pFedHN freezes the trained hypernetwork and optimizes a new embedding vector for each new client, which requires not just local training but also several communication rounds with the server. The most efficient baseline for inference on a new client is kNN-Per, which requires only a single forward pass through the trained global model and the evaluation of a $k$-nearest-neighbor-based predictor.

\begin{table}[t]
\centering\nprounddigits{1}
    \begin{subtable}[b]{\textwidth}
        \centering
    \scalebox{.85}{\begin{tabular}{ |c|ccc|ccc|c| }
 \hline
 &\multicolumn{3}{c|}{CIFAR10} & \multicolumn{3}{c|}{CIFAR100} & FEMNIST \\
 \hline
\#trn.clients & $n=90$ & $450$ & $900$ & $90$ & $450$ & $900$ & 3237 \\ 
\hline
 Local & $\np{82.24} \pm \np{0.63}$ & $\np{70.87} \pm \np{0.50}$ & $\np{65.51} \pm \np{0.69}$ & $\np{39.40} \pm \np{0.17}$ & $\np{19.66} \pm \np{0.21}$ & $\np{11.34} \pm \np{0.09}$ & $\np{62.18} \pm \np{0.10}$\\
  \hline
 FedAvg & $\np{47.53} \pm \np{0.48}$ & $\np{50.39} \pm \np{0.62}$ & $\np{51.86} \pm \np{0.67}$ & $\np{16.44} \pm \np{0.37}$ & $\np{20.19} \pm \np{0.09}$ & $\np{20.22} \pm \np{0.22}$ & $\np{82.05} \pm \np{0.20}$ \\
 \hline
 Per-FedAvg & $\np{79.09} \pm \np{2.07}$ & $\np{76.90} \pm \np{0.92}$ & $\np{76.60} \pm \np{0.31}$ & $\np{40.01} \pm \np{0.34}$ & $\np{31.48} \pm \np{0.21}$ & $\np{20.52} \pm \np{0.12}$ & $\np{82.73} \pm \np{0.94}$ \\
 FedRep & $\np{84.55} \pm \np{0.78}$ & $\np{79.23} \pm \np{0.63}$ & $\np{77.30} \pm \np{0.26}$ & $\np{42.67} \pm \np{0.80}$ & $\np{35.48} \pm \np{0.74}$ & $\np{30.79} \pm \np{0.47}$ & $\np{83.63} \pm \np{0.78}$\\
 pFedMe & $\np{83.89} \pm \np{1.16}$  & $\np{79.02} \pm \np{1.98}$ & $\np{80.39} \pm \np{0.86}$ & $\np{39.61} \pm \np{0.54}$ & $\np{40.48} \pm \np{0.41}$ & $\np{34.64} \pm \np{0.21}$ & $\np{85.88} \pm \np{0.82}$\\
 kNN-Per & $\np{84.02} \pm \np{0.27}$ & $\np{81.96} \pm \np{0.52}$ & $\np{79.87} \pm \np{0.72}$ & $\np{42.20} \pm \np{0.91}$ & $\np{38.06} \pm \np{0.41}$ & $\np{34.50} \pm \np{0.52}$ & $\np{85.17} \pm \np{0.25}$\\
 pFedHN & $\np{87.76} \pm \np{0.51}$ & $\np{77.43} \pm \np{1.27}$ & $\np{65.99} \pm \np{1.13}$ & $\np{53.60} \pm \np{0.17}$ & $\np{25.52} \pm \np{0.31}$ &  $\np{19.95} \pm \np{1.19}$ & $\np{83.81} \pm \np{0.29}$ \\
 \hline
 \acronym & ${\color{black}\mathbf{\np{88.96} \pm \np{0.84}}}$ & ${\color{black}\mathbf{\np{88.88} \pm \np{0.38}}}$ & ${\color{black}\mathbf{\np{87.78} \pm \np{0.41}}}$ & ${\color{black}\mathbf{56.0 \pm 0.3}}$ & ${\color{black}\mathbf{43.2 \pm 0.8}}$ & ${\color{black}\mathbf{40.9 \pm 0.6}}$ &  $\mathbf{\np{90.12} \pm \np{0.05}}$ \\
 \hline
\end{tabular}}
        \caption{Accuracy on clients observed at training time (best values per setup in bold)}
    \label{subtab:results_trainclients}
    \end{subtable}
    \begin{subtable}[b]{\textwidth}
        \centering
    \scalebox{.85}{
\begin{tabular}{ |c|ccc|ccc|c| } \hline
 &\multicolumn{3}{c|}{CIFAR10} & \multicolumn{3}{c|}{CIFAR100} & FEMNIST \\
 \hline
\#trn.clients & $n=90$ & $450$ & $900$ & $90$ & $450$ & $900$ & 3237 \\ 
\hline
 \color{black}Local & \color{black}$81.6 \pm 0.7$ & \color{black}$70.6 \pm 1.8$ & \color{black}$65.5 \pm 0.5$ & \color{black}$38.7 \pm 0.9$ & \color{black}$19.9 \pm 1.1$ & \color{black}$11.0 \pm 0.5$ & \color{black}$62.1 \pm 0.2$ \\
 \hline \color{black}
 FedAvg & $\np{48.60} \pm \np{0.28}$ & $\np{49.57} \pm \np{1.03}$ & $\np{51.70} \pm \np{0.71}$ & $\np{17.07} \pm \np{0.86}$ & $\np{19.30} \pm \np{1.99}$ & $\np{20.53} \pm \np{0.97}$ & $\np{81.85} \pm \np{0.36}$ \\
 \hline
Per-FedAvg & $\np{77.60} \pm \np{1.90}$ &$\np{69.03} \pm \np{2.53}$ & $\np{67.25} \pm \np{0.75}$ & $\np{35.60} \pm \np{1.50}$
 & $\np{30.60} \pm \np{0.40}$
 & $\np{20.50} \pm \np{0.50}$ & $\np{81.13} \pm \np{1.50}$\\
 FedRep & $\np{81.75} \pm \np{1.01}$ & $\np{77.40} \pm \np{1.69}$ & $\np{76.03} \pm \np{0.34}$ & $\np{38.63} \pm \np{1.12}$ & $\np{35.93} \pm \np{1.51}$ & $\np{31.83} \pm \np{1.59}$ & $\np{82.83} \pm \np{0.69}$ \\
 pFedMe & $\np{78.40} \pm \np{2.24}$  & $\np{74.30} \pm \np{1.35}$ &  $\np{72.30} \pm \np{1.70}$ & $\np{31.10} \pm \np{1.37}$ & $\np{32.30} \pm \np{0.50}$ & $\np{28.90} \pm \np{0.22}$ & $\np{86.13} \pm \np{0.35}$\\
 kNN-Per & $\np{82.43} \pm \np{0.73}$ & $\np{80.83} \pm \np{1.51}$ & $\np{78.40} \pm \np{1.13}$ & $\mathbf{\np{41.77} \pm \np{1.23}}$ & $\np{37.20} \pm \np{1.95}$ & $\np{33.73} \pm \np{0.73}$ & $\np{84.55} \pm \np{0.64}$\\
 pFedHN & $\np{63.33} \pm \np{3.67}$ & $\np{60.87} \pm \np{2.37}$
 & $\np{57.83} \pm \np{2.04}$ & $\np{24.10} \pm \np{1.20}$
 & $\np{21.50} \pm \np{1.42}$ & $\np{20.83} \pm \np{1.18}$ & $\np{82.52} \pm \np{0.13}$ \\
 \hline
  \acronym & ${\color{black}\mathbf{\np{89.30} \pm \np{2.16}}}$ & ${\color{black}\mathbf{\np{88.93} \pm \np{0.62}}}$ & ${\color{black}\mathbf{\np{88.53} \pm \np{0.25}}}$ & ${\color{black}\np{35.17} \pm \np{1.43}}$ & ${\color{black}\mathbf{\np{38.10} \pm \np{0.41}}}$ & ${\color{black}\mathbf{\np{38.37} \pm \np{0.90}}}$ &  $\mathbf{\np{90.68} \pm \np{0.21}}$ \\
 \hline
\end{tabular}
}
        \caption{Accuracy on clients not observed at training time (best values per setup in bold)}
    \label{subtab:results_testclients}
    \end{subtable}
\caption{Experimental results on standard pFL benchmarks. 
In all settings, except CIFAR100 with 100 clients, \acronym achieves
higher accuracy than previous methods, and the accuracy difference between training clients (top table) and unseen clients (bottom table) is small, if present at all.}
\label{tab:results}
\end{table}

\paragraph{Models}
Following prior works in pFL the personalized model used by each client is a LeNet-style model~\citep{lenet} with two convolutional layers and three fully connected layers. For fair comparison we use this model for \acronym as well as all reported baselines.
\acronym uses an embedding network and a hypernetwork to generate this personalized client model. For our experiments the hypernetwork is a three layer fully connected network which takes as input a client descriptor vector, $v \in\R^l$, and outputs the parameters of the client model, $\theta \in \R^d$. For further details, see 
Appendix~\ref{app:experiments}.
Note that the final layer of the client model predicts across all classes in the dataset.
For the embedding network we tested two options, a single linear projection which takes as input a one-hot encoded label vector, and a LeNet-type ConvNet with the same architecture as the client models except that its input is extended by $C$ extra channels that encode the label in one-hot form. 
The choice of such small models for our embedding network is consistent with the fact that the embedding network must be transmitted to the client. We find that for CIFAR10 and FEMNIST the ConvNet embedding network produces the best results while for CIFAR100 the linear embedding is best, and these are the results we report.

\paragraph{Training Details}
We train all methods, except Local, for 5000 rounds with partial client participation. 
For CIFAR10 and CIFAR100 client participation is set to $5\%$ per round. 
For FEMNIST we fix the number of clients participating per round to 5. 
The Local baseline trains on each client independently for 200 epochs. 
The hyperparameters for all methods are tuned using validation data that was 
held out from the training set (10,000 samples for CIFAR10 and CIFAR100, spread across the clients, and 10\% of each client's data for FEMNIST).
The optimizer used for training at the client is SGD with a batch size of 32, a learning rate chosen via grid search and momentum set to 0.9. The batch size used for computing the descriptor is also 32.
More details of the hyperparameter selection for each method as well as ablation studies are provided in  Appendix~\ref{app:experiments}.

\vspace{-\densesubsection}
\subsection{Results} 
\vspace{-\densesubsection}
Table~\ref{tab:results} shows the results for \acronym and the baseline
methods.
In all cases, we report the test set accuracy on the clients that were used for 
training (Table~\ref{subtab:results_trainclients}) and on new clients that were 
not part of the training process (Table~\ref{subtab:results_testclients}). 
\textbf{\acronym achieves the best results in almost all cases,
and often by a large margin.}
The improvements over previous methods are most prominent for previously unseen clients, for which \acronym produces model of almost identical
accuracy as for the clients used for training.
This is especially remarkable in light of the fact that several of the other 
methods have computationally more expensive procedures for generating models 
in this setting than \acronym, in particular requiring on-client or even 
federated training to produce the personalized models. 
We see this result as a strong evidence that \acronym 
successful generalizes, as predicted by Theorem~\ref{thm:pacbayes}. 

Comparing \acronym's results to the most similar baseline, pFedHN, 
one observes that the latter's performance decreases noticeably 
when the number of clients increases and the number of samples 
per client decrease accordingly. 
We attribute this to the fact that pFedHN learns independent client 
descriptors for each client, which can become unreliable if only
few training examples are available for each client. 
Similarly, for Per-FedAvg, pFedMe and FedRep, which construct personalized models 
by local finetuning, the model accuracy drops when the amount
of data per client decreases, especially in the more challenging
CIFAR100 setup. kNN-Per maintains good generalization from train to unseen clients, but its performance worsens when the number of samples per client drops and the kNN predictors have less client data available.
In contrast, \acronym's performance remains stable, which
we attribute to the fact that it learns the embedding 
 and hypernetwork jointly from all available data. 
 The new client does not have to use its data for training, but 
 rather just to generate a client descriptor. As a pure feed-forward process, this does not suffer from overfitting in the low-data regime.

\paragraph{Personalization for clients with only unlabeled data}
A scenario of particular interest is when some clients have 
only unlabeled data available. 
\acronym can still provide personalized models for them in the 
following way. To train \acronym, as before one uses only 
clients which do have labeled data. However, the embedding 
network is constructed to take only the unlabeled parts of 
the client data as input (\ie it ignores the labels). 
Consequently, even clients that only have unlabeled data can
evaluate the embedding network and obtain descriptors, so 
the hypernetwork can assign personalized models to them
during the test phase.
We report results for this in Table~\ref{tab:unlabeled}.
Note that none of the personalized baseline methods from 
Section~\ref{sec:experiments} are able to handle this setting, 
because they train or finetune on the test clients. 
Therefore, the only alternative would be to train a non-personalized 
model, \eg from FedAvg, and use this for clients with only labeled 
data. One can see that \acronym improves the accuracy over this
baseline in all cases.

{\addtolength{\tabcolsep}{-3pt}
\begin{table}[t]
\scalebox{0.94}{
\begin{tabular}{ |c|ccc|ccc|c| }\hline
 &\multicolumn{3}{c|}{CIFAR10} & \multicolumn{3}{c|}{CIFAR100} & FEMNIST \\
 \hline
\#trn.clients & $n=90$ & $450$ & $900$ & $90$ & $450$ & $900$ & 3237 \\ 
\hline
    FedAvg & $\np{48.6} \pm \np{0.3}$ & $\np{49.6} \pm \np{1.0}$ & $\np{51.7} \pm \np{0.7}$ & $\np{17.1} \pm \np{0.9}$ & $\np{19.3} \pm \np{2.0}$ & $\np{20.5} \pm \np{1.0}$ & $\np{81.9} \pm \np{0.4}$ \\
 \hline
 \acronym & $\mathbf{84.5 \pm 0.6}$ & $\mathbf{83.6 \pm 1.0}$ & $\mathbf{81.5 \pm 0.7}$ & $\mathbf{26.3 \pm 0.4}$ & $\mathbf{28.2 \pm 0.6}$ & $\mathbf{30.3 \pm 0.6}$ & $\mathbf{91.4 \pm 0.1}$ \\
 \hline
\end{tabular}}
\caption{Test accuracy on CIFAR10 and CIFAR100 for (test) clients that have only unlabeled data.}\label{tab:unlabeled}
\end{table}}

\begin{wrapfigure}[13]{R}{6.5cm}\centering
\vspace*{-\baselineskip}
\includegraphics[height=.42\linewidth]{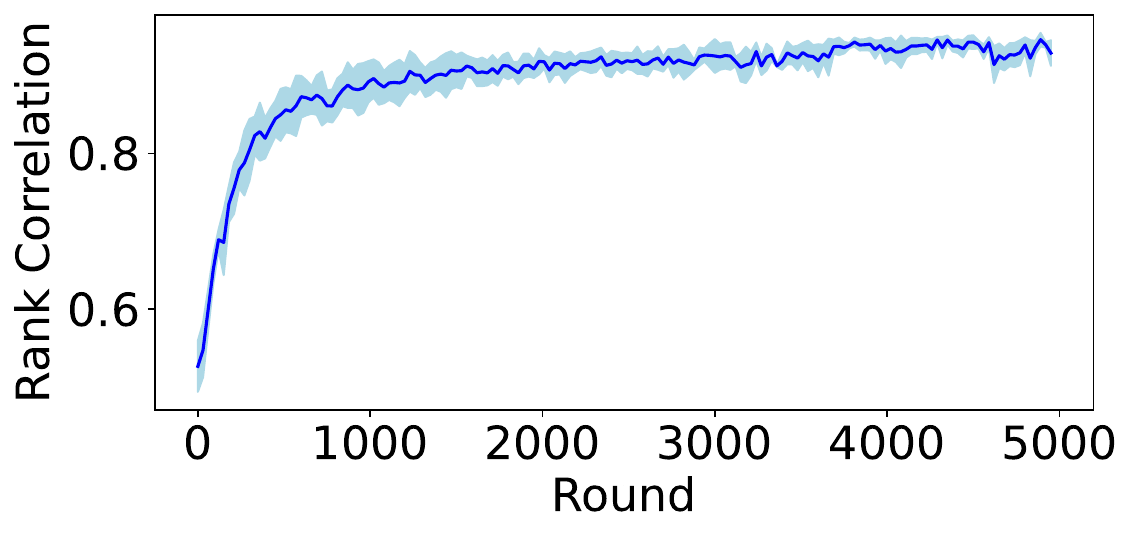}
\caption{Correlation between \emph{client descriptor similarity} obtained from the embedding network and \emph{ground truth similarity} over the course of training.}\label{fig:rank_corr}
\end{wrapfigure}

\paragraph{Analysis of Client Descriptors}
Our work relies on the hypothesis that clients with similar data distributions should
obtain similar descriptors, such that the hypernetwork then produces similar models 
for them.
To study this hypothesis empirically, we create clients of different similarities to each 
other in the following way. 

Let $C$ denote the number of classes and $n$ the number of clients. 
Then, for each client $i$ we sample a vector of class proportions, $\pi_i \in \Delta^C$, from a Dirichlet distribution $\text{Dir}(\bm{\alpha})$ with parameter vector $\bm{\alpha} = (0.1,\dots,0.1)$, where $\Delta^C$ is the unitary simplex of dimension $C$. 
Each client then receives a dataset, $S_i$, of samples from each of 
the $C$ classes according to its class proportion vector $\pi_i$. 
We randomly split the clients into train and unseen clients, 
and we run \acronym on the train clients. Throughout training
we periodically use the embedding network to compute all client 
descriptors in order to track how they evolve. 

For any pair of such clients, we compute two similarity 
measures: $d^\pi_{ij} = \|\pi_i - \pi_j \|$, 
which provides a ground truth notion of similarity 
between client distributions, and 
$d^v_{ij} = \|v(S_i) - v(S_j) \|$, 
which measures the similarity between client descriptors. 
To measure the extent to which $d^v$ reflects $d^\pi$, we use 
their average rank correlation in the following way.
For any unseen client $i$ we a form vector of similarities 
$d^v_i=(d^v_{i1},\dots,d^v_{in}) \in \R^{n}$,
and compute its Spearman's rank correlation coefficient~\citep{spearman}
to the corresponding ground truth vector 
$d^\pi_i=(d^\pi_{i1},\dots,d^\pi_{in}) \in \R^{n}$. 
Figure~\ref{fig:rank_corr} shows the average of this value across 
the unseen clients after different numbers of training steps
(CIFAR 10 dataset, 1000 clients in total). 
One can see that the rank correlation increases over the course of training, reaching very high correlation values of approximately 0.93. Results for 
other numbers of clients are comparable, see Appendix~\ref{app:experiments}.
%
This indicates that the embedding network indeed learns to organize the descriptor space according to client distributional similarity, with similar clients being mapped to similar descriptors.
Moreover, because these results are obtained from unseen clients, 
it is clear that the effect does not reflect potential overfitting
of the embedding network to the training clients, but that the learned similarity indeed
generalizes well. 
\vspace{-\densesection}
\section{Conclusion}
\vspace{-\densesection}
In this work, we presented \acronym, a new algorithm for personalized federated learning based on learning-to-learn.
By means of an embedding network that creates inputs to a hypernetwork it efficiently generates personalized models, both for clients present during training and new clients that appear later. 
\acronym has several desirable properties for real-world usability.
It stands on solid theoretical foundation in terms of convergence and generalization. 
It is trained by stateless optimization and does not require additional training steps to obtain models for new clients. 
Overall, it has low latency and puts little computational burden on the clients, making it practical to use in large scale applications with many clients each possessing little data. 
%


\paragraph{Acknowledgements.}
This research was supported by the Scientific Service Units (SSU) of ISTA through resources provided by Scientific Computing (SciComp). 

\bibliographystyle{abbrvnat}
\bibliography{ms.bib}

\begin{thebibliography}{63}
\providecommand{\natexlab}[1]{#1}
\providecommand{\url}[1]{\texttt{#1}}
\expandafter\ifx\csname urlstyle\endcsname\relax
  \providecommand{\doi}[1]{doi: #1}\else
  \providecommand{\doi}{doi: \begingroup \urlstyle{rm}\Url}\fi

\bibitem[Amit and Meir(2018)]{amit2018meta}
R.~Amit and R.~Meir.
\newblock Meta-learning by adjusting priors based on extended {PAC-Bayes} theory.
\newblock In \emph{International Conference on Machine Learning (ICML)}, 2018.

\bibitem[Amosy et~al.(2024)Amosy, Eyal, and Chechik]{late_to_the_party}
O.~Amosy, G.~Eyal, and G.~Chechik.
\newblock Late to the party? on-demand unlabeled personalized federated learning.
\newblock In \emph{{IEEE/CVF} Winter Conference on Applications of Computer Vision, {WACV} 2024, Waikoloa, HI, USA, January 3-8, 2024}, pages 2173--2182. {IEEE}, 2024.

\bibitem[Andrychowicz et~al.(2016)Andrychowicz, Denil, Gomez, Hoffman, Pfau, Schaul, Shillingford, and De~Freitas]{andrychowicz2016learning}
M.~Andrychowicz, M.~Denil, S.~Gomez, M.~W. Hoffman, D.~Pfau, T.~Schaul, B.~Shillingford, and N.~De~Freitas.
\newblock Learning to learn by gradient descent by gradient descent.
\newblock In \emph{Conference on Neural Information Processing Systems (NeurIPS)}, 2016.

\bibitem[Arivazhagan et~al.(2019)Arivazhagan, Aggarwal, Singh, and Choudhary]{fedper}
M.~G. Arivazhagan, V.~Aggarwal, A.~K. Singh, and S.~Choudhary.
\newblock Federated learning with personalization layers.
\newblock \emph{CoRR}, abs/1912.00818, 2019.
\newblock URL \url{http://arxiv.org/abs/1912.00818}.

\bibitem[Baxter(2000)]{baxter2000model}
J.~Baxter.
\newblock A model of inductive bias learning.
\newblock \emph{Journal of Artificial Intelligence Research (JAIR)}, 12:\penalty0 149--198, 2000.

\bibitem[Bui et~al.(2019)Bui, Malik, Goetz, Liu, Moon, Kumar, and Shin]{fedURepL}
D.~Bui, K.~Malik, J.~Goetz, H.~Liu, S.~Moon, A.~Kumar, and K.~G. Shin.
\newblock Federated user representation learning.
\newblock \emph{CoRR}, abs/1909.12535, 2019.
\newblock URL \url{http://arxiv.org/abs/1909.12535}.

\bibitem[Caldas et~al.(2018)Caldas, Wu, Li, Kone{\v{c}}n{\'y}, McMahan, Smith, and Talwalkar]{leaf}
S.~Caldas, P.~Wu, T.~Li, J.~Kone{\v{c}}n{\'y}, H.~B. McMahan, V.~Smith, and A.~Talwalkar.
\newblock {LEAF:} {A} benchmark for federated settings.
\newblock \emph{CoRR}, abs/1812.01097, 2018.
\newblock URL \url{http://arxiv.org/abs/1812.01097}.

\bibitem[Chen et~al.(2023)Chen, Yao, Gao, Ding, and Li]{chen23aj}
D.~Chen, L.~Yao, D.~Gao, B.~Ding, and Y.~Li.
\newblock Efficient personalized federated learning via sparse model-adaptation.
\newblock In \emph{International Conference on Machine Learning (ICML)}, 2023.

\bibitem[Chen and Liu(2018)]{chen2018lifelong}
Z.~Chen and B.~Liu.
\newblock Lifelong machine learning.
\newblock \emph{Synthesis Lectures on Artificial Intelligence and Machine Learning}, 12\penalty0 (3), 2018.

\bibitem[Collins et~al.(2021)Collins, Hassani, Mokhtari, and Shakkottai]{fedrep}
L.~Collins, H.~Hassani, A.~Mokhtari, and S.~Shakkottai.
\newblock Exploiting shared representations for personalized federated learning.
\newblock In \emph{International Conference on Machine Learning (ICML)}, 2021.

\bibitem[Dinh et~al.(2020)Dinh, Tran, and Nguyen]{pfedme}
C.~T. Dinh, N.~H. Tran, and T.~D. Nguyen.
\newblock Personalized federated learning with moreau envelopes.
\newblock In \emph{Conference on Neural Information Processing Systems (NeurIPS)}, 2020.

\bibitem[Dinh et~al.(2022)Dinh, Vu, Tran, Dao, and Zhang]{fedU}
C.~T. Dinh, T.~T. Vu, N.~H. Tran, M.~N. Dao, and H.~Zhang.
\newblock A new look and convergence rate of federated multi-task learning with {L}aplacian regularization.
\newblock \emph{IEEE Transactions on Neural Networks (TNN)}, 2022.

\bibitem[Dwork and Roth(2014)]{dwork2014algorithmic}
C.~Dwork and A.~Roth.
\newblock The algorithmic foundations of differential privacy.
\newblock \emph{Foundations and Trends in Machine Learning}, 9\penalty0 (3–4), 2014.

\bibitem[Evans et~al.(2018)Evans, Kolesnikov, and Rosulek]{evans2018pragmatic}
D.~Evans, V.~Kolesnikov, and M.~Rosulek.
\newblock A pragmatic introduction to secure multi-party computation.
\newblock \emph{Foundations and Trends in Privacy and Security}, 2\penalty0 (2–3), 2018.
\newblock ISSN 2474-1558.

\bibitem[Fallah et~al.(2020)Fallah, Mokhtari, and Ozdaglar]{perfedavg}
A.~Fallah, A.~Mokhtari, and A.~E. Ozdaglar.
\newblock Personalized federated learning with theoretical guarantees: {A} model-agnostic meta-learning approach.
\newblock In \emph{Conference on Neural Information Processing Systems (NeurIPS)}, 2020.

\bibitem[Finn et~al.(2017)Finn, Abbeel, and Levine]{MAML}
C.~Finn, P.~Abbeel, and S.~Levine.
\newblock Model-agnostic meta-learning for fast adaptation of deep networks.
\newblock In \emph{International Conference on Machine Learning (ICML)}, Proceedings of Machine Learning Research, 2017.

\bibitem[Ghosh et~al.(2020)Ghosh, Chung, Yin, and Ramchandran]{clusteredFL}
A.~Ghosh, J.~Chung, D.~Yin, and K.~Ramchandran.
\newblock An efficient framework for clustered federated learning.
\newblock In \emph{Conference on Neural Information Processing Systems (NeurIPS)}, 2020.

\bibitem[Guan and Lu(2022)]{guan2022fast}
J.~Guan and Z.~Lu.
\newblock Fast-rate {PAC-Bayesian} generalization bounds for meta-learning.
\newblock In \emph{International Conference on Machine Learning (ICML)}, 2022.

\bibitem[Gupta and Raskar(2018)]{split_learning}
O.~Gupta and R.~Raskar.
\newblock Distributed learning of deep neural network over multiple agents.
\newblock \emph{Journal of Network and Computer Applications}, 116, 2018.

\bibitem[Ha et~al.(2017)Ha, Dai, and Le]{hyp_nets}
D.~Ha, A.~M. Dai, and Q.~V. Le.
\newblock Hypernetworks.
\newblock In \emph{International Conference on Learning Representations (ICLR)}, 2017.

\bibitem[Hanzely and Richt{\'{a}}rik(2020)]{mixture_local_global}
F.~Hanzely and P.~Richt{\'{a}}rik.
\newblock Federated learning of a mixture of global and local models.
\newblock \emph{CoRR}, abs/2002.05516, 2020.
\newblock URL \url{https://arxiv.org/abs/2002.05516}.

\bibitem[Hanzely et~al.(2020)Hanzely, Hanzely, Horv{\'{a}}th, and Richt{\'{a}}rik]{lower_bounds}
F.~Hanzely, S.~Hanzely, S.~Horv{\'{a}}th, and P.~Richt{\'{a}}rik.
\newblock Lower bounds and optimal algorithms for personalized federated learning.
\newblock In \emph{Conference on Neural Information Processing Systems (NeurIPS)}, 2020.

\bibitem[Hard et~al.(2018)Hard, Rao, Mathews, Beaufays, Augenstein, Eichner, Kiddon, and Ramage]{next_word}
A.~Hard, K.~Rao, R.~Mathews, F.~Beaufays, S.~Augenstein, H.~Eichner, C.~Kiddon, and D.~Ramage.
\newblock Federated learning for mobile keyboard prediction.
\newblock \emph{CoRR}, abs/1811.03604, 2018.
\newblock URL \url{http://arxiv.org/abs/1811.03604}.

\bibitem[Hospedales et~al.(2021)Hospedales, Antoniou, Micaelli, and Storkey]{hospedales2021meta}
T.~Hospedales, A.~Antoniou, P.~Micaelli, and A.~Storkey.
\newblock Meta-learning in neural networks: A survey.
\newblock \emph{IEEE Transactions on Pattern Analysis and Machine Intelligence (TPAMI)}, 44\penalty0 (9):\penalty0 5149--5169, 2021.

\bibitem[Jiang et~al.(2019)Jiang, Kone{\v{c}}n{\'y}, Rush, and Kannan]{fed_maml2}
Y.~Jiang, J.~Kone{\v{c}}n{\'y}, K.~Rush, and S.~Kannan.
\newblock Improving federated learning personalization via model agnostic meta learning.
\newblock \emph{CoRR}, abs/1909.12488, 2019.
\newblock URL \url{http://arxiv.org/abs/1909.12488}.

\bibitem[Kairouz et~al.(2021)Kairouz, McMahan, Avent, Bellet, Bennis, Bhagoji, Bonawitz, Charles, Cormode, Cummings, D’Oliveira, Eichner, Rouayheb, Evans, Gardner, Garrett, Gascón, Ghazi, Gibbons, Gruteser, Harchaoui, He, He, Huo, Hutchinson, Hsu, Jaggi, Javidi, Joshi, Khodak, Konecný, Korolova, Koushanfar, Koyejo, Lepoint, Liu, Mittal, Mohri, Nock, Özgür, Pagh, Qi, Ramage, Raskar, Raykova, Song, Song, Stich, Sun, Suresh, Tramèr, Vepakomma, Wang, Xiong, Xu, Yang, Yu, Yu, and Zhao]{advances}
P.~Kairouz, H.~B. McMahan, B.~Avent, A.~Bellet, M.~Bennis, A.~N. Bhagoji, K.~Bonawitz, Z.~Charles, G.~Cormode, R.~Cummings, R.~G.~L. D’Oliveira, H.~Eichner, S.~E. Rouayheb, D.~Evans, J.~Gardner, Z.~Garrett, A.~Gascón, B.~Ghazi, P.~B. Gibbons, M.~Gruteser, Z.~Harchaoui, C.~He, L.~He, Z.~Huo, B.~Hutchinson, J.~Hsu, M.~Jaggi, T.~Javidi, G.~Joshi, M.~Khodak, J.~Konecný, A.~Korolova, F.~Koushanfar, S.~Koyejo, T.~Lepoint, Y.~Liu, P.~Mittal, M.~Mohri, R.~Nock, A.~Özgür, R.~Pagh, H.~Qi, D.~Ramage, R.~Raskar, M.~Raykova, D.~Song, W.~Song, S.~U. Stich, Z.~Sun, A.~T. Suresh, F.~Tramèr, P.~Vepakomma, J.~Wang, L.~Xiong, Z.~Xu, Q.~Yang, F.~X. Yu, H.~Yu, and S.~Zhao.
\newblock Advances and open problems in federated learning.
\newblock \emph{Foundations and Trends in Machine Learning}, 14\penalty0 (1–2), 2021.

\bibitem[Karimireddy et~al.(2020)Karimireddy, Kale, Mohri, Reddi, Stich, and Suresh]{karimireddy2020scaffold}
S.~P. Karimireddy, S.~Kale, M.~Mohri, S.~Reddi, S.~Stich, and A.~T. Suresh.
\newblock Scaffold: Stochastic controlled averaging for federated learning.
\newblock In \emph{International Conference on Machine Learning (ICML)}, 2020.

\bibitem[Krizhevsky(2009)]{cifar}
A.~Krizhevsky.
\newblock Learning multiple layers of features from tiny images.
\newblock \emph{Master's thesis, Department of Computer Science, University of Toronto}, 2009.

\bibitem[Lecun et~al.(1998)Lecun, Bottou, Bengio, and Haffner]{lenet}
Y.~Lecun, L.~Bottou, Y.~Bengio, and P.~Haffner.
\newblock Gradient-based learning applied to document recognition.
\newblock \emph{Proceedings of the IEEE}, 86\penalty0 (11):\penalty0 2278--2324, 1998.

\bibitem[Li et~al.(2023)Li, Cai, Wang, Tang, Ding, Lin, and Shi]{li2023fedtp}
H.~Li, Z.~Cai, J.~Wang, J.~Tang, W.~Ding, C.-T. Lin, and Y.~Shi.
\newblock {FedTP}: Federated learning by transformer personalization.
\newblock \emph{IEEE Transactions on Neural Networks and Learning Systems}, 2023.

\bibitem[Li et~al.(2020)Li, Sahu, Talwalkar, and Smith]{challenges_methods}
T.~Li, A.~K. Sahu, A.~Talwalkar, and V.~Smith.
\newblock Federated learning: Challenges, methods, and future directions.
\newblock \emph{IEEE Signal Processing Magazine}, 37\penalty0 (3):\penalty0 50--60, 2020.

\bibitem[Li et~al.(2021)Li, Hu, Beirami, and Smith]{ditto}
T.~Li, S.~Hu, A.~Beirami, and V.~Smith.
\newblock Ditto: Fair and robust federated learning through personalization.
\newblock In \emph{International Conference on Machine Learning (ICML)}, 2021.

\bibitem[Liang et~al.(2020)Liang, Liu, Liu, Salakhutdinov, and Morency]{think_locally}
P.~P. Liang, T.~Liu, Z.~Liu, R.~Salakhutdinov, and L.~Morency.
\newblock Think locally, act globally: Federated learning with local and global representations.
\newblock \emph{CoRR}, abs/2001.01523, 2020.
\newblock URL \url{http://arxiv.org/abs/2001.01523}.

\bibitem[Ma et~al.(2022)Ma, Zhang, Guo, and Xu]{pfedLA}
X.~Ma, J.~Zhang, S.~Guo, and W.~Xu.
\newblock Layer-wised model aggregation for personalized federated learning.
\newblock In \emph{IEEE/CVF Conference on Computer Vision and Pattern Recognition (CVPR)}, 2022.

\bibitem[Mansour et~al.(2020)Mansour, Mohri, Ro, and Suresh]{three_approaches}
Y.~Mansour, M.~Mohri, J.~Ro, and A.~T. Suresh.
\newblock Three approaches for personalization with applications to federated learning.
\newblock \emph{CoRR}, abs/2002.10619, 2020.
\newblock URL \url{https://arxiv.org/abs/2002.10619}.

\bibitem[Marfoq et~al.(2021)Marfoq, Neglia, Bellet, Kameni, and Vidal]{fed_multi_under_mixture}
O.~Marfoq, G.~Neglia, A.~Bellet, L.~Kameni, and R.~Vidal.
\newblock Federated multi-task learning under a mixture of distributions.
\newblock In \emph{Conference on Neural Information Processing Systems (NeurIPS)}, 2021.

\bibitem[Marfoq et~al.(2022)Marfoq, Neglia, Vidal, and Kameni]{perfedknn}
O.~Marfoq, G.~Neglia, R.~Vidal, and L.~Kameni.
\newblock Personalized federated learning through local memorization.
\newblock In \emph{International Conference on Machine Learning (ICML)}, 2022.

\bibitem[Maurer(2004)]{maurer2004note}
A.~Maurer.
\newblock A note on the {PAC} {Bayesian} theorem.
\newblock \emph{CoRR}, cs.LG/0411099, 2004.
\newblock URL \url{http://arxiv.org/abs/cs.LG/0411099}.

\bibitem[McAllester(1998)]{mcallester1998some}
D.~A. McAllester.
\newblock Some {PAC-Bayesian} theorems.
\newblock In \emph{Conference on Computational Learning Theory (COLT)}, 1998.

\bibitem[McMahan et~al.(2017)McMahan, Moore, Ramage, Hampson, and y~Arcas]{fedavg}
B.~McMahan, E.~Moore, D.~Ramage, S.~Hampson, and B.~A. y~Arcas.
\newblock Communication-efficient learning of deep networks from decentralized data.
\newblock In \emph{International Conference on Artificial Intelligence and Statistics (AISTATS)}, 2017.

\bibitem[Mitchell et~al.(2015)Mitchell, Cohen, Hruschka, Talukdar, Betteridge, Carlson, Dalvi, Gardner, Kisiel, Krishnamurthy, Lao, Mazaitis, Mohamed, Nakashole, Platanios, Ritter, Samadi, Settles, Wang, Wijaya, Gupta, Chen, Saparov, Greaves, and Welling]{mitchell2015never}
T.~Mitchell, W.~Cohen, E.~Hruschka, P.~Talukdar, J.~Betteridge, A.~Carlson, B.~Dalvi, M.~Gardner, B.~Kisiel, J.~Krishnamurthy, N.~Lao, K.~Mazaitis, T.~Mohamed, N.~Nakashole, E.~Platanios, A.~Ritter, M.~Samadi, B.~Settles, R.~Wang, D.~Wijaya, A.~Gupta, X.~Chen, A.~Saparov, M.~Greaves, and J.~Welling.
\newblock Never-ending learning.
\newblock In \emph{Conference on Artificial Intelligence (AAAI)}, 2015.

\bibitem[Parisi et~al.(2019)Parisi, Kemker, Part, Kanan, and Wermter]{parisi2019continual}
G.~I. Parisi, R.~Kemker, J.~L. Part, C.~Kanan, and S.~Wermter.
\newblock Continual lifelong learning with neural networks: A review.
\newblock \emph{Neural networks}, 113:\penalty0 54--71, 2019.

\bibitem[Pentina and Lampert(2014)]{pentina2014pac}
A.~Pentina and C.~H. Lampert.
\newblock A {PAC-Bayesian} bound for lifelong learning.
\newblock In \emph{International Conference on Machine Learning (ICML)}, 2014.

\bibitem[Pentina and Lampert(2015)]{pentina2015lifelong}
A.~Pentina and C.~H. Lampert.
\newblock Lifelong learning with non-iid tasks.
\newblock In \emph{Conference on Neural Information Processing Systems (NeurIPS)}, 2015.

\bibitem[P{\'e}rez-Ortiz et~al.(2021)P{\'e}rez-Ortiz, Rivasplata, Shawe-Taylor, and Szepesv{\'a}ri]{perez2021tighter}
M.~P{\'e}rez-Ortiz, O.~Rivasplata, J.~Shawe-Taylor, and C.~Szepesv{\'a}ri.
\newblock Tighter risk certificates for neural networks.
\newblock \emph{The Journal of Machine Learning Research}, 22\penalty0 (1):\penalty0 10326--10365, 2021.

\bibitem[Qin et~al.(2023)Qin, Yang, Wang, Han, and Hu]{qin2023reliable}
Z.~Qin, L.~Yang, Q.~Wang, Y.~Han, and Q.~Hu.
\newblock Reliable and interpretable personalized federated learning.
\newblock In \emph{IEEE/CVF Conference on Computer Vision and Pattern Recognition (CVPR)}, 2023.

\bibitem[Rezazadeh(2022)]{rezazadeh2022unified}
A.~Rezazadeh.
\newblock A unified view on {PAC-Bayes} bounds for meta-learning.
\newblock In \emph{International Conference on Machine Learning (ICML)}, 2022.

\bibitem[Ring(1994)]{ring1994continual}
M.~B. Ring.
\newblock \emph{Continual Learning in Reinforcement Environments}.
\newblock PhD thesis, University of Texas at Austin, Computer Science Department, 1994.

\bibitem[Rothfuss et~al.(2021)Rothfuss, Fortuin, Josifoski, and Krause]{rothfuss2021pacoh}
J.~Rothfuss, V.~Fortuin, M.~Josifoski, and A.~Krause.
\newblock {PACOH}: {Bayes-optimal} meta-learning with {PAC-guarantees}.
\newblock In \emph{International Conference on Machine Learning (ICML)}, 2021.

\bibitem[Seldin et~al.(2012)Seldin, Laviolette, Cesa-Bianchi, Shawe-Taylor, and Auer]{seldin2012pac}
Y.~Seldin, F.~Laviolette, N.~Cesa-Bianchi, J.~Shawe-Taylor, and P.~Auer.
\newblock {PAC-Bayesian} inequalities for martingales.
\newblock \emph{IEEE Transactions on Information Theory}, 58\penalty0 (12):\penalty0 7086--7093, 2012.

\bibitem[Shamsian et~al.(2021)Shamsian, Navon, Fetaya, and Chechik]{pfedHN}
A.~Shamsian, A.~Navon, E.~Fetaya, and G.~Chechik.
\newblock Personalized federated learning using hypernetworks.
\newblock In \emph{International Conference on Machine Learning (ICML)}, 2021.

\bibitem[Smith et~al.(2017)Smith, Chiang, Sanjabi, and Talwalkar]{federated_multitask}
V.~Smith, C.~Chiang, M.~Sanjabi, and A.~Talwalkar.
\newblock Federated multi-task learning.
\newblock In \emph{Conference on Neural Information Processing Systems (NeurIPS)}, 2017.

\bibitem[Spearman(1904)]{spearman}
C.~Spearman.
\newblock The proof and measurement of association between two things.
\newblock \emph{The American Journal of Psychology}, 15\penalty0 (1):\penalty0 72--101, 1904.

\bibitem[Thapa et~al.(2022)Thapa, Chamikara, Camtepe, and Sun]{split_fed}
C.~Thapa, M.~A.~P. Chamikara, S.~Camtepe, and L.~Sun.
\newblock {SplitFed}: When federated learning meets split learning.
\newblock In \emph{Conference on Artificial Intelligence (AAAI)}, 2022.

\bibitem[Thrun and Mitchell(1995)]{thrun1995}
S.~Thrun and T.~Mitchell.
\newblock Lifelong robot learning.
\newblock \emph{Robotics and Autonomous Systems}, 15\penalty0 (1):\penalty0 25--46, 1995.

\bibitem[Thrun and Pratt(1998)]{thrun1998}
S.~Thrun and L.~Pratt, editors.
\newblock \emph{Learning to Learn}.
\newblock Kluwer Academic Press, 1998.

\bibitem[Vilalta and Drissi(2002)]{vilalta2002perspective}
R.~Vilalta and Y.~Drissi.
\newblock A perspective view and survey of meta-learning.
\newblock \emph{Artifial Intelligence Reviews}, 18\penalty0 (2):\penalty0 77--95, 2002.

\bibitem[Wu et~al.(2023{\natexlab{a}})Wu, Bao, Ainsworth, and He]{wu2023personalized}
J.~Wu, W.~Bao, E.~Ainsworth, and J.~He.
\newblock Personalized federated learning with parameter propagation.
\newblock In \emph{International Conference on Knowledge Discovery and Data Mining (KDD)}, 2023{\natexlab{a}}.

\bibitem[Wu et~al.(2023{\natexlab{b}})Wu, Zhang, Yu, Liu, Gu, Zhou, Chen, and Cheng]{wu23z}
Y.~Wu, S.~Zhang, W.~Yu, Y.~Liu, Q.~Gu, D.~Zhou, H.~Chen, and W.~Cheng.
\newblock Personalized federated learning under mixture of distributions.
\newblock In \emph{International Conference on Machine Learning (ICML)}, 2023{\natexlab{b}}.

\bibitem[Ye et~al.(2023)Ye, Ni, Wu, Chen, and Wang]{ye23b}
R.~Ye, Z.~Ni, F.~Wu, S.~Chen, and Y.~Wang.
\newblock Personalized federated learning with inferred collaboration graphs.
\newblock In \emph{International Conference on Machine Learning (ICML)}, 2023.

\bibitem[Yi et~al.(2023)Yi, Shi, Wang, Xu, Wang, and Liu]{pfedlhns}
L.~Yi, X.~Shi, N.~Wang, Z.~Xu, G.~Wang, and X.~Liu.
\newblock {pFedLHNs}: Personalized federated learning via local hypernetworks.
\newblock In \emph{International Conference on Artificial Neural Networks (ICANN)}, 2023.

\bibitem[Zaheer et~al.(2017)Zaheer, Kottur, Ravanbakhsh, Poczos, Salakhutdinov, and Smola]{zaheer2017deep}
M.~Zaheer, S.~Kottur, S.~Ravanbakhsh, B.~Poczos, R.~R. Salakhutdinov, and A.~J. Smola.
\newblock Deep sets.
\newblock In \emph{Conference on Neural Information Processing Systems (NeurIPS)}, 2017.

\bibitem[Zhang et~al.(2023)Zhang, Li, Dai, Zou, and Xiong]{zhang23w}
H.~Zhang, C.~Li, W.~Dai, J.~Zou, and H.~Xiong.
\newblock {F}ed{CR}: Personalized federated learning based on across-client common representation with conditional mutual information regularization.
\newblock In \emph{International Conference on Machine Learning (ICML)}, 2023.

\end{thebibliography}

\newpage
\appendix
\section{Appendix - Experiments}\label{app:experiments}
\subsection{Model Architectures}
We used a LeNet-based architecture for the personalized model on each client. The exact architecture is shown in Table~\ref{tab:client_model}. After each convolutional and fully connected layer we apply a ReLU non-linearity except for the final layer which uses a softmax for input into the cross-entropy loss. 
The embedding network uses the same architecture except that the final layer has dimension $84 \times l$ and we do not apply a non-linearity (Table~\ref{tab:embedding_model}). For the hypernetwork we use a fully connected network. The exact architecture is shown in Table \ref{tab:hypernet}. Again we apply a ReLU after each layer except the last one, which does not have a non-linearity and simply has output of size $D$, where $D$ denotes the number of parameters in the client personalized model.

\begin{figure*}
\begin{minipage}{.48\textwidth}
\begin{table}[H]
    \centering
    \begin{tabular}{|c|c|c|}
        \hline
        \textbf{Layer} & \textbf{Shape} &\textbf{Nonlinearity}\\
        \hline
        Conv1 & $3\times 5\times 5 \times 16$ & ReLU\\
        \hline
        MaxPool & $2\times 2$ & -- \\
        \hline
        Conv2 & $16\times 5\times 5 \times 32$  & ReLU\\
        \hline
        MaxPool & $2\times 2$ & flatten\\
        \hline
        FC1 & $800\times 120$  & ReLU\\
        \hline
        FC2 & $120\times 84$  & ReLU\\
        \hline
        FC3 & $84\times C$  & softmax\\
        \hline
    \end{tabular}
    \caption{LeNet-based ConvNet of personalized client models. Convolutions are without padding. $C$ is the number of classes.}
    \label{tab:client_model}
\end{table}
\end{minipage}\quad
\begin{minipage}{.48\textwidth}
\begin{table}[H]
    \centering
    \begin{tabular}{|c|c|c|}
        \hline
        \textbf{Layer} & \textbf{Shape} &\textbf{Nonlinearity}\\
        \hline
        Conv1 & $3\times 5\times 5 \times 16$ & ReLU\\
        \hline
        MaxPool & $2\times 2$ & -- \\
        \hline
        Conv2 & $16\times 5\times 5 \times 32$  & ReLU\\
        \hline
        MaxPool & $2\times 2$ & flatten \\
        \hline
        FC1 & $800\times 120$  & ReLU\\
        \hline
        FC2 & $120\times 84$  & ReLU\\
        \hline
        FC3 & $84\times l$  & none\\
        \hline
    \end{tabular}
    \caption{LeNet-based embedding network. Convolutions are without padding. $l$ is the dimensionality of the client descriptor.}
    \label{tab:embedding_model}
\end{table}
\end{minipage}

\begin{minipage}{\textwidth}
\begin{table}[H]
    \centering
    \begin{tabular}{|c|c|c|}
        \hline
        \textbf{Layer} & \textbf{Shape}&\textbf{Nonlinearity}\\
        \hline
        FC1 & $l\times 100$ & ReLU\\
        \hline
        FC2 & $100\times 100$ & ReLU\\
        \hline
        FC3 & $100\times 100$ & ReLU\\
        \hline
        FC4 & $100\times 100$ & ReLU\\
        \hline
        FC5 & $100\times D$ & none\\
        \hline
    \end{tabular}
    \caption{Fully connected hypernetwork. $D$ is the dimensionality of the model to be created. Note that because of FC5 alone the hypernetwork is at least 100 bigger than the client models.}
    \label{tab:hypernet}
\end{table}
\end{minipage}
\end{figure*}

\subsection{Hyperparameter Settings}
For \acronym, each client uses a single batch of data (batch size 32) as input to the embedding network. For the hypernetwork parameter settings we use the recomended values given in \citep{pfedHN} as we found these to work well in our setting as well. Specifically, the dimension of the embedding vectors is $l=\frac{n}{4}$ and the number of client SGD steps is $k=50$. The regularization parameters for the embedding network and hypernetwork are set to $\lambda_h = \lambda_v = 10^{-3}$, while the output regularization is $\lambda_\theta = 0$.

For Per-FedAvg we used the first order (FO) approximation detailed in \citep{perfedavg}, and following the recomendations we set the number of client local steps is set to be a single epoch. For FedRep we set the number of client updates to the head and body to 5 and 1 epochs respectively as recommended in \citep{fedrep}. To infer on a new client we we randomly initialized a head and trained only the head locally for 20 epochs. 
For pFedMe we follow the recomendations in \citep{pfedme} and set the regularization parameter to $\lambda=15$, the computation parameter to $K=3$ and $\beta = 1$. To infer for a new client we first initialize the clients personalized model to the global model and finetuned this for 20 epochs using the pFedMe loss function.  
For kNN-Per, for the scale parameter in the kNN classifier we searched over the values recommended in \citep{perfedknn}, namely $\{0.1, 1, 10, 100\}$, we found that setting the scale parameter to 100 to worked best. The $k$ parameter for the kNN search and the weight parameter for how much to weight the client local vs global model were chosen individually on each client by cross-validation searching over $\{5, 10\}$ and $\{0.0, 0.1, 0.2, \dots, 1.0\}$ respectively. For pFedHN we set the hyperparameters following the recommendations in \citep{pfedHN}. For inferring on a new client we followed the procedure given in \citep{pfedHN} and we randomly initialized a new embedding vector and optimized only this using 20 rounds of client server communication.

\subsection{Additional Experiments}

\paragraph{Comparison of communication and computation cost with FedAvg}
We compare the performance of \acronym to FedAvg with respect to communication costs of training. At each global round, the communication cost of \acronym between the server and $c$ clients is $2c ( \|\theta\| +  \|\eta_v\| + \|v\|) $ while for FedAvg is $2c \|\theta\|$. In our setting, the embedding network and client models are of comparable size and the size of client descriptors ($v$) is negligibly small compared to them. Therefore, the per round communication cost of \acronym is about twice that of FedAvg. In Figures \ref{fig:CIFAR10_100}--\ref{fig:femnist} we plot the accuracies during training for a given communication cost (measured in GB of information transferred between server and clients) as well computational cost for the clients (measured in number of global rounds). 
As shown in the figures, for any given communication or computational budget \acronym outperforms FedAvg on these datasets.

\paragraph{Shakespeare dataset}
In order to test the effectiveness of \acronym in non-image tasks, we also provide results for the Shakespeare dataset ~\citep{leaf}, which is a next character prediction task. We use the LSTM model from~\citet{leaf} as the client model, and as the embedding network. The hypernetwork is the same as in all other experiments. We compare the performance in comparison to a number of baselines. Figure \ref{fig:shakespeare_rounds} shows the comparison of the accuracies for train and test clients during training with respect to the number of global rounds.
\acronym outperforms all baselines when comparing performance with respect to the number of global rounds.
We additionally compare the performance of \acronym with FedAvg with respect to accuracy for a given communication cost, see Figure \ref{fig:shakespeare_communication_cost}.
We observe similar performances for a fixed communication budget.

\paragraph{ResNet Experiments}
In order to test the effectiveness of \acronym for more modern architectures, we also performed experiments using a ResNet20 model\footnote{We use the implementation from \url{https://github.com/chenyaofo/pytorch-cifar-models}, except without the batch normalization layers.} for CIFAR10 dataset. We compared the performance of \acronym with FedAvg, and the results are shown in Figure \ref{fig:ResNet_CIFAR10} for 100, 500, and 1000 clients, respectively. These figures show \acronym outperforms FedAvg also for this model, and support the feasibility of generating modern models as part of our pipeline.

\paragraph{Analysis of Client Extrapolation}
For all experiments so far, clients at training time and new clients were related in the 
sense that they come from the same client environment. 
In this section, we examine how well \acronym is able to generalize beyond this, by studying its \emph{extrapolation} performance, where the new clients come from a different client environment than the train clients.  We follow the same procedure as in the previous section to simulate heterogeneous clients and use sampled Dirichlet class proportions to assign classes. For the train clients we again use $\bm{\alpha}_{\text{train}} = (0.1,\dots,0.1)$ to generate each clients class proportion vector. 
However, for the new clients we use a different Dirichlet parameter $\bm{\alpha}_{\text{new}} = (\alpha,\dots,\alpha)$. For each $\alpha \in \{0.1, 0.2, 0.3, \dots, 1.0\}$ we generate a group of new clients using this parameter. We run \acronym on the train clients then use the trained embedding and hypernetworks to generate a model for each of the new clients.

\begin{figure}[H]
\centering
\includegraphics[height=.36\textwidth]{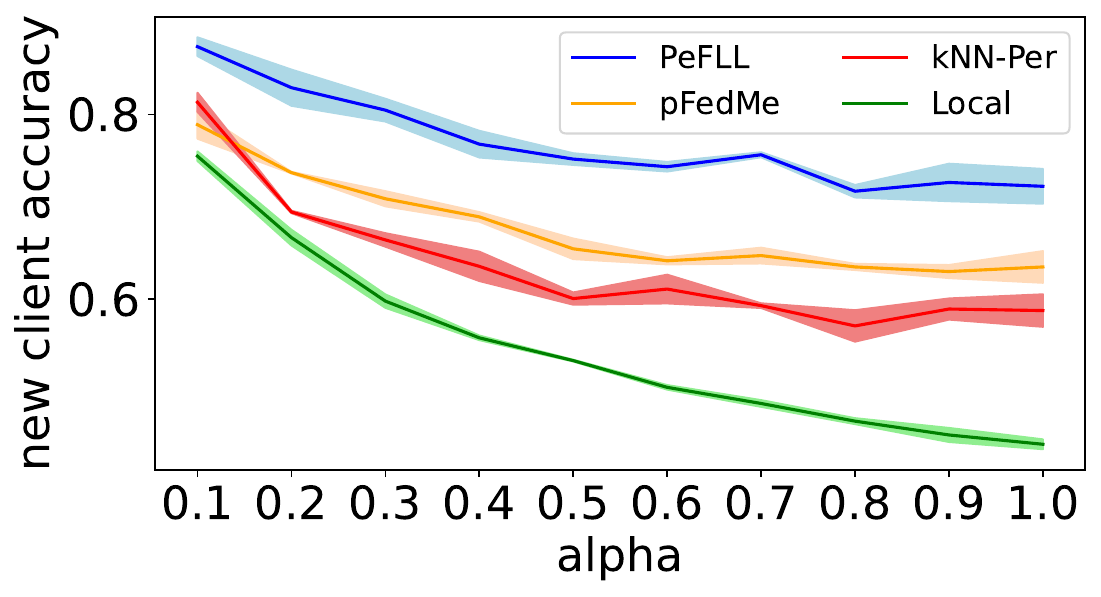}
\caption{{Accuracy in \emph{client extrapolation}. Larger values of $\alpha$ indicate new clients that are more dissimilar to the train clients. 
}}\label{fig:ood_clients}
\end{figure}

Figure~\ref{fig:ood_clients} shows the resulting accuracy values for 
\acronym and the best performing baselines of Table~\ref{tab:results},
as well the baseline of purely local per-client training.
Note that as $\alpha$ increases so does the difficulty
of the client's problem, as illustrated by the fact that the accuracy of
purely local training decreases.
Despite this increased task difficulty and distributional difference \acronym still obtains strong results.
Even at $\alpha=1$, \acronym produces models that are more accurate than those learned by the other methods for smaller values of $\alpha$, and far superior to purely local training on the new clients.


\begin{figure}[t]
    \includegraphics[height=.35\linewidth]{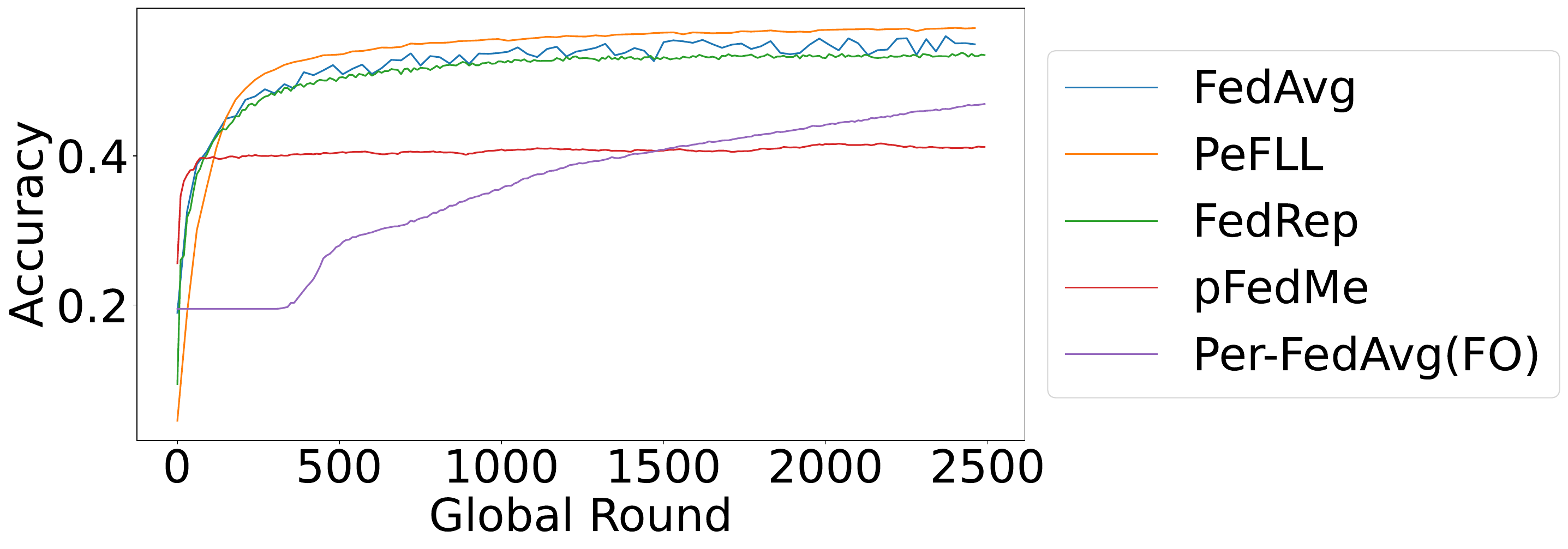}
    \\
    \includegraphics[height=.35\linewidth]{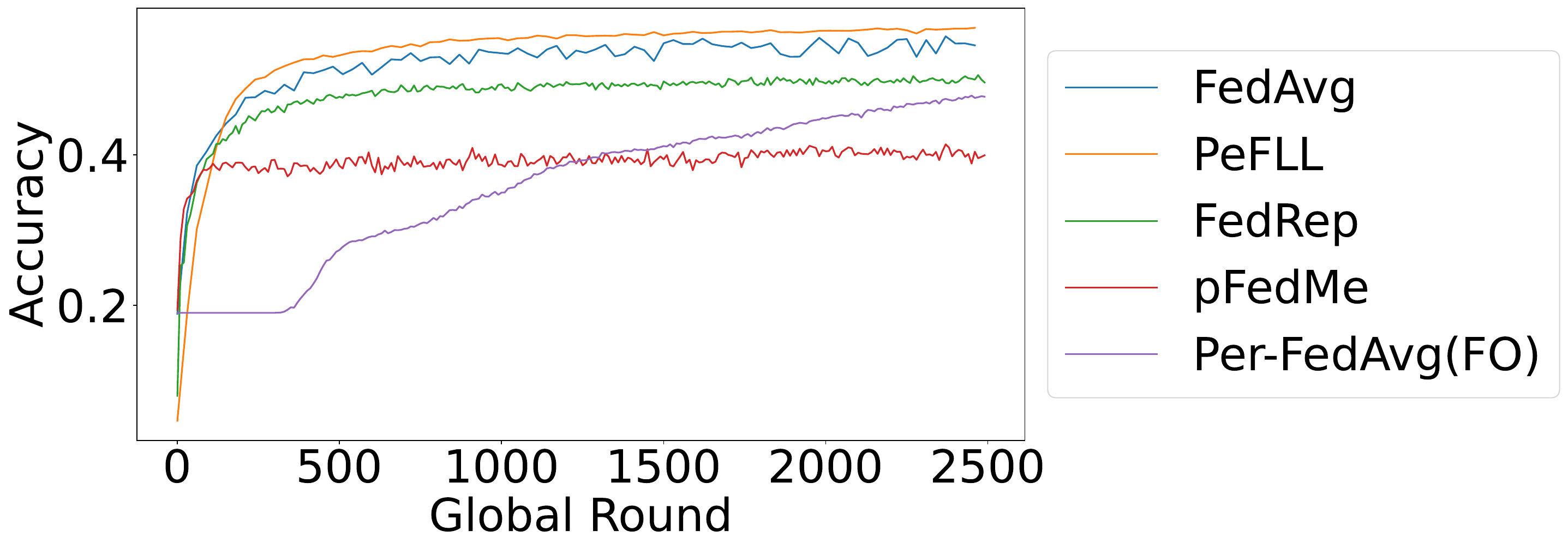}
    \caption{{Test Accuracies for \emph{train} clients (top row) and \emph{test} clients (bottom row) during different steps of the training for \acronym and baselines, for the Shakespeare dataset.}}
    \label{fig:shakespeare_rounds}
\end{figure}

\begin{figure}[t]
    \includegraphics[height=.25\linewidth]{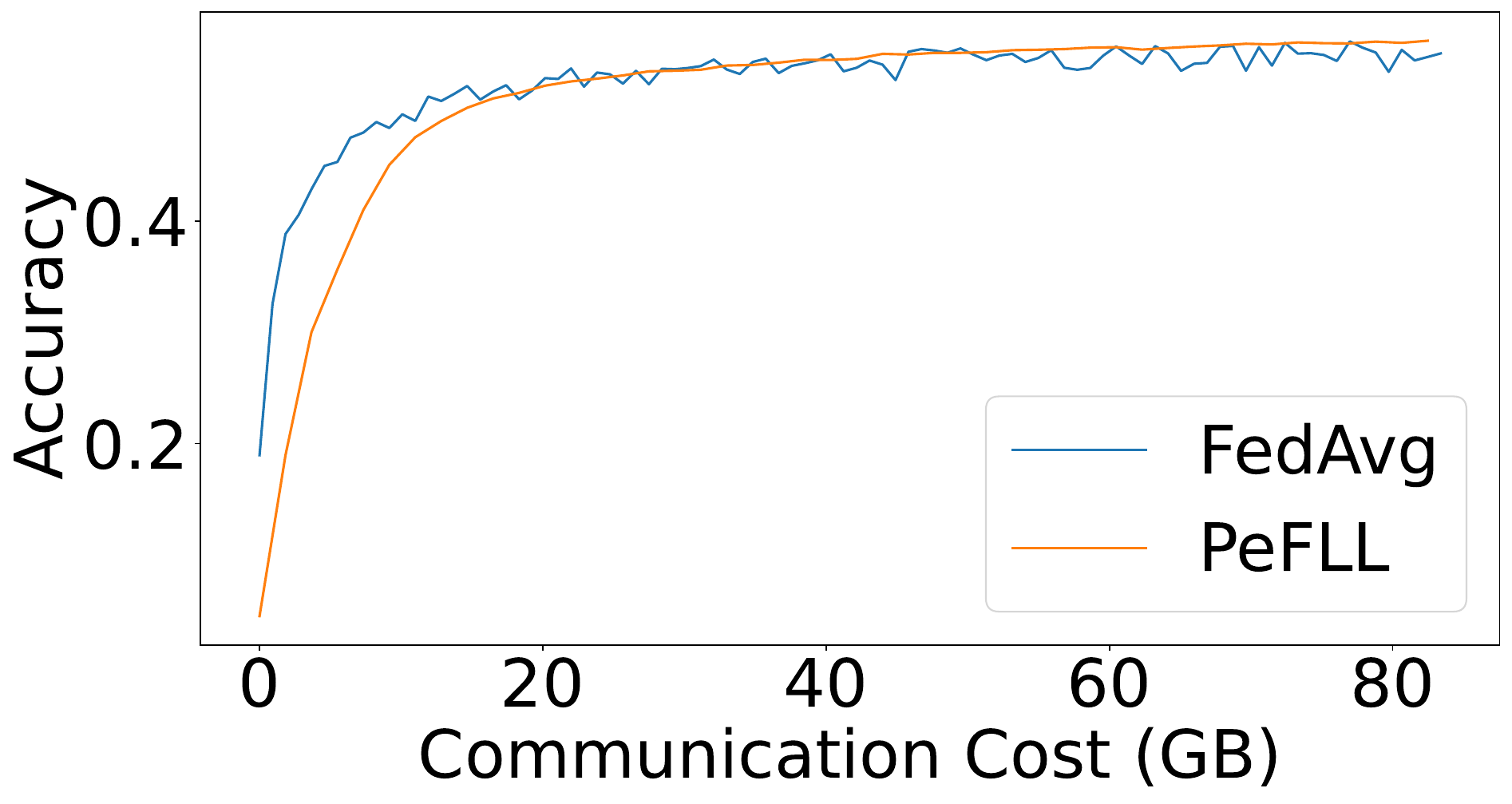}
    \quad
    \includegraphics[height=.25\linewidth]{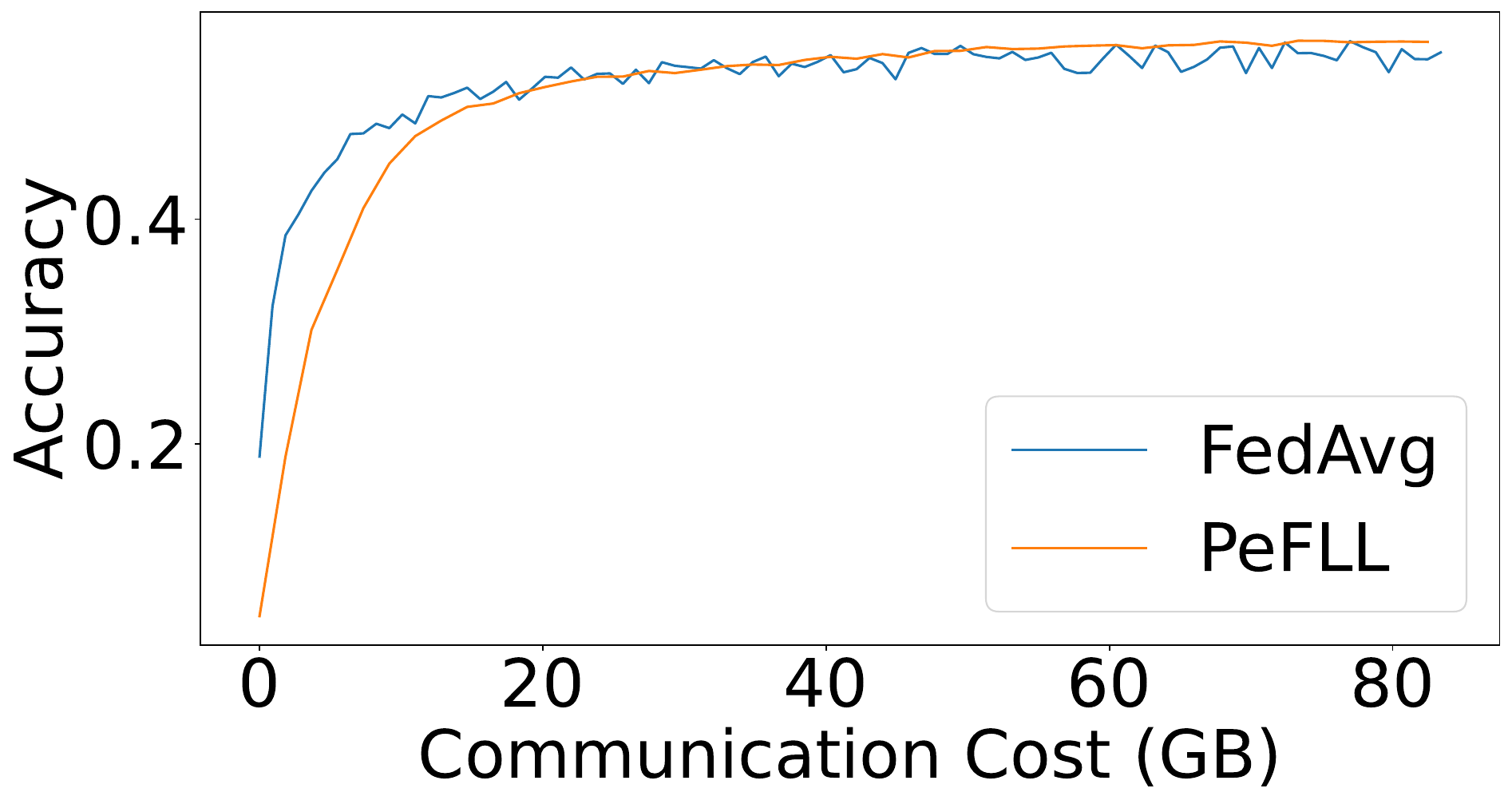}
    \caption{{Test Accuracies for \emph{train} clients (left) and \emph{test} clients (right) over the course of training measured with respect to total communication cost for \acronym and FedAvg, for the Shakespeare dataset.}}
    \label{fig:shakespeare_communication_cost}
\end{figure}

\begin{figure}[t]
\includegraphics[height=.25\linewidth]{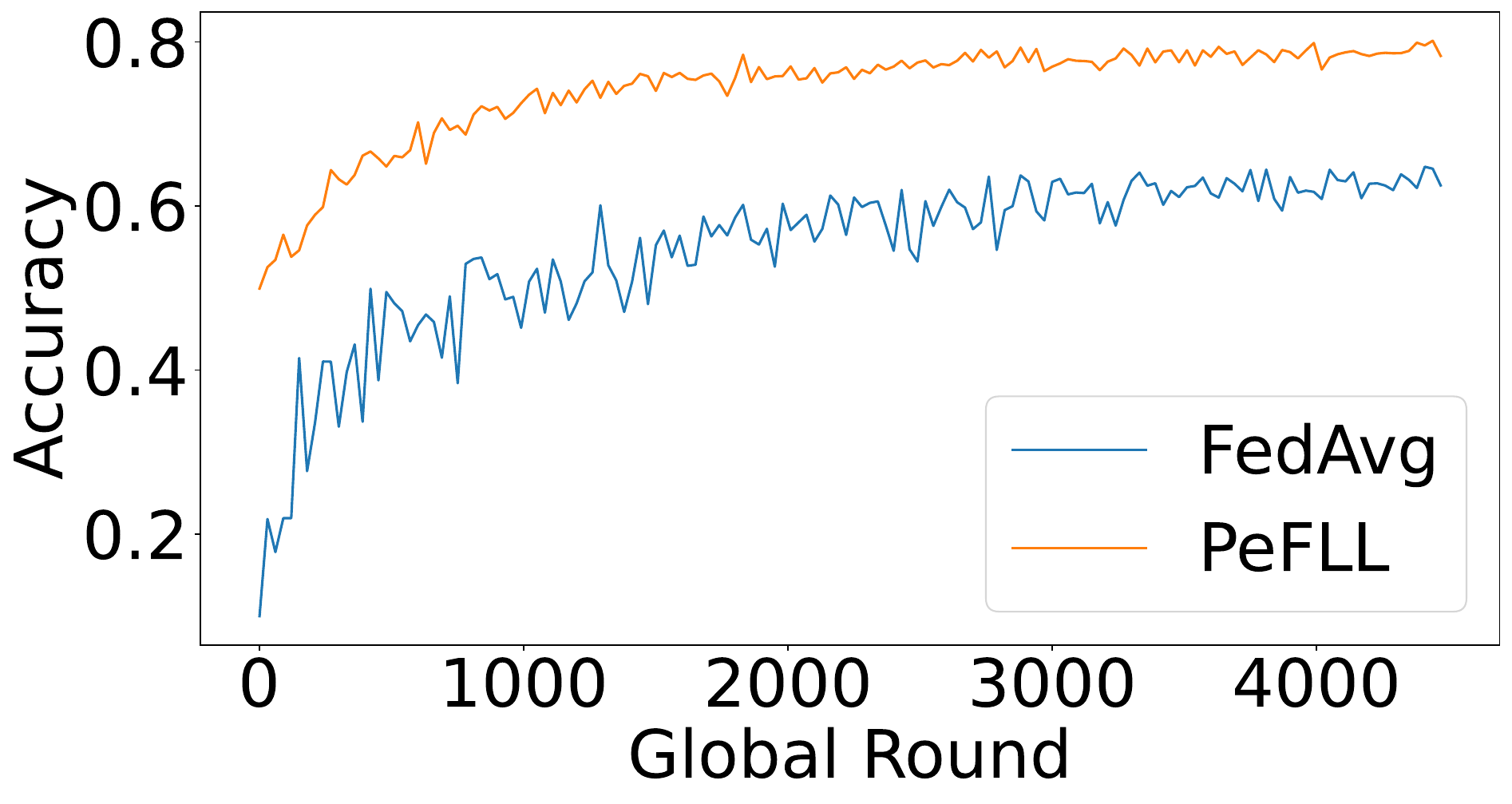}
\quad
\includegraphics[height=.25\textwidth]{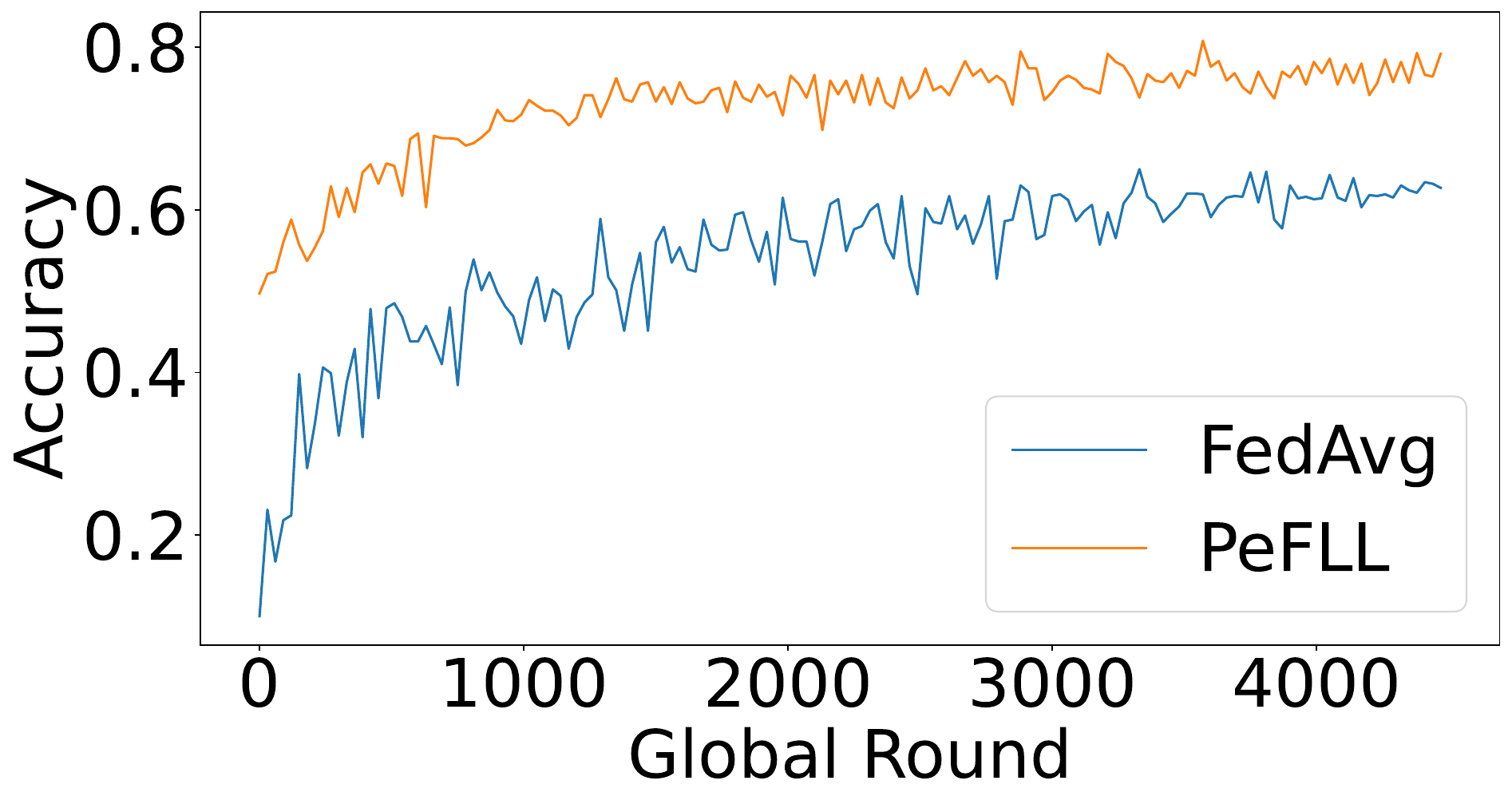}
\\
\includegraphics[height=.25\linewidth]{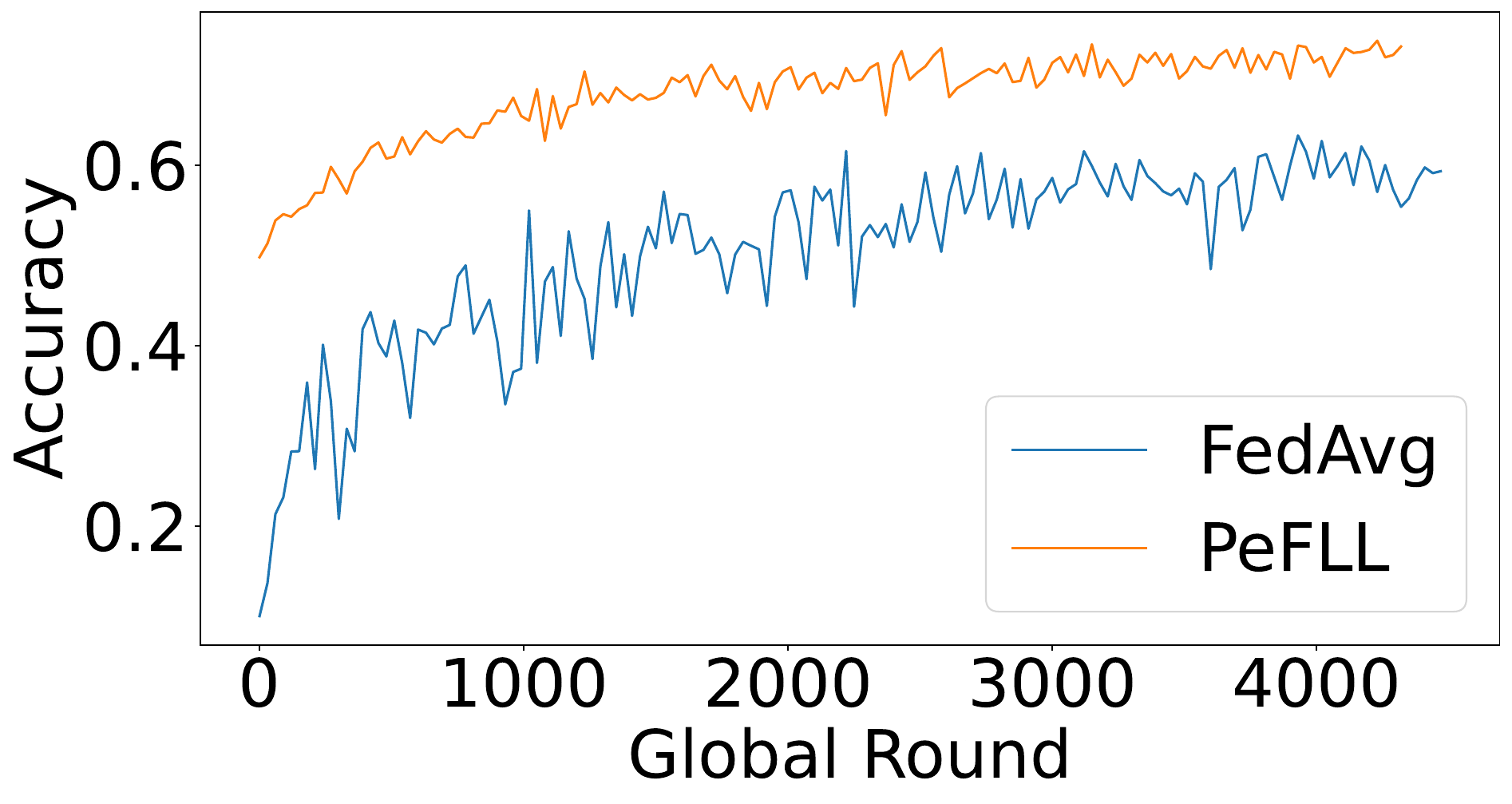}
\quad
\includegraphics[height=.25\textwidth]{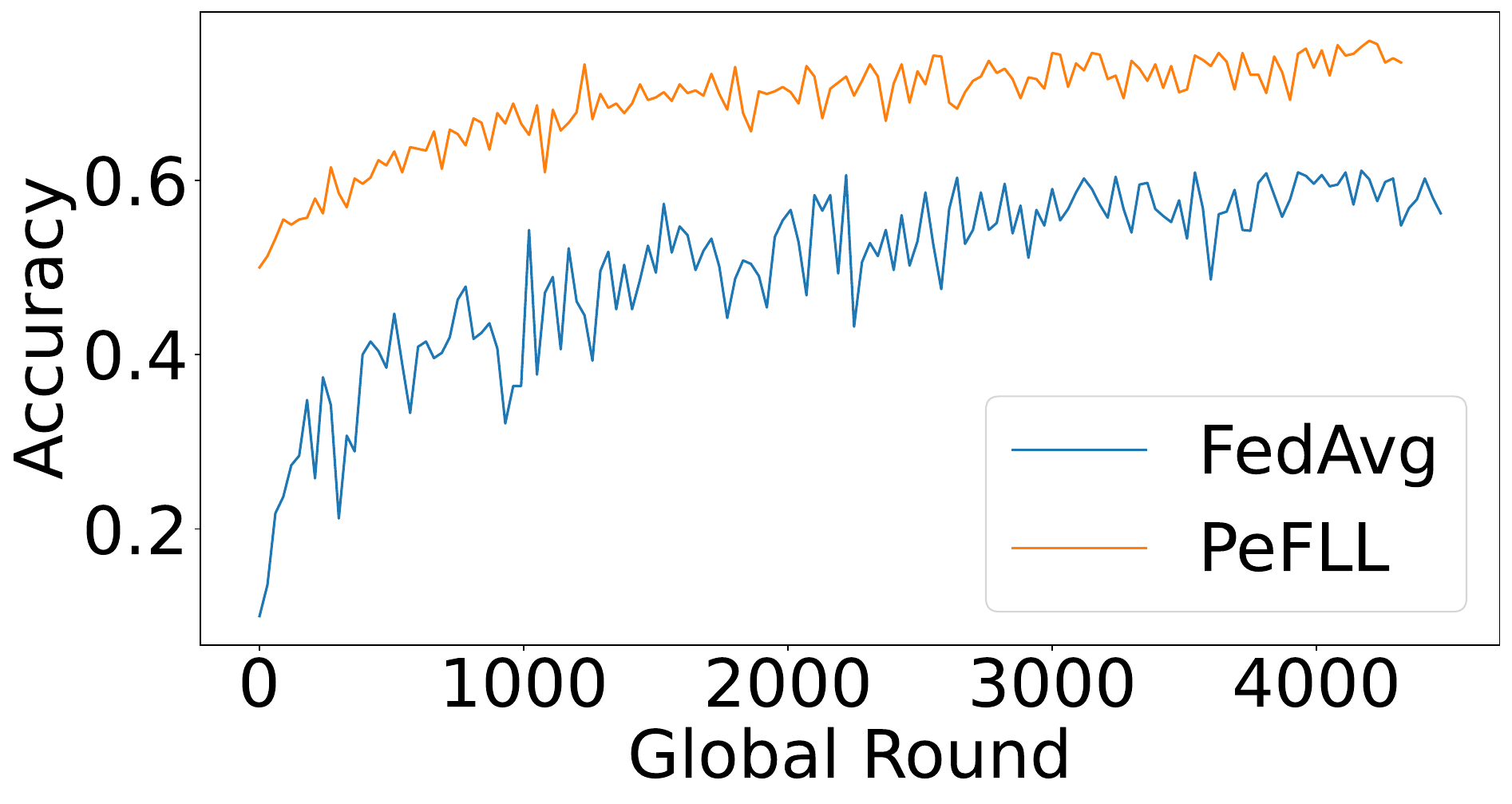}
\\
\includegraphics[height=.25\linewidth]{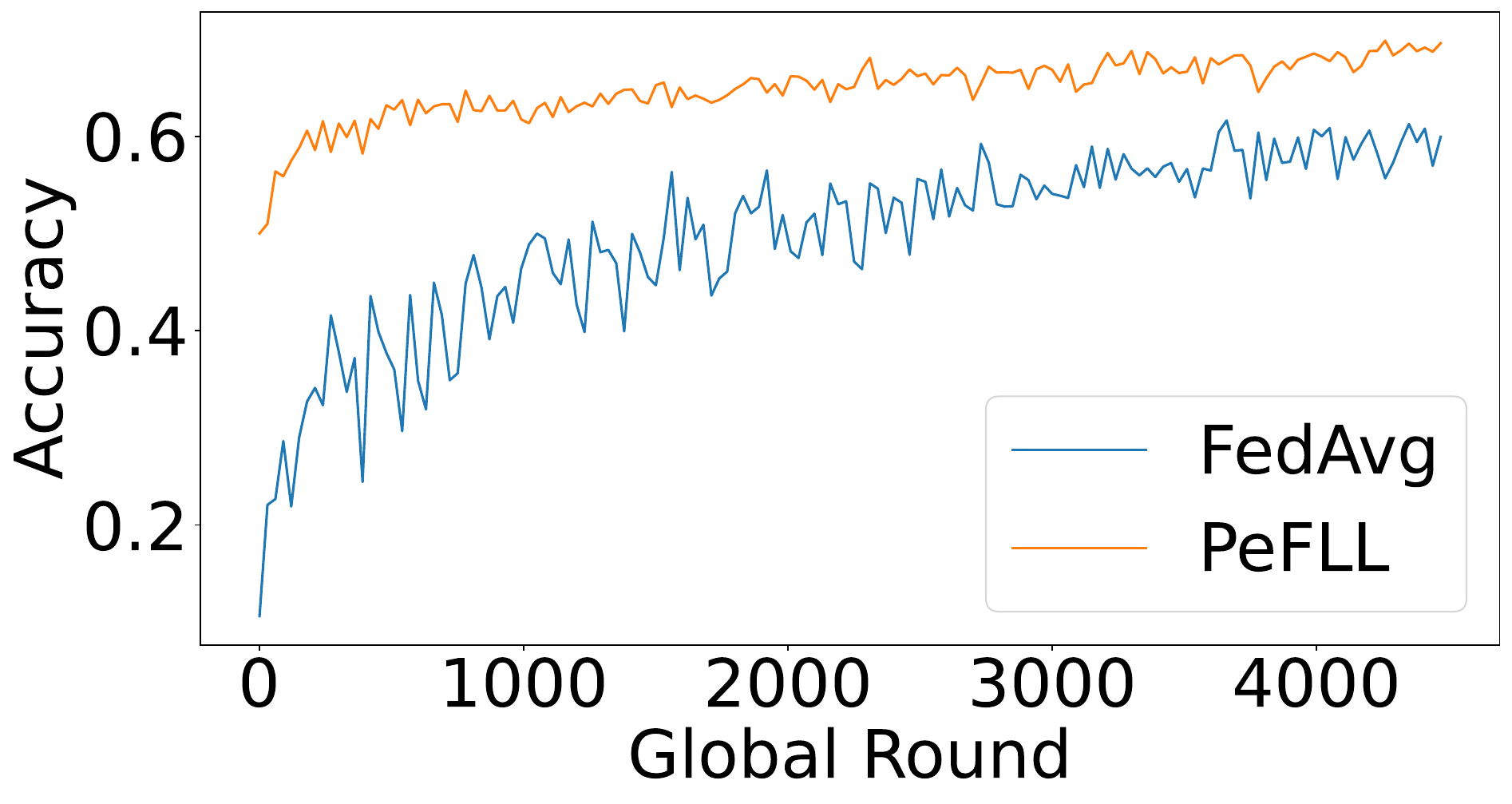}
\quad
\includegraphics[height=.25\textwidth]{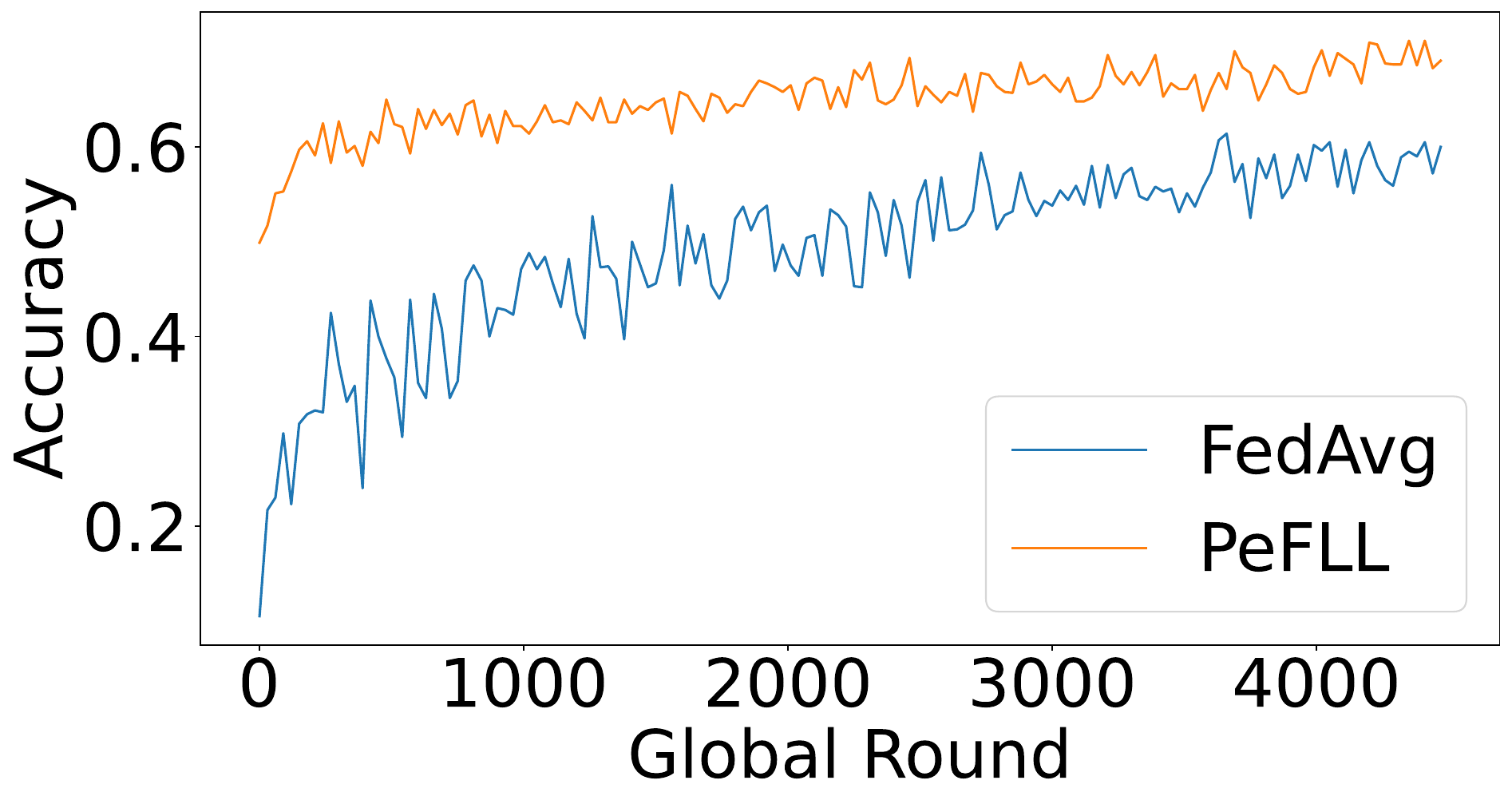}
\caption{{Test Accuracies for \emph{train} clients (left) and \emph{test} clients (right) during different steps of the training for \acronym and FedAvg for the CIFAR10 dataset with 100 clients (top row), 500 clients (middle row) and 1000 clients (bottom row). \acronym consistently outperforms FedAvg.}}\label{fig:ResNet_CIFAR10}
\end{figure}

\begin{figure}[t]
\includegraphics[height=.25\linewidth]{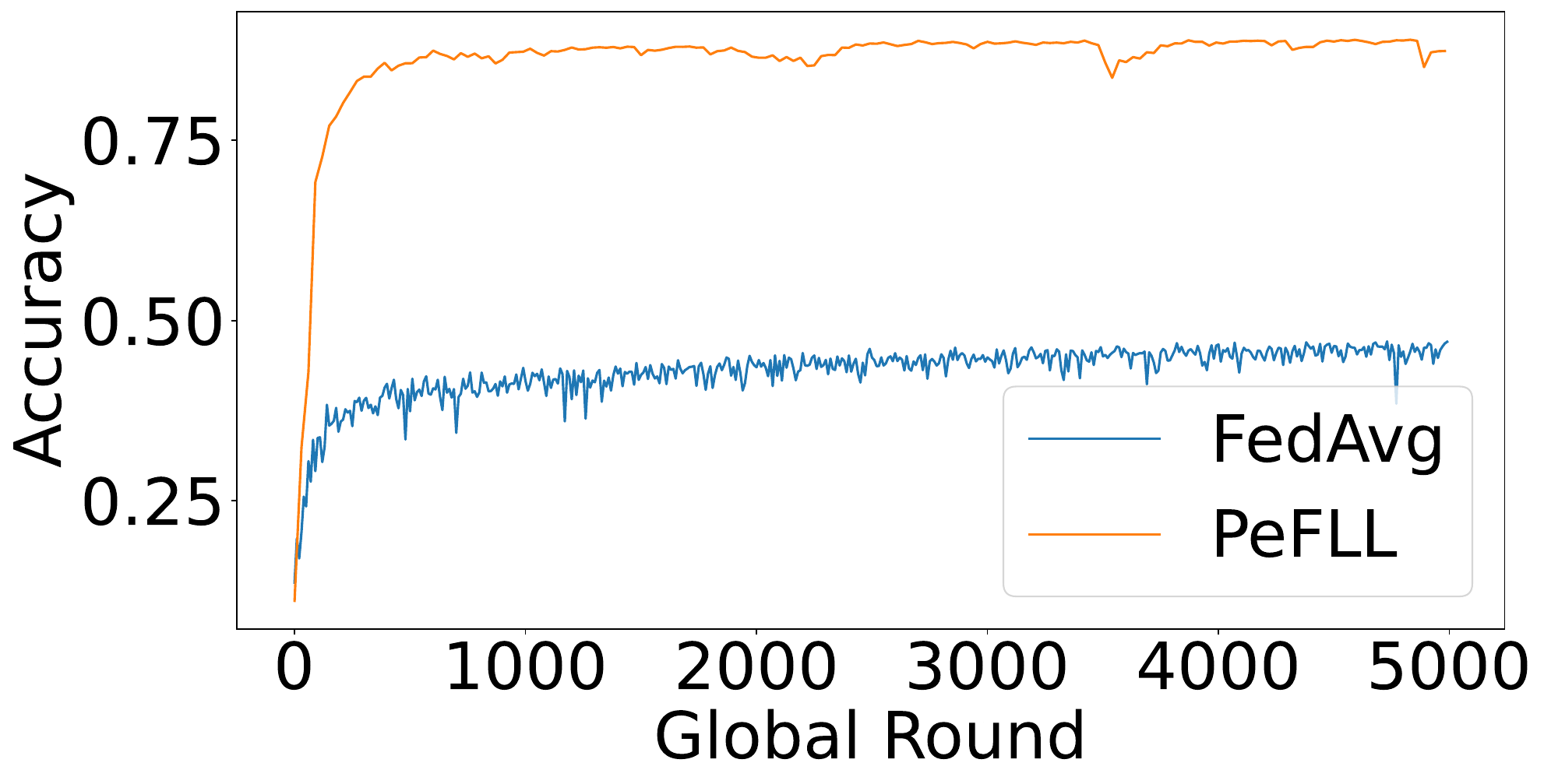}
\quad
\includegraphics[height=.25\textwidth]{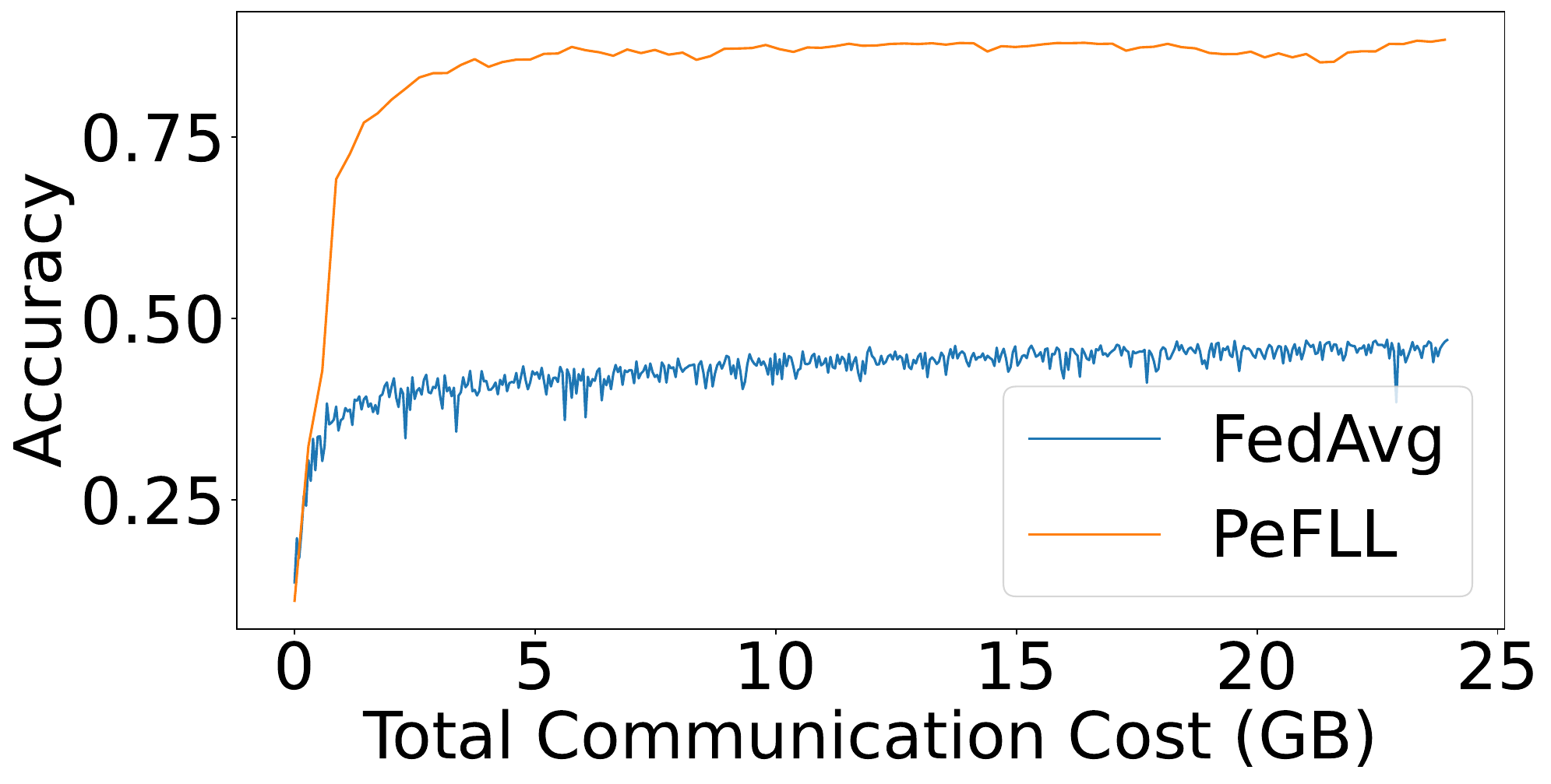}
\\
\includegraphics[height=.25\textwidth]{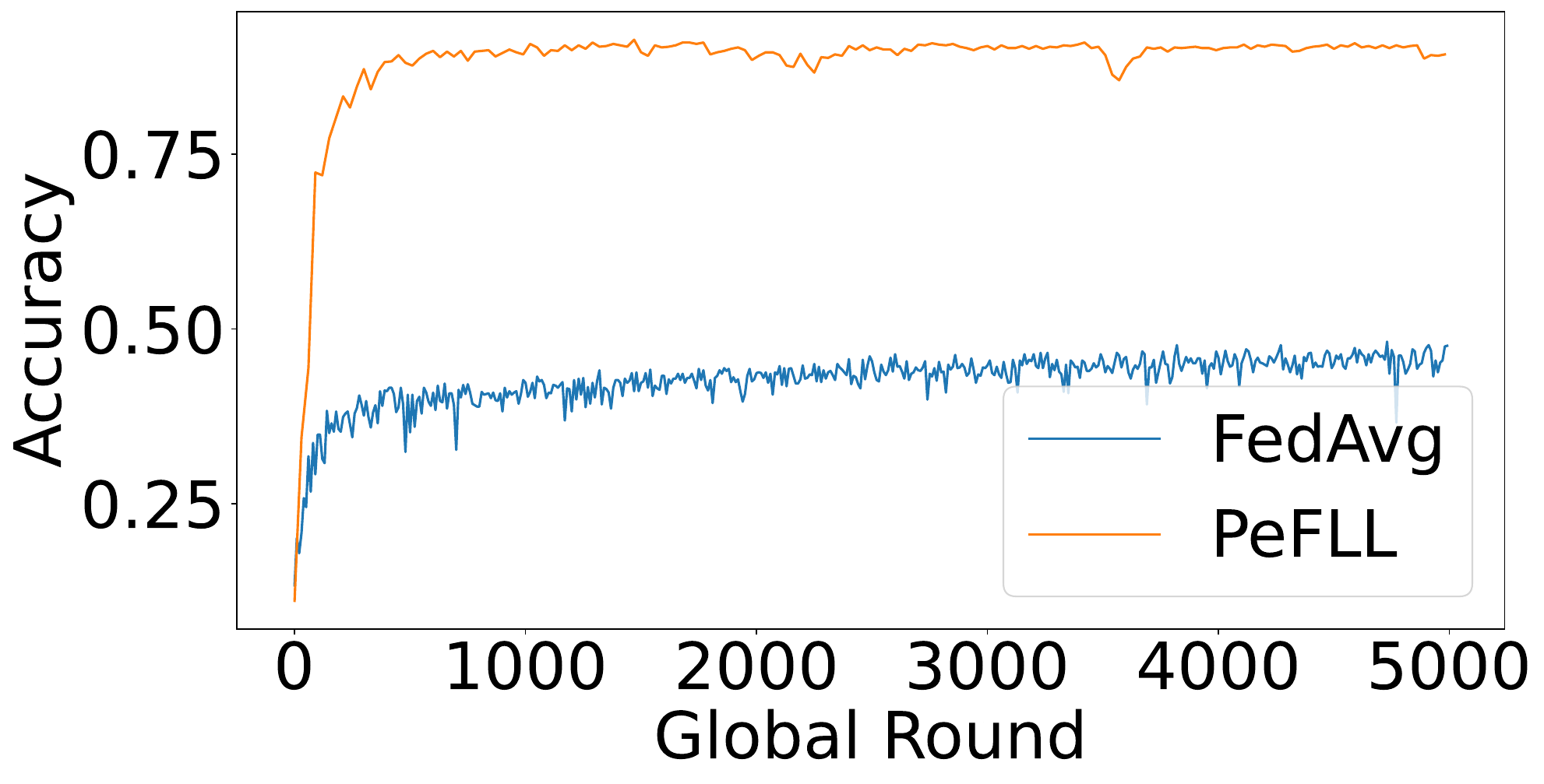}
\quad
\includegraphics[height=.25\linewidth]{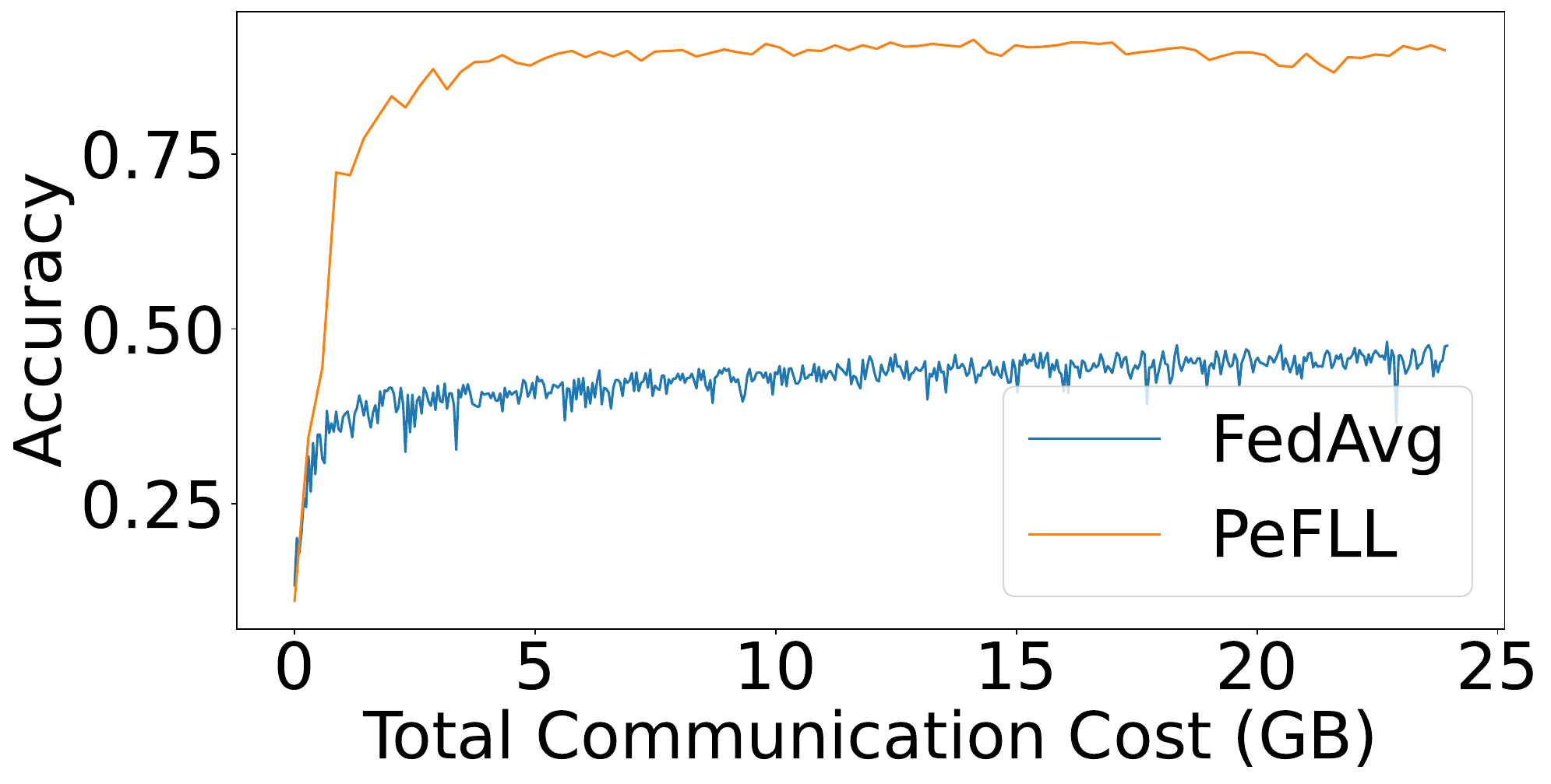}
\caption{{Test Accuracies for \emph{train} clients (top) and \emph{test} clients (bottom) during different steps of the training for \acronym and FedAvg, for the CIFAR10 dataset, 100 clients. \acronym's accuracy is higher than FedAvg's with respect to the number of rounds (which roughly reflect the computational cost for the clients) as well as the communication cost.}}\label{fig:CIFAR10_100}
\end{figure}

\begin{figure}[t]
\includegraphics[height=.25\linewidth]{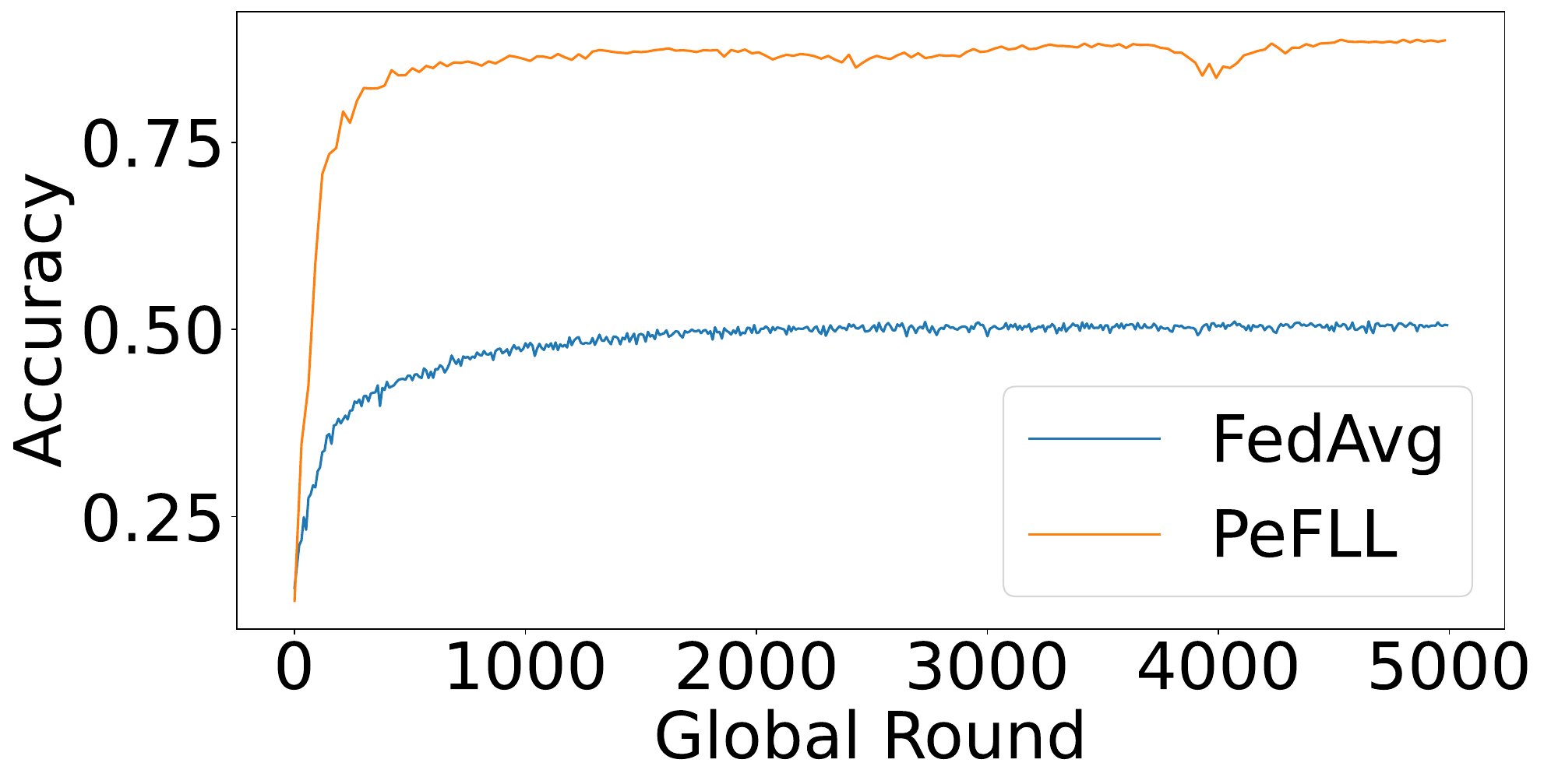}
\quad
\includegraphics[height=.25\textwidth]{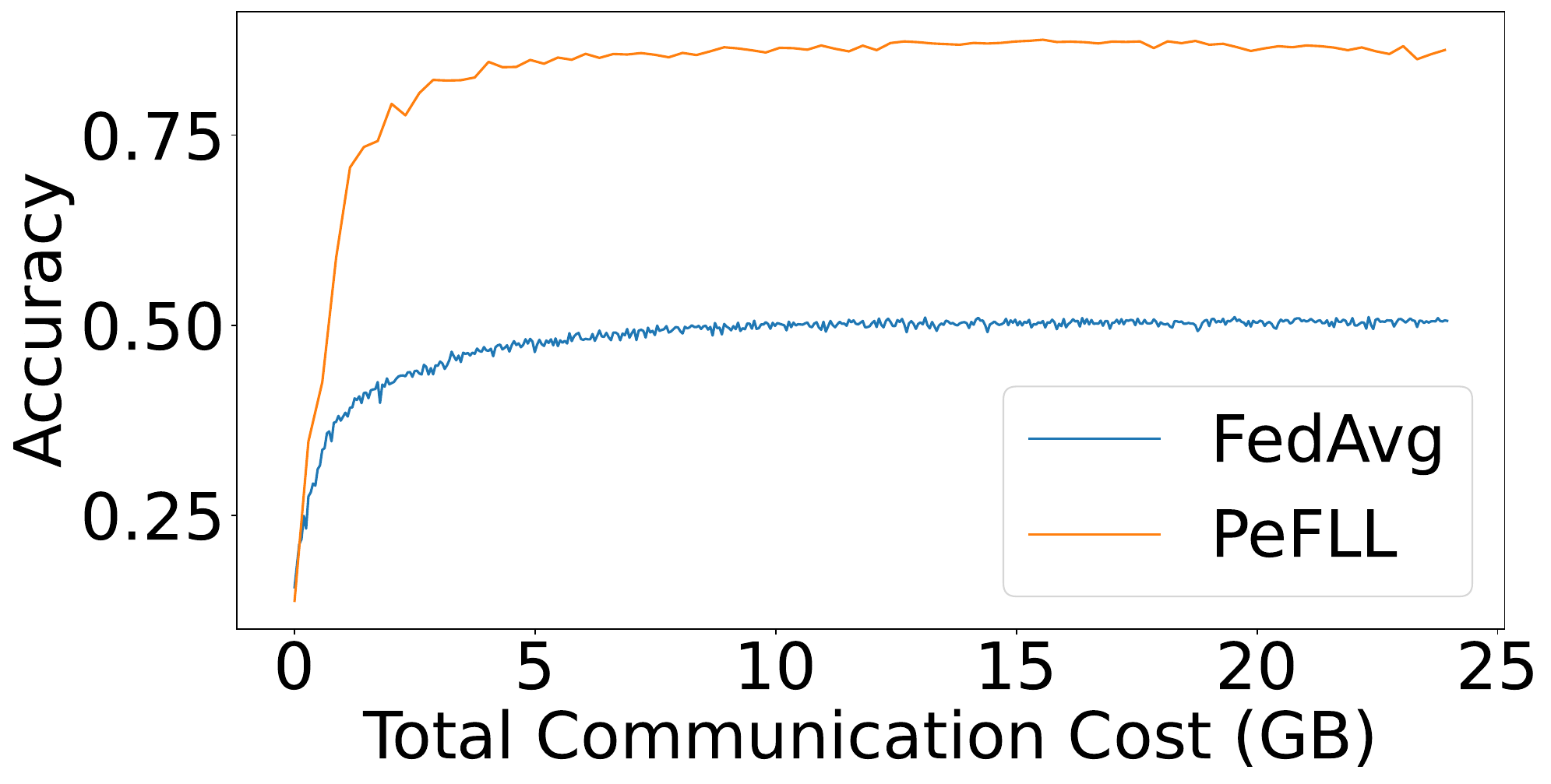}
\\
\includegraphics[height=.25\textwidth]{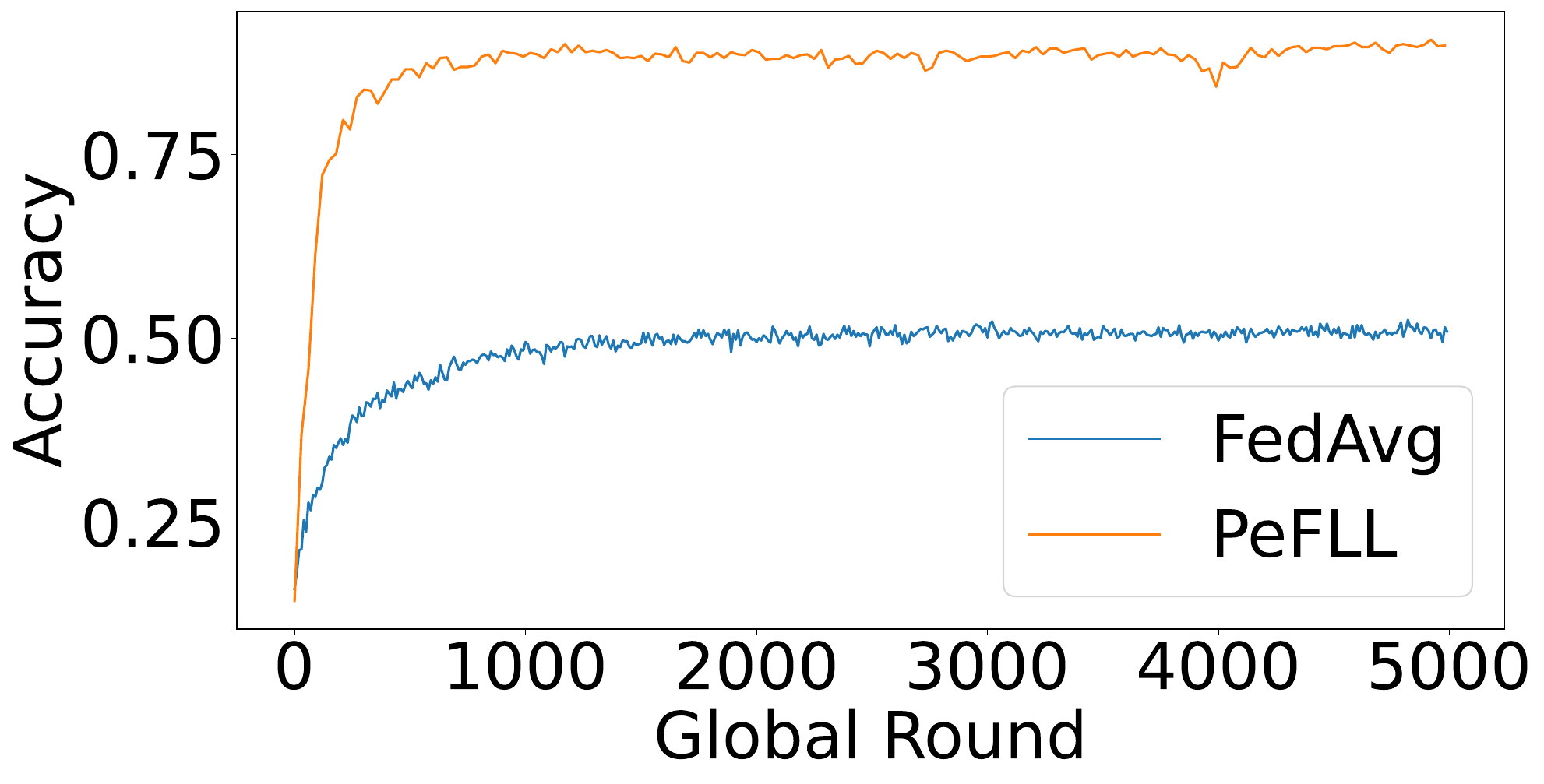}
\quad
\includegraphics[height=.25\linewidth]{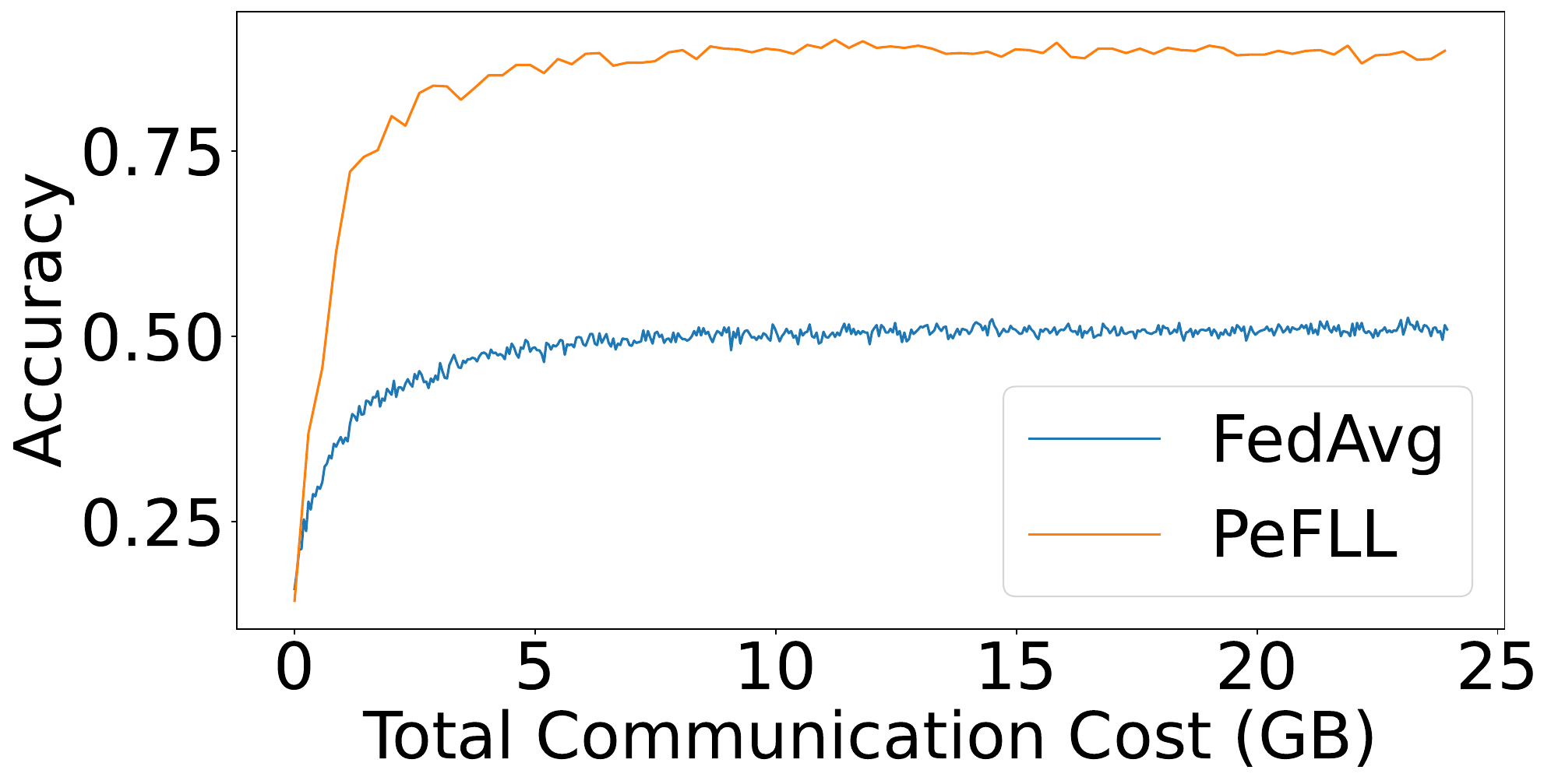}
\caption{{Test Accuracies for \emph{train} clients (top) and \emph{test} clients (bottom) during different steps of the training for \acronym and FedAvg, for the CIFAR10 dataset, 500 clients. \acronym's accuracy is higher than FedAvg's with respect to the number of rounds (which roughly reflect the computational cost for the clients) as well as the communication cost.}}\label{fig:CIFAR10_500}
\end{figure}

\begin{figure}[t]
\includegraphics[height=.25\linewidth]{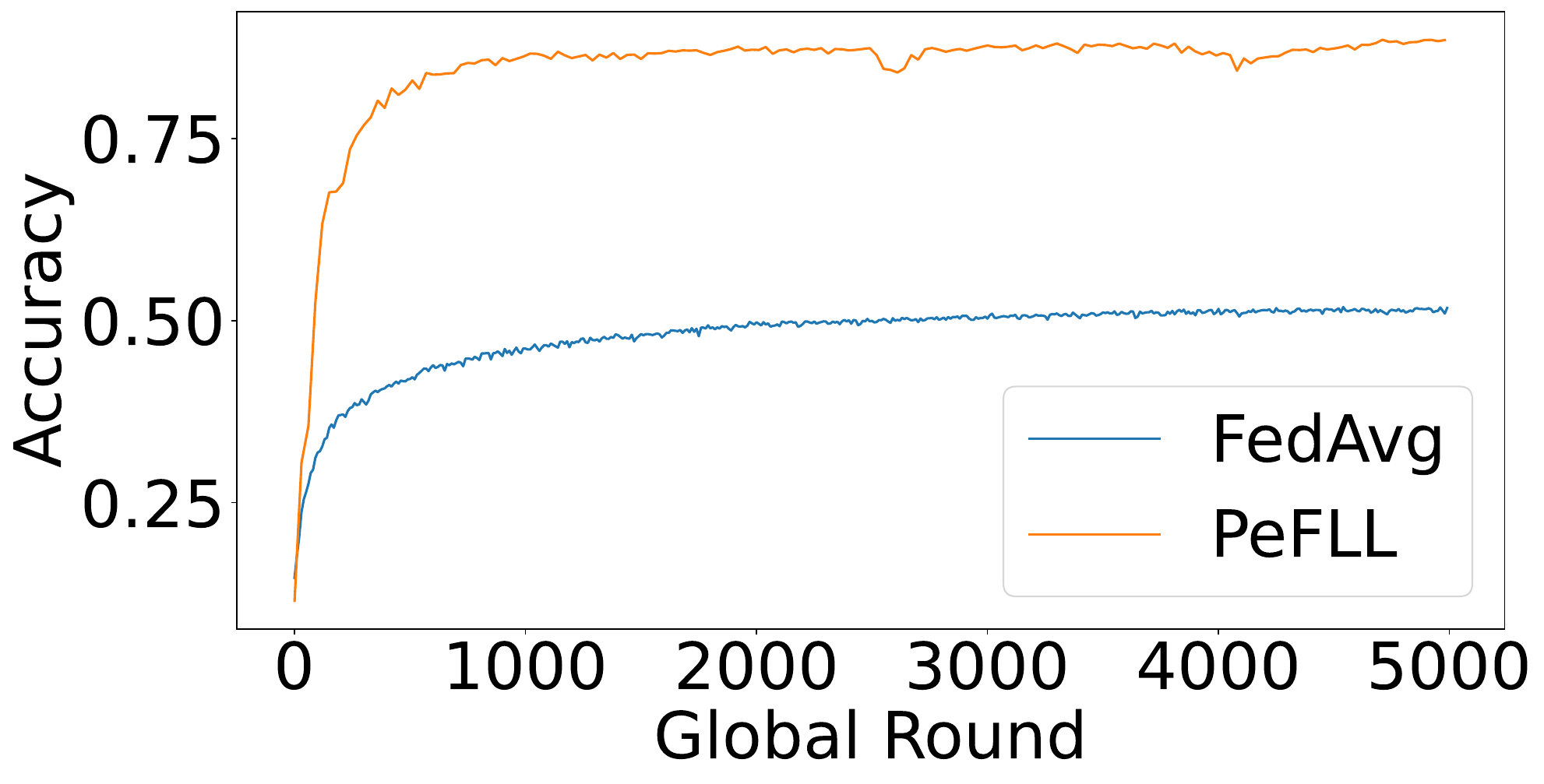}
\quad
\includegraphics[height=.25\textwidth]{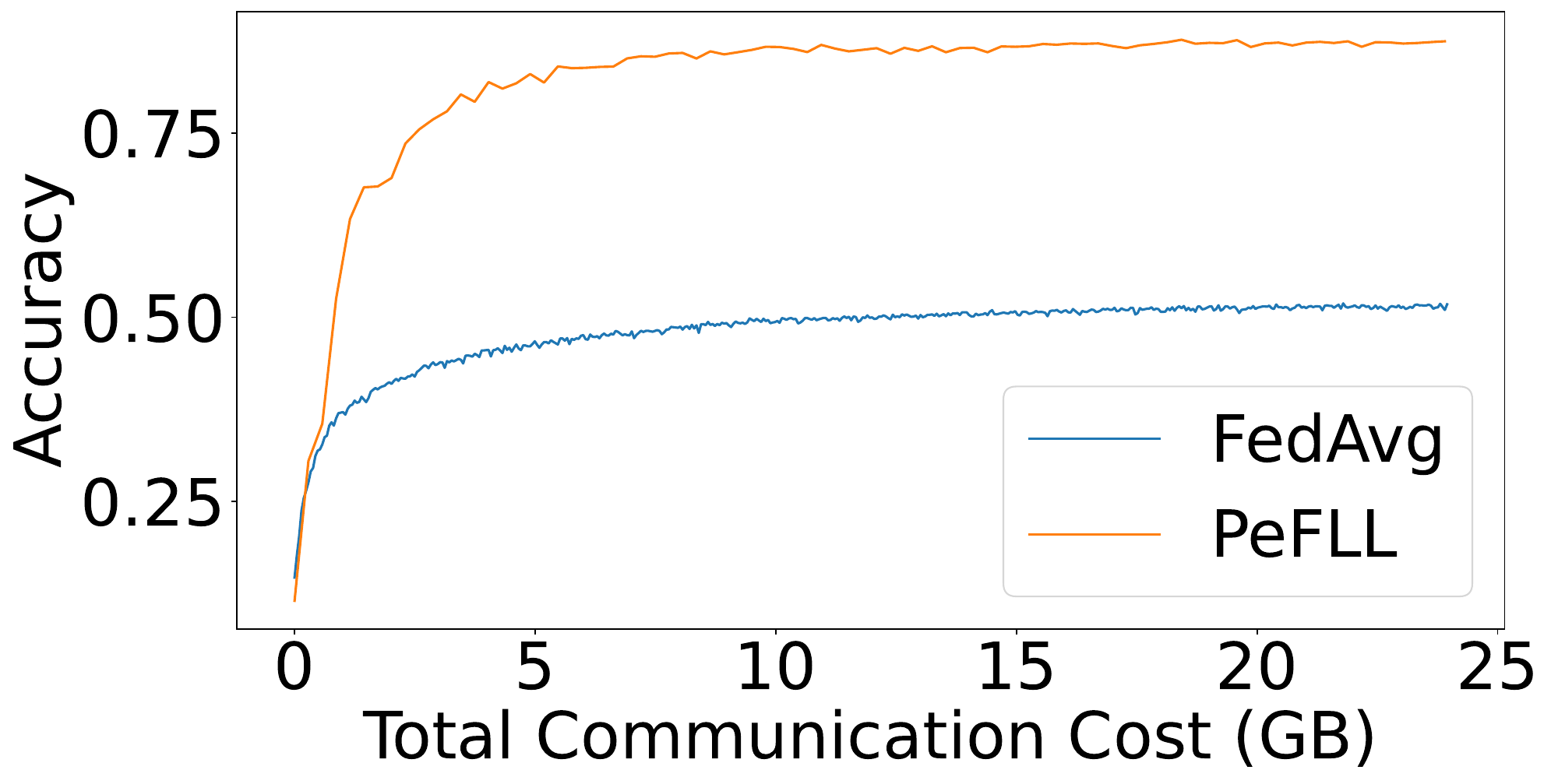}
\\
\includegraphics[height=.25\textwidth]{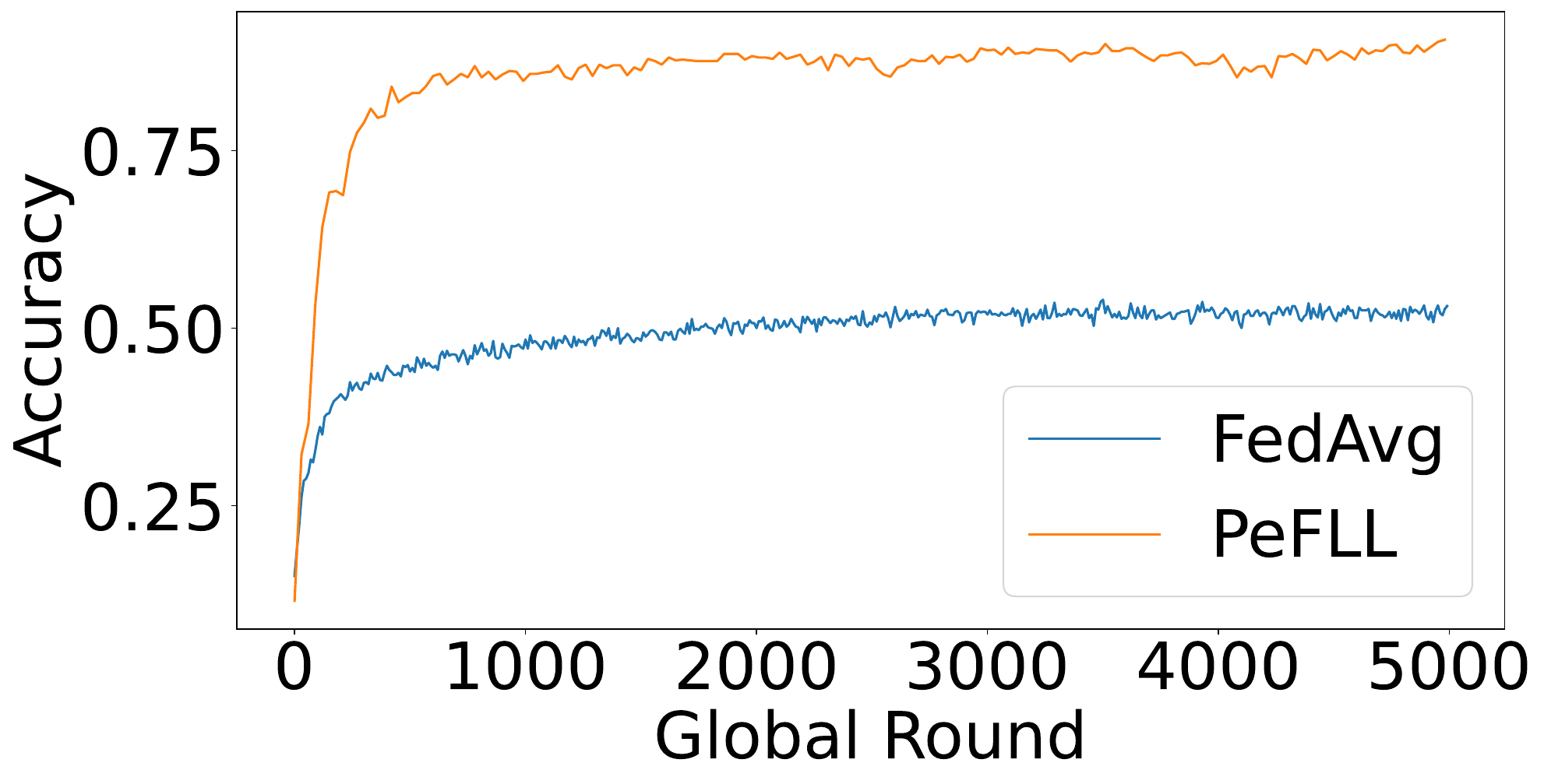}
\quad
\includegraphics[height=.25\linewidth]{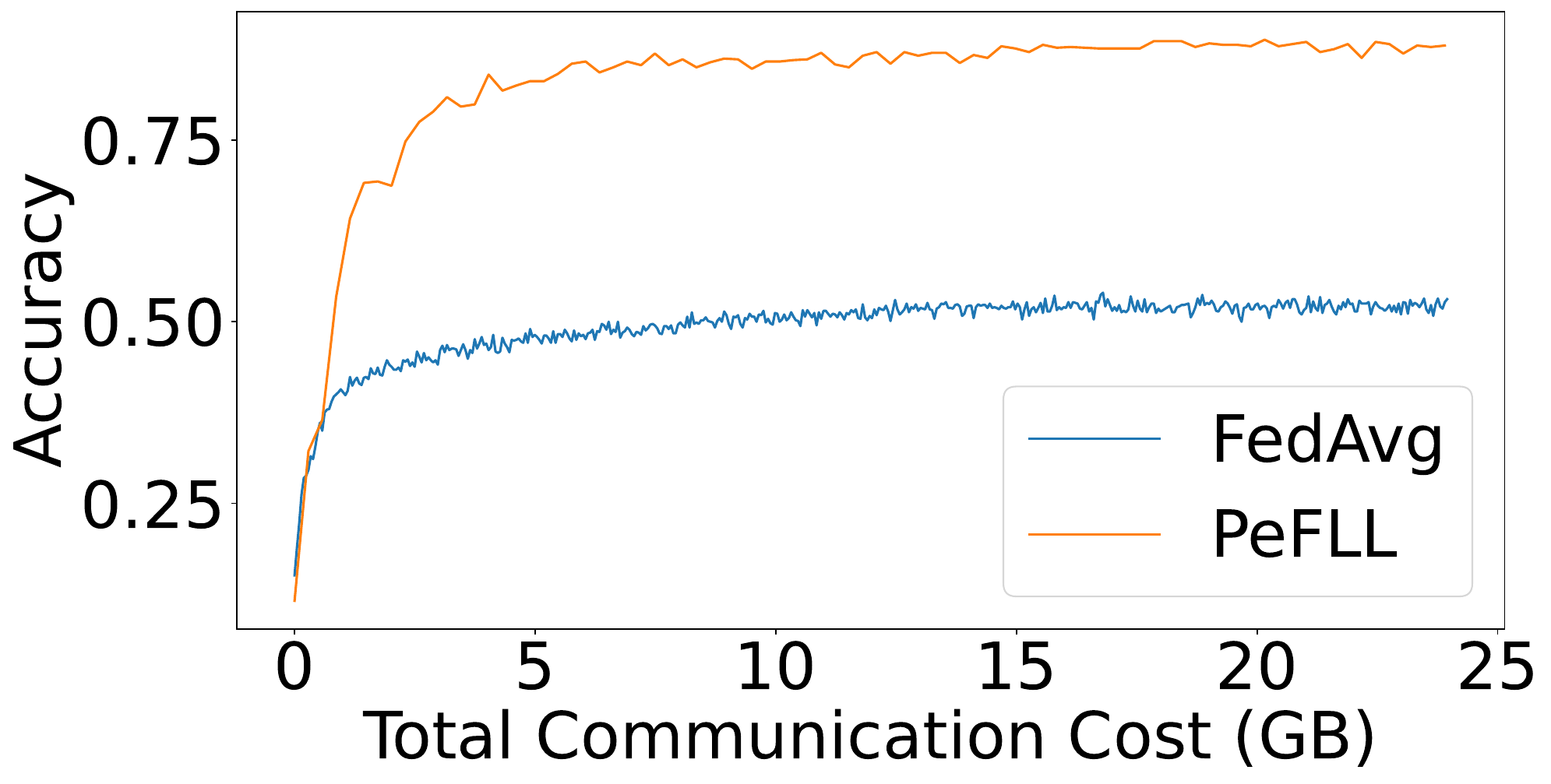}
\caption{{Test Accuracies for \emph{train} clients (top) and \emph{test} clients (bottom) during different steps of the training for \acronym and FedAvg, for the CIFAR10 dataset, 1000 clients. \acronym's accuracy is higher than FedAvg's with respect to the number of rounds (which roughly reflect the computational cost for the clients) as well as the communication cost.}}\label{fig:CIFAR10_1000}
\end{figure}

\begin{figure}[t]
\includegraphics[height=.25\linewidth]{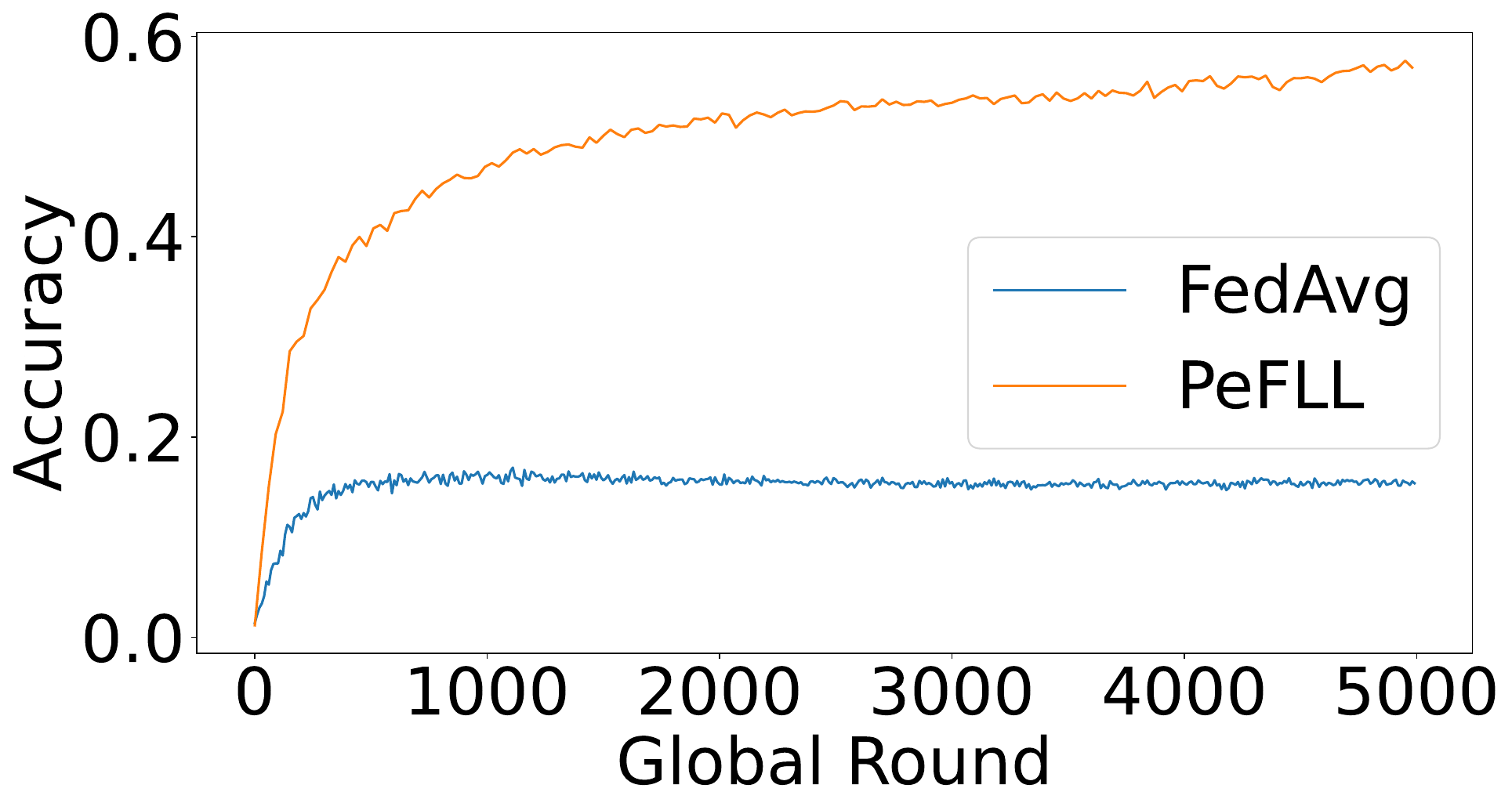}
\quad
\includegraphics[height=.25\textwidth]{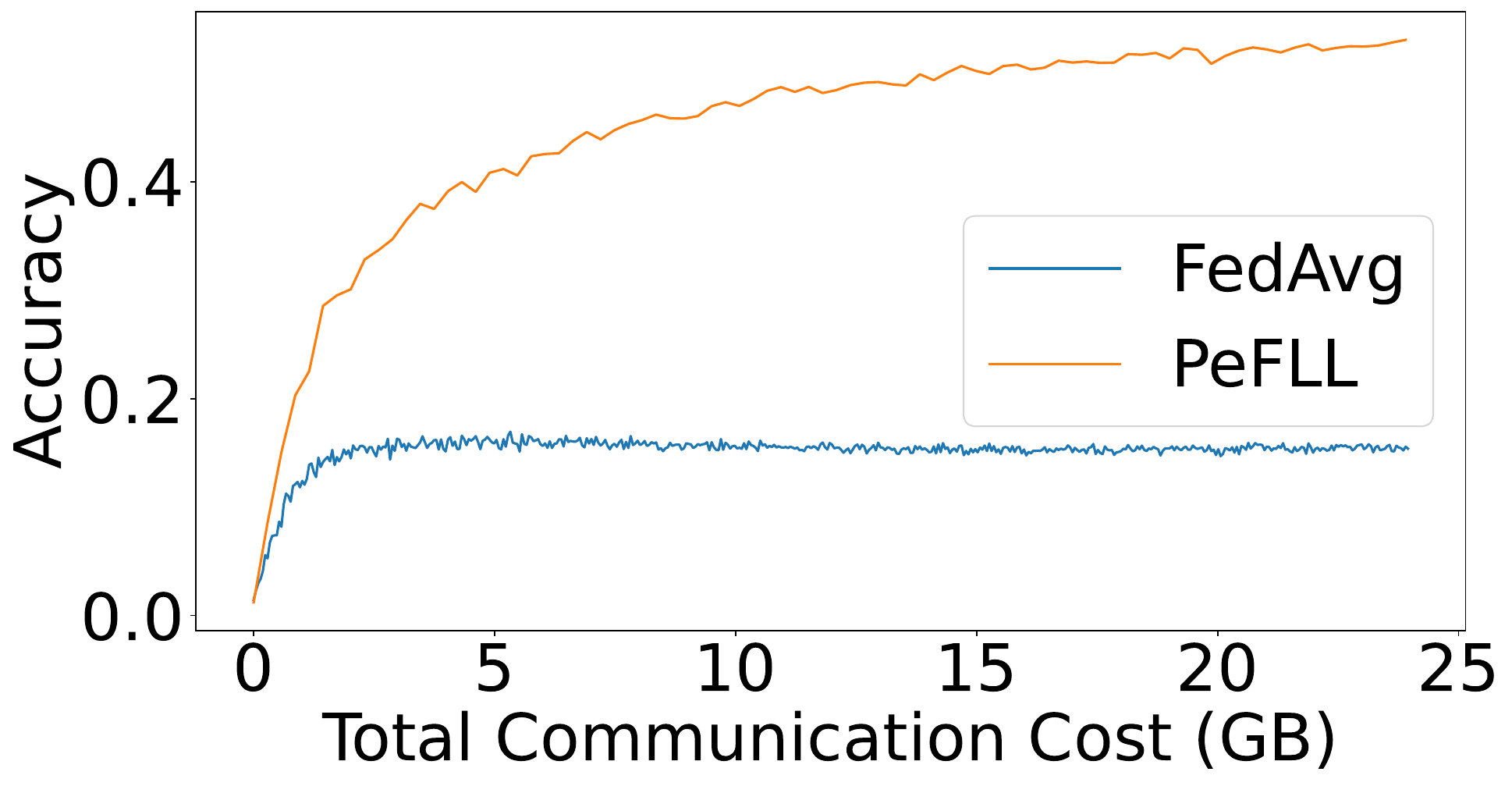}
\\
\includegraphics[height=.25\textwidth]{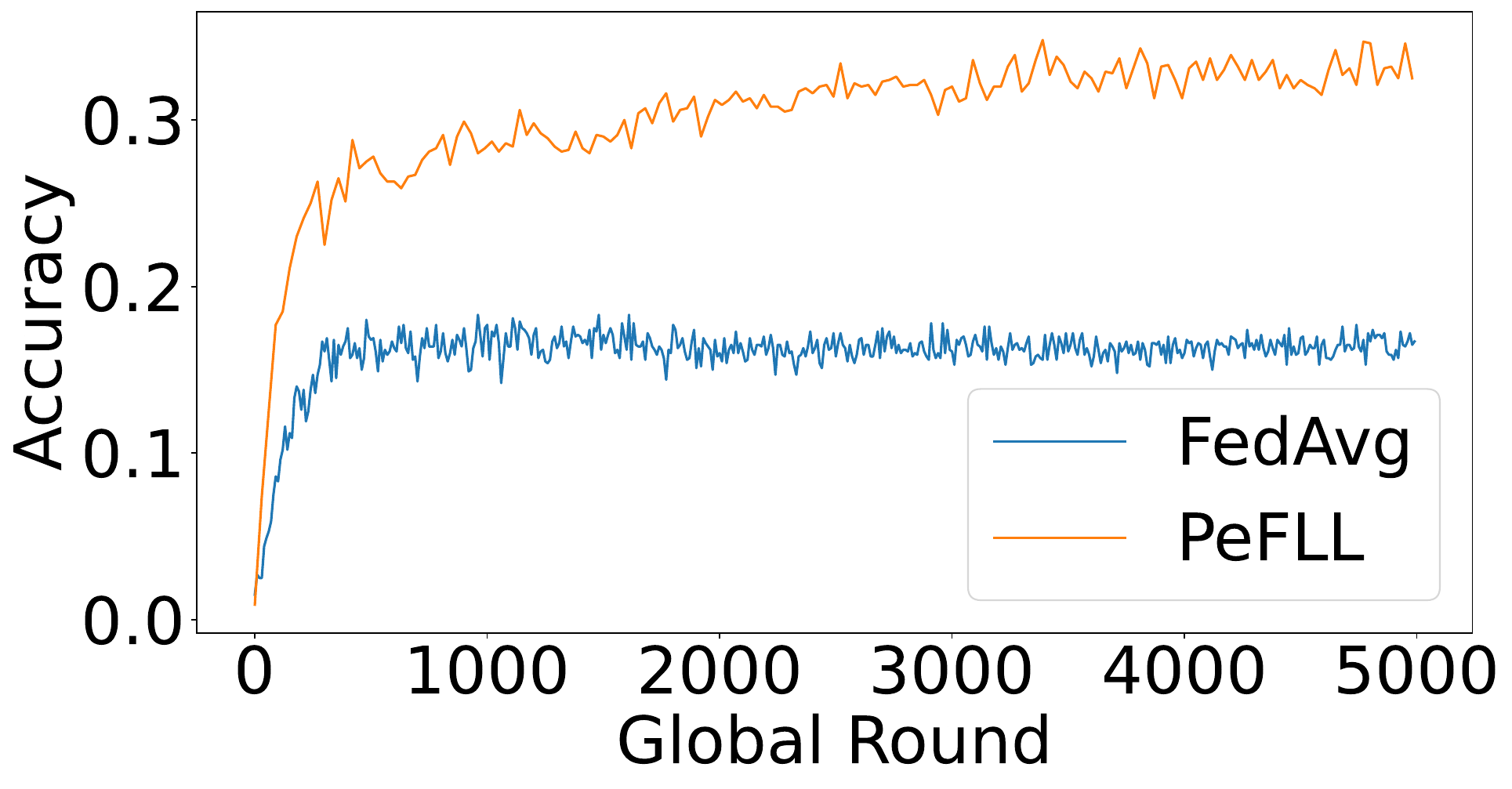}
\quad
\includegraphics[height=.25\linewidth]{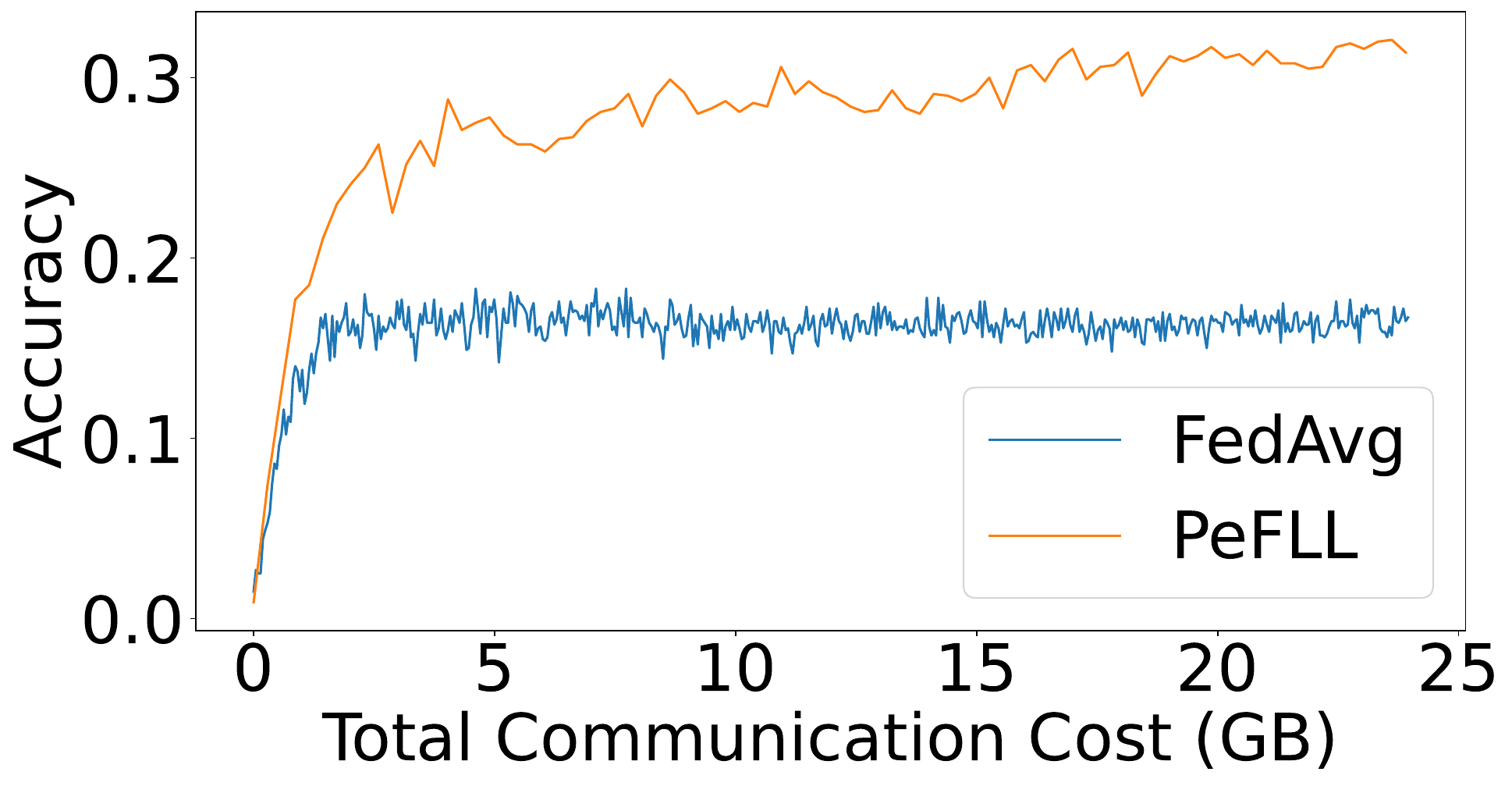}
\caption{{Test Accuracies for \emph{train} clients (top) and \emph{test} clients (bottom) during different steps of the training for \acronym and FedAvg, for the CIFAR100 dataset, 100 clients. \acronym's accuracy is higher than FedAvg's with respect to the number of rounds (which roughly reflect the computational cost for the clients) as well as the communication cost.}}\label{fig:CIFAR100_100}
\end{figure}

\begin{figure}[t]
\includegraphics[height=.25\linewidth]{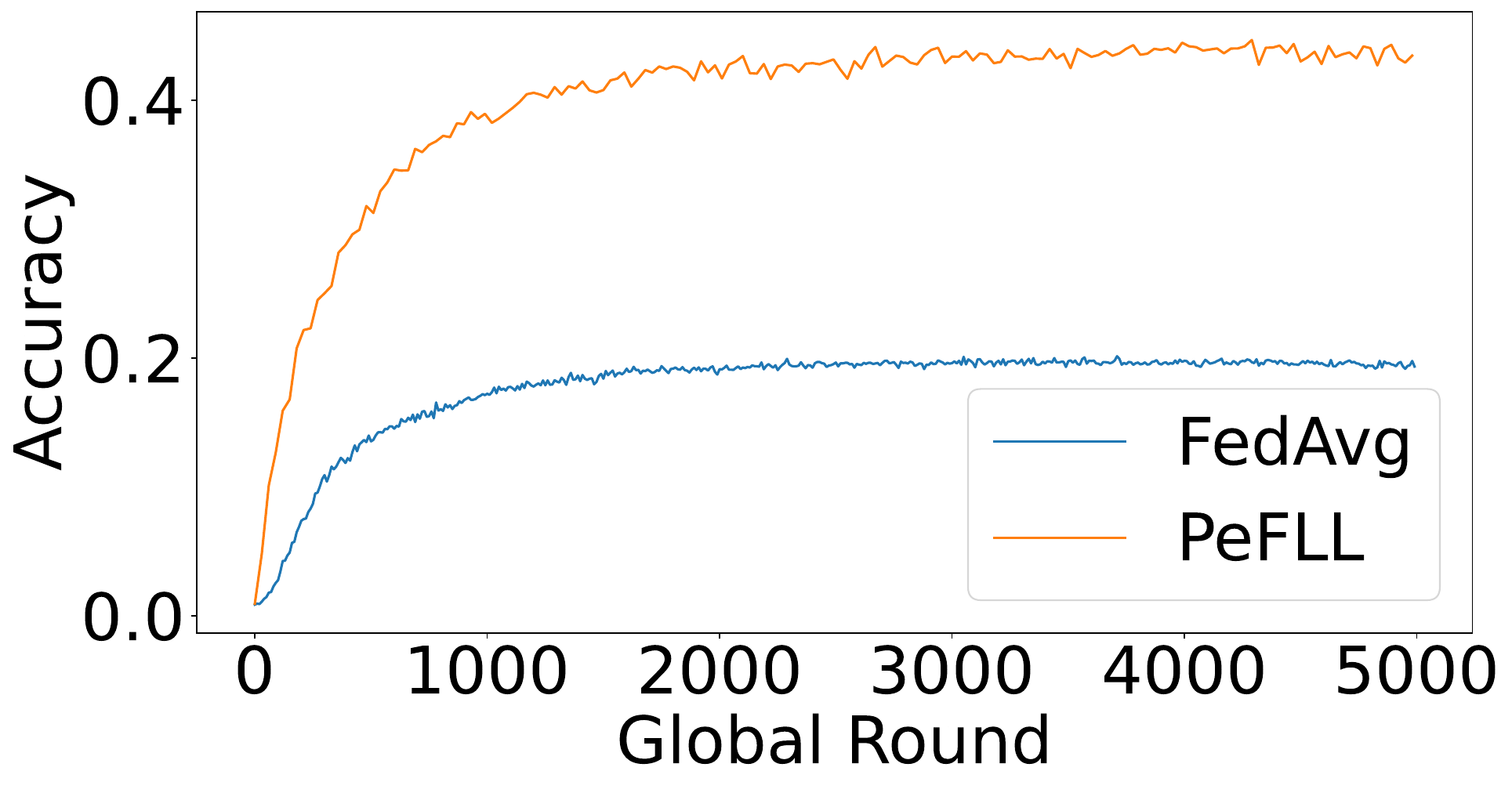}
\quad
\includegraphics[height=.25\textwidth]{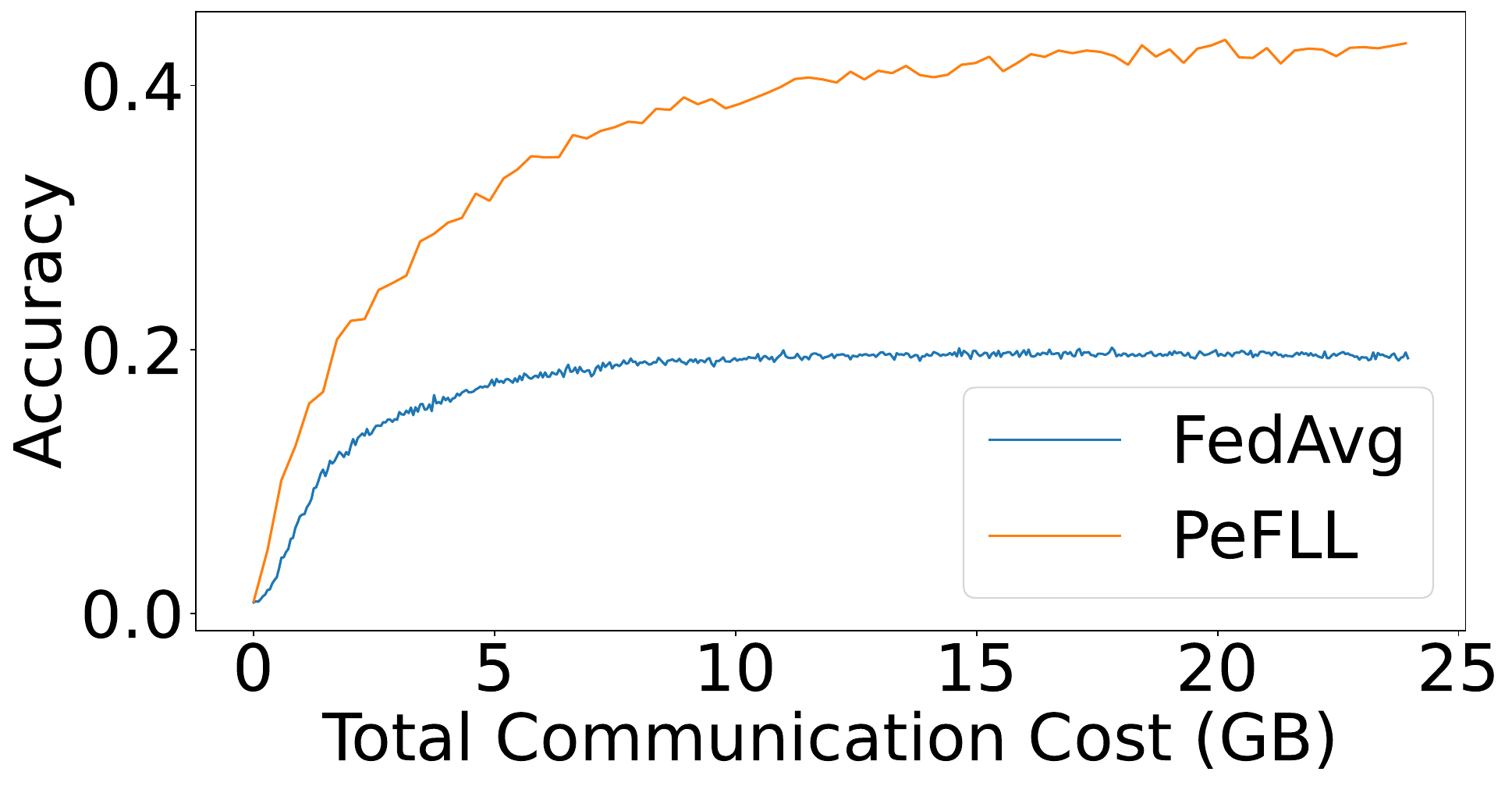}
\\
\includegraphics[height=.25\textwidth]{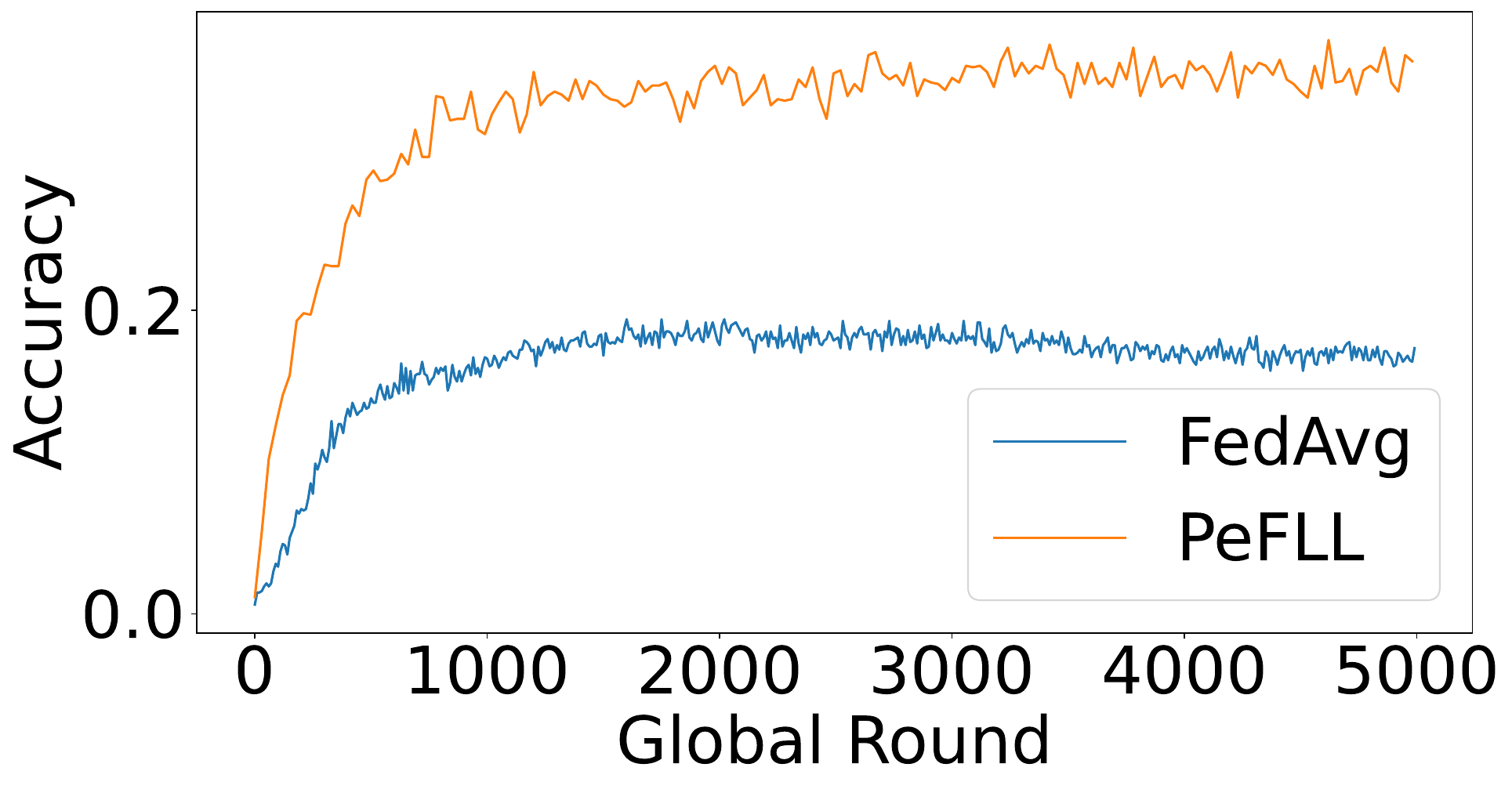}
\quad
\includegraphics[height=.25\linewidth]{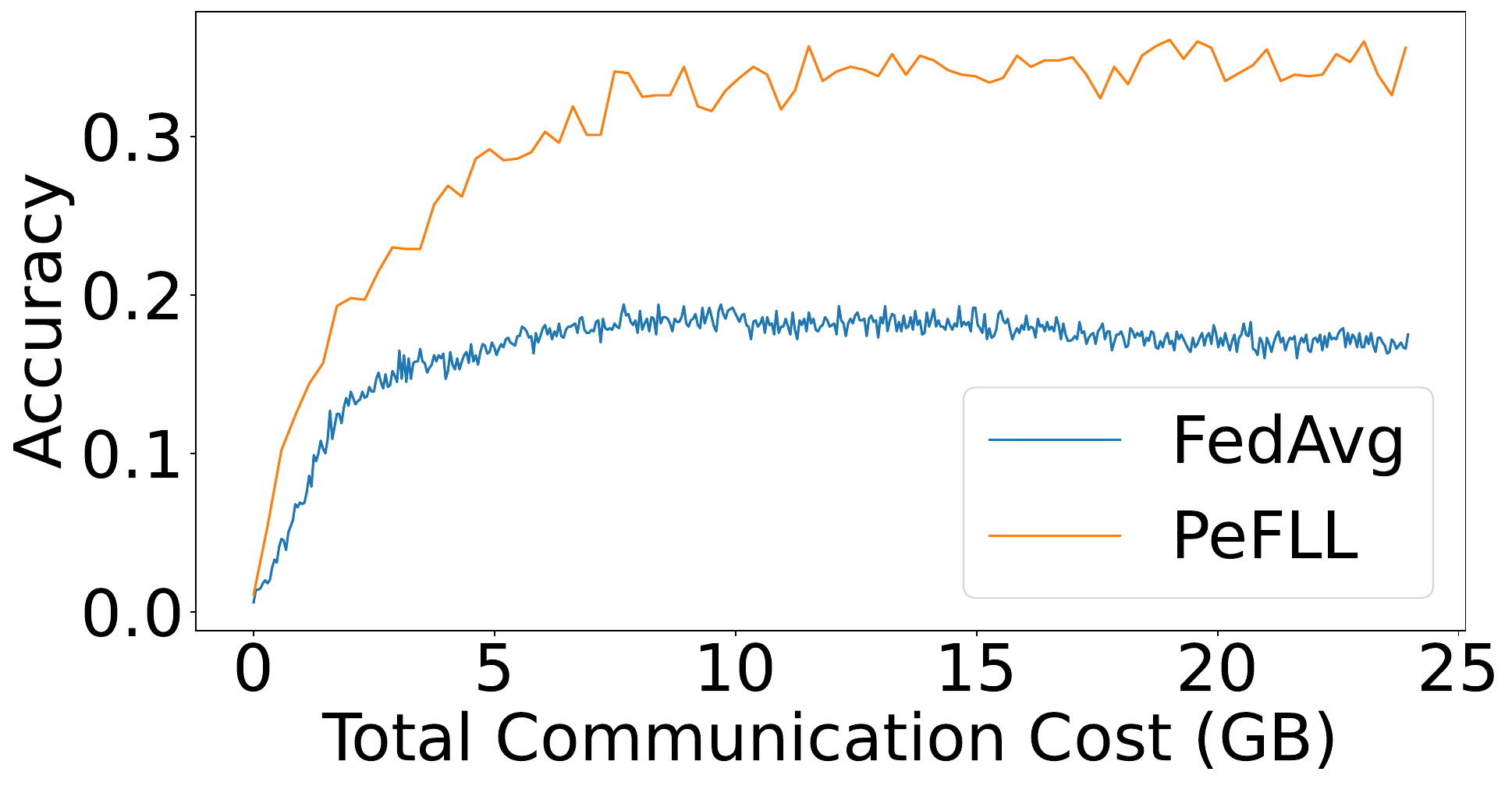}
\caption{{Test Accuracies for \emph{train} clients (top) and \emph{test} clients (bottom) during different steps of the training for \acronym and FedAvg, for the CIFAR100 dataset, 500 clients. \acronym's accuracy is higher than FedAvg's with respect to the number of rounds (which roughly reflect the computational cost for the clients) as well as the communication cost.}}\label{fig:CIFAR100_500}
\end{figure}

\begin{figure}[t]
\includegraphics[height=.25\linewidth]{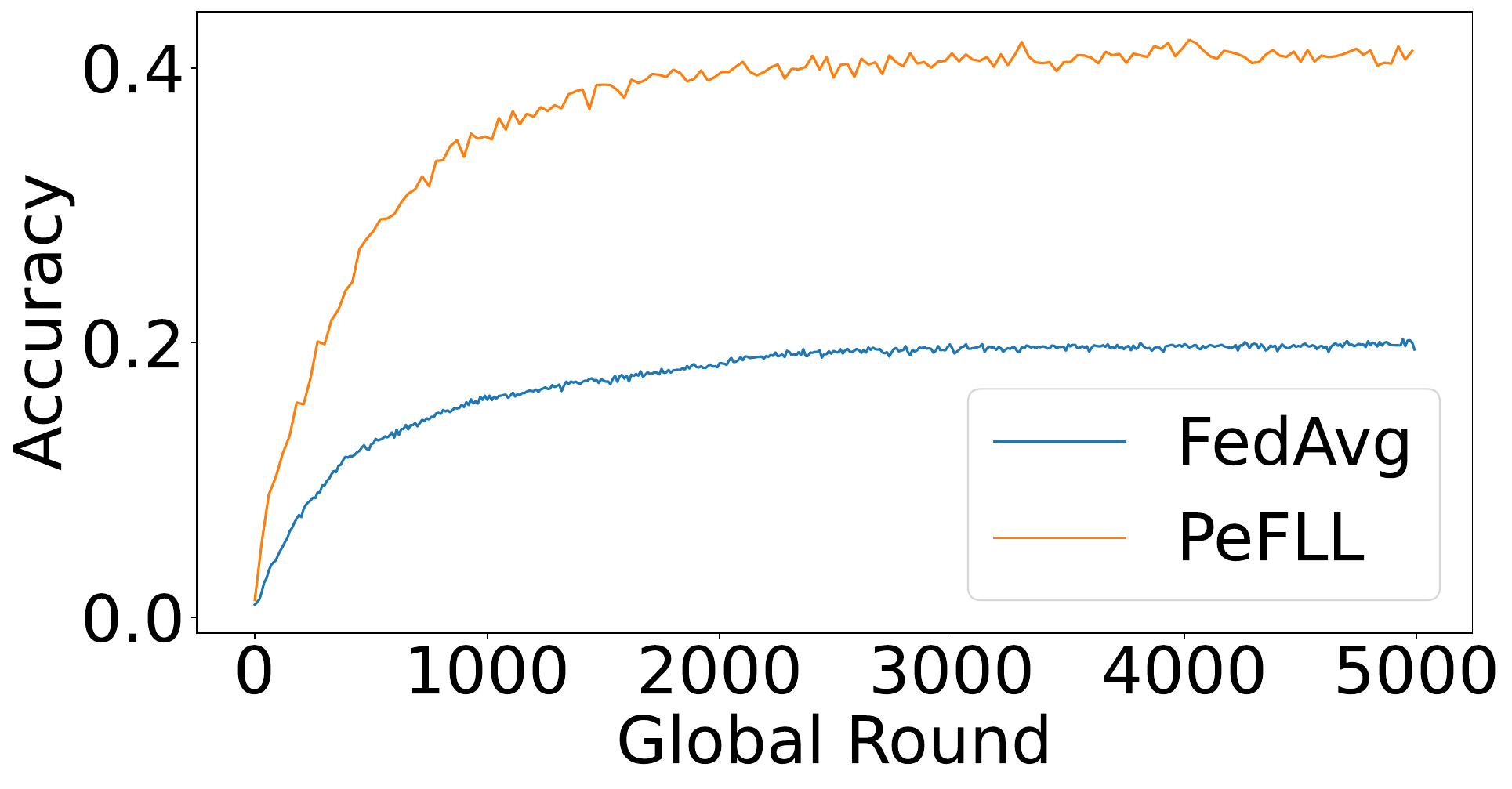}
\quad
\includegraphics[height=.25\textwidth]{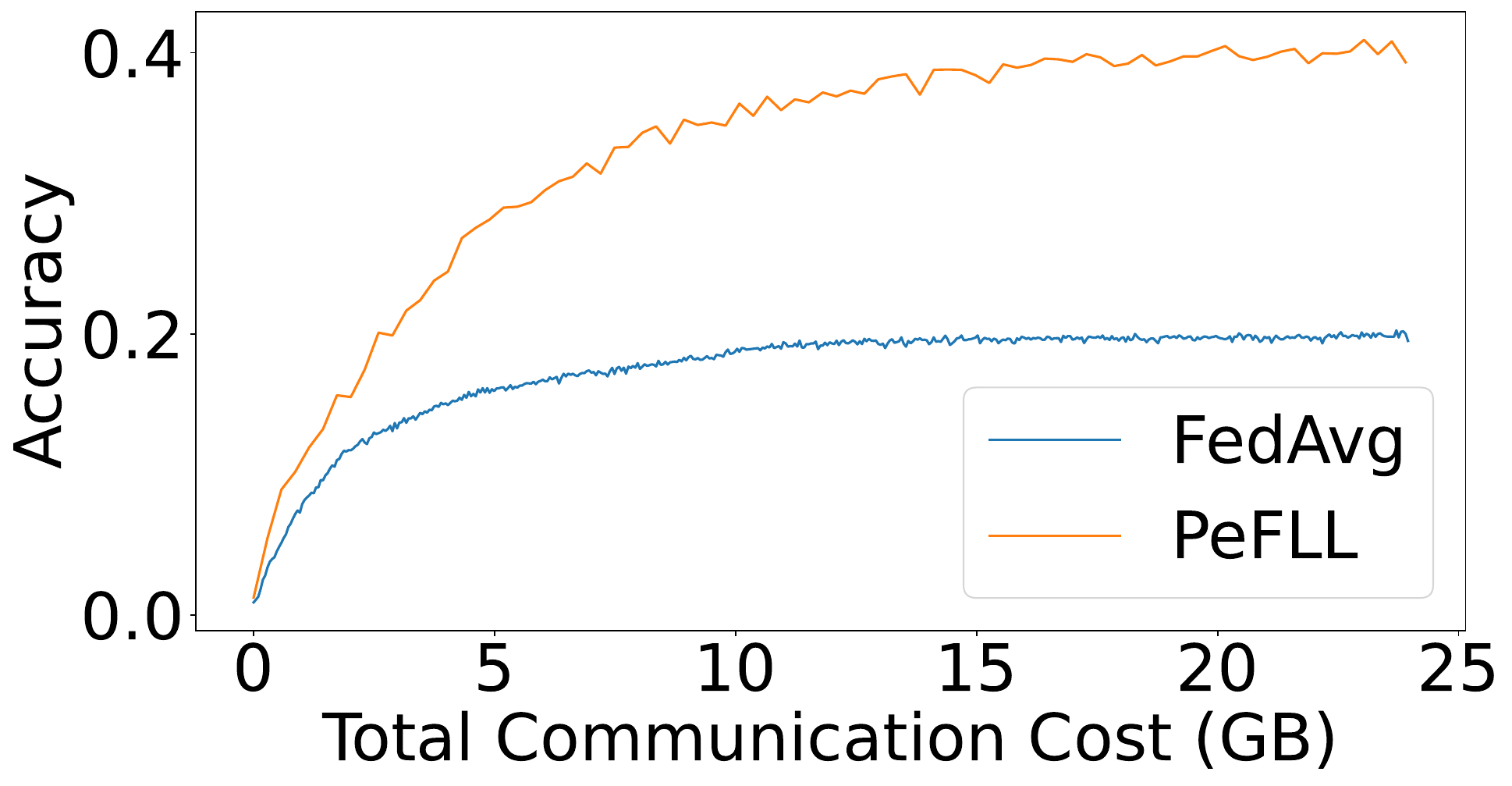}
\\
\includegraphics[height=.25\textwidth]{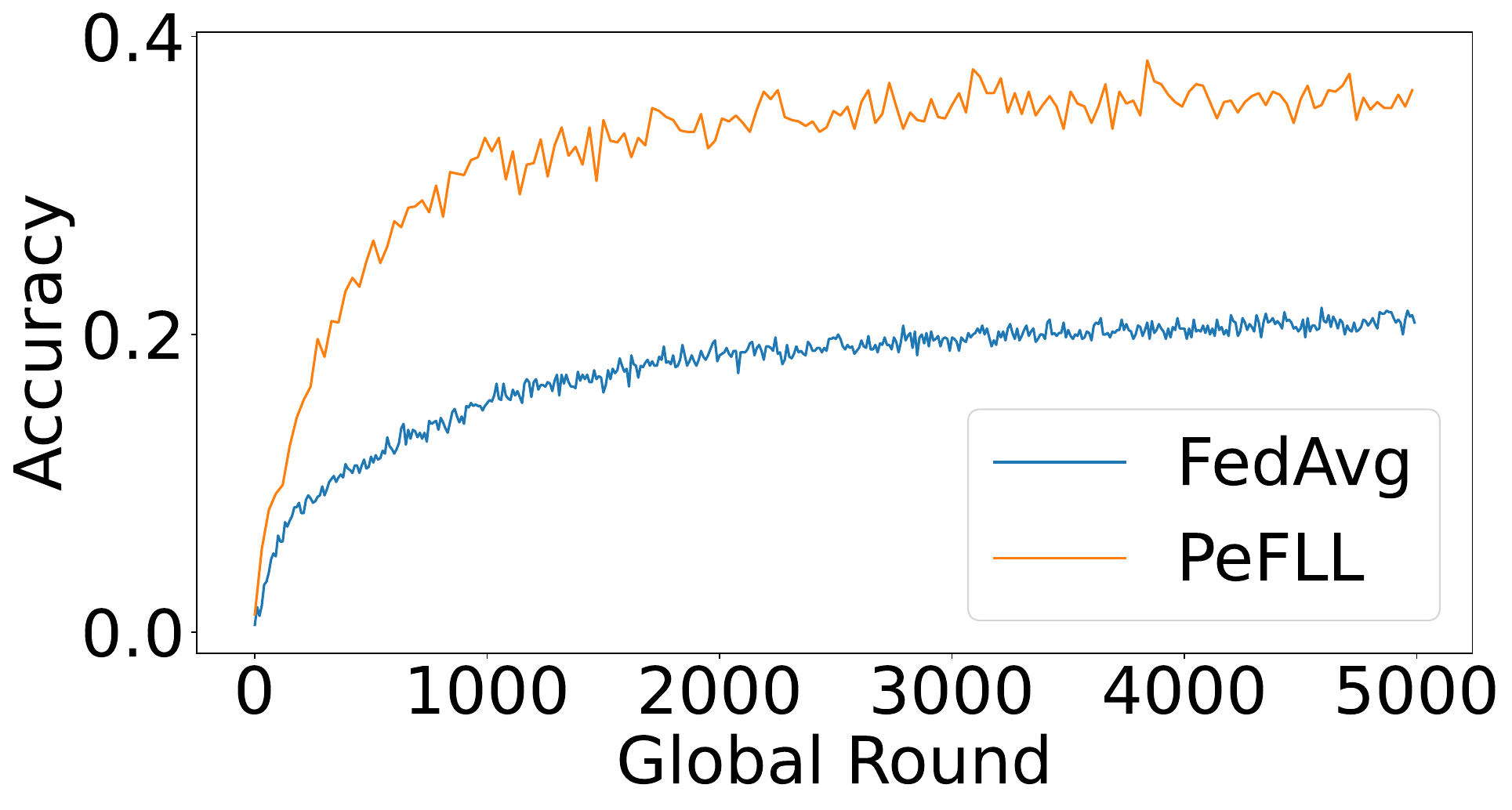}
\quad
\includegraphics[height=.25\linewidth]{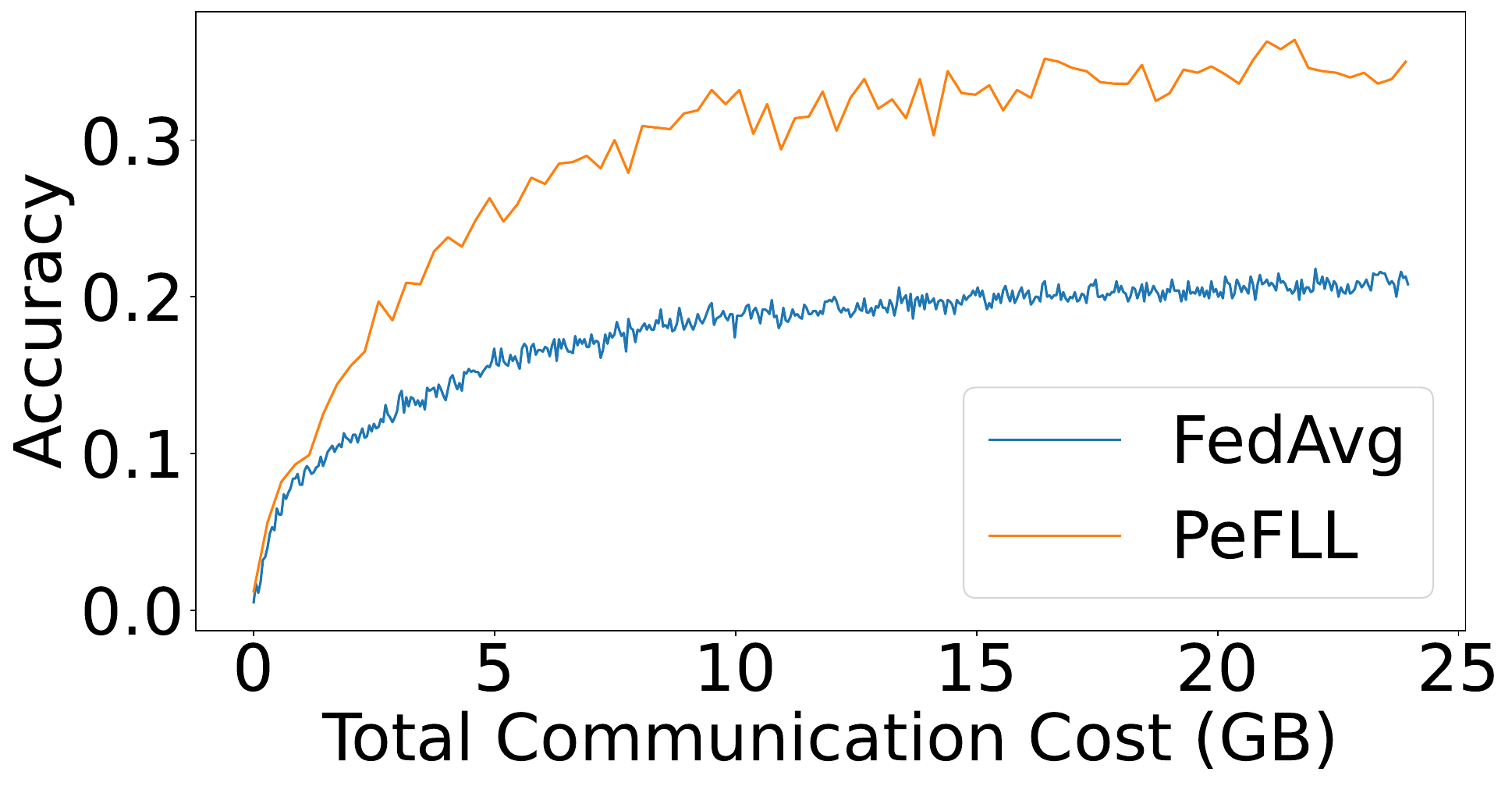}
\caption{{Test Accuracies for \emph{train} clients (top) and \emph{test} clients (bottom) during different steps of the training for \acronym and FedAvg, for the CIFAR100 dataset, 1000 clients. \acronym's accuracy is higher than FedAvg's with respect to the number of rounds (which roughly reflect the computational cost for the clients) as well as the communication cost.}}\label{fig:CIFAR100_1000}
\end{figure}

\begin{figure}[t]
\includegraphics[height=.25\linewidth]{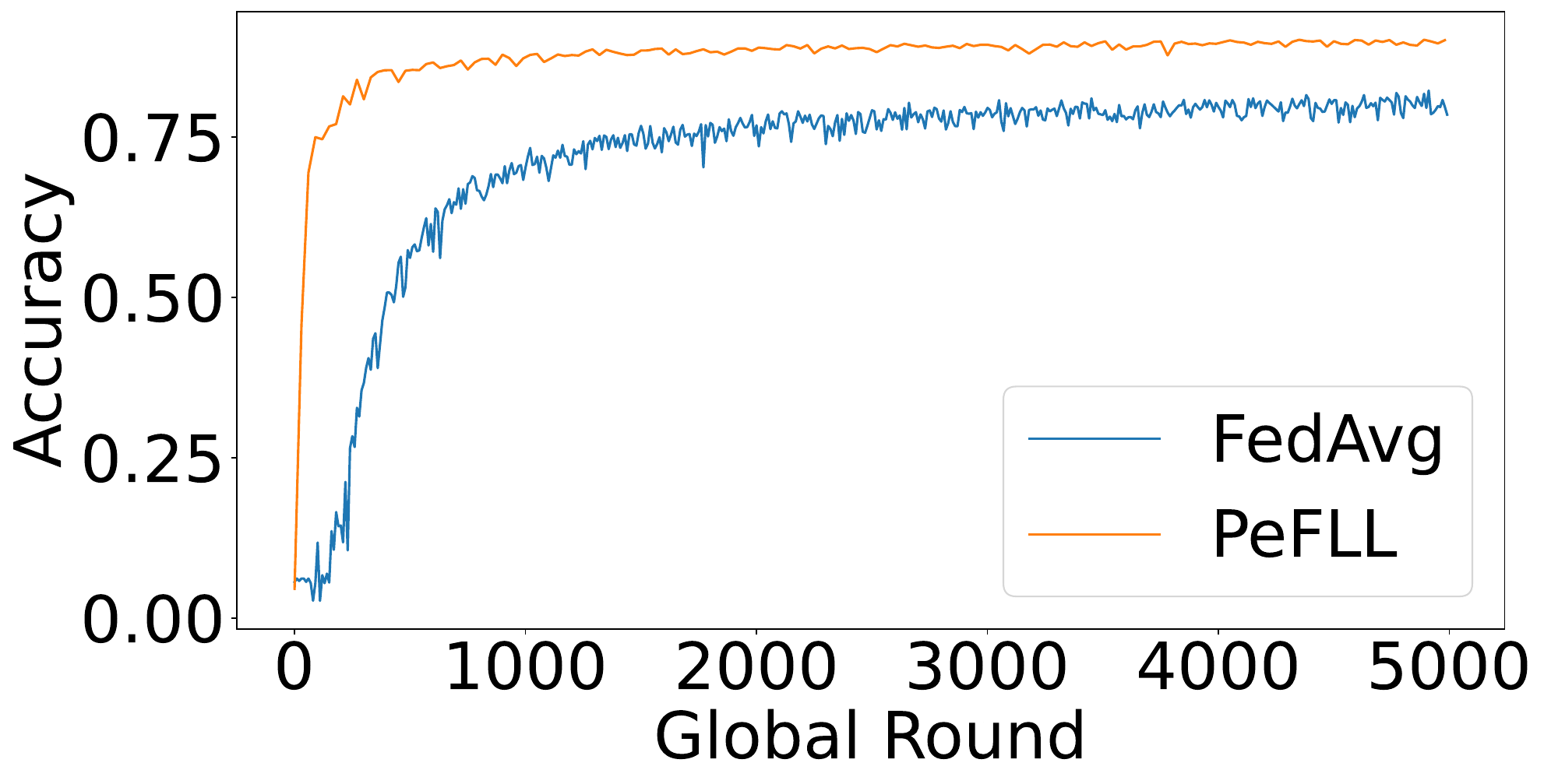}
\quad
\includegraphics[height=.25\textwidth]{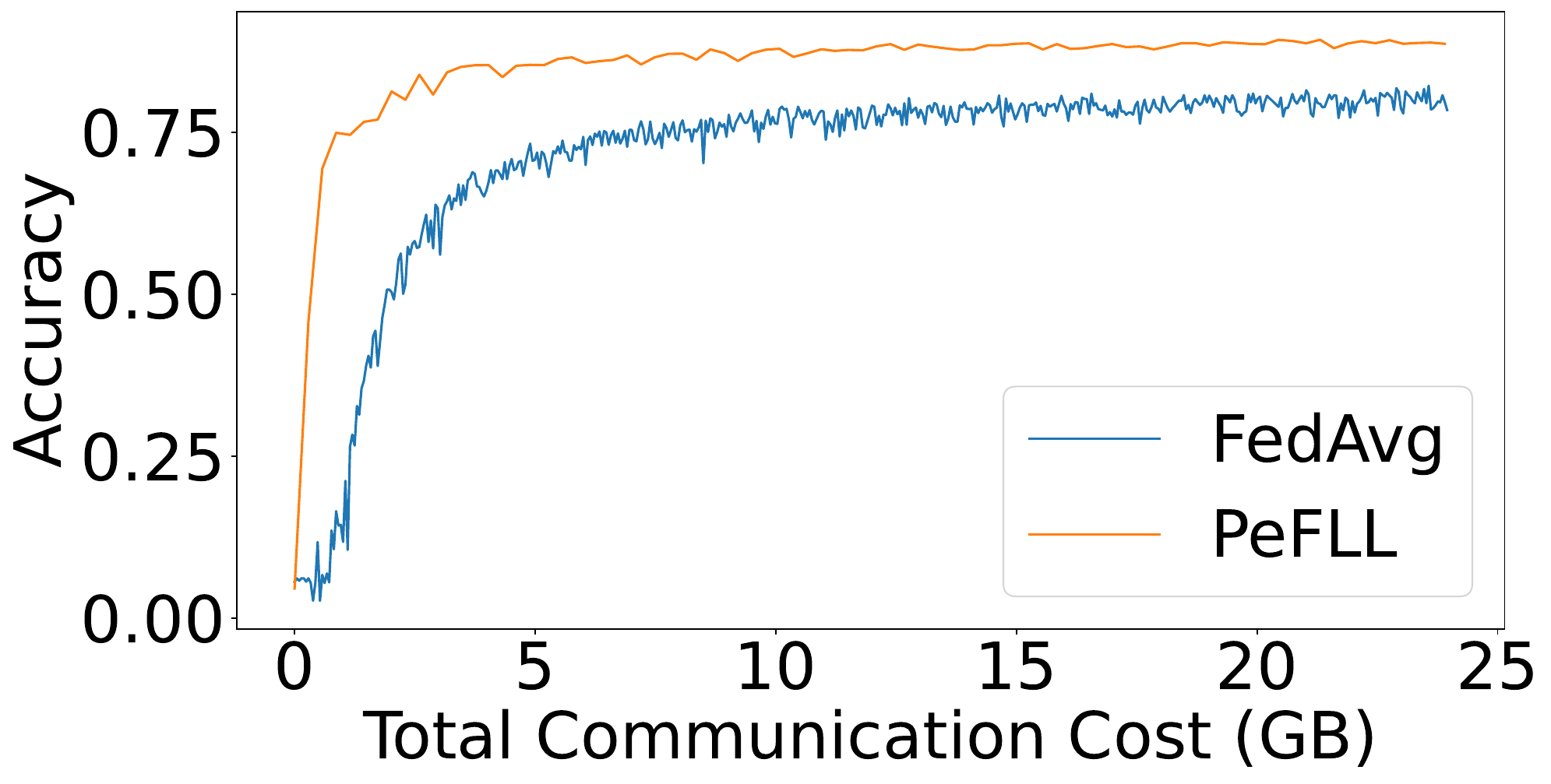}
\\
\includegraphics[height=.25\textwidth]{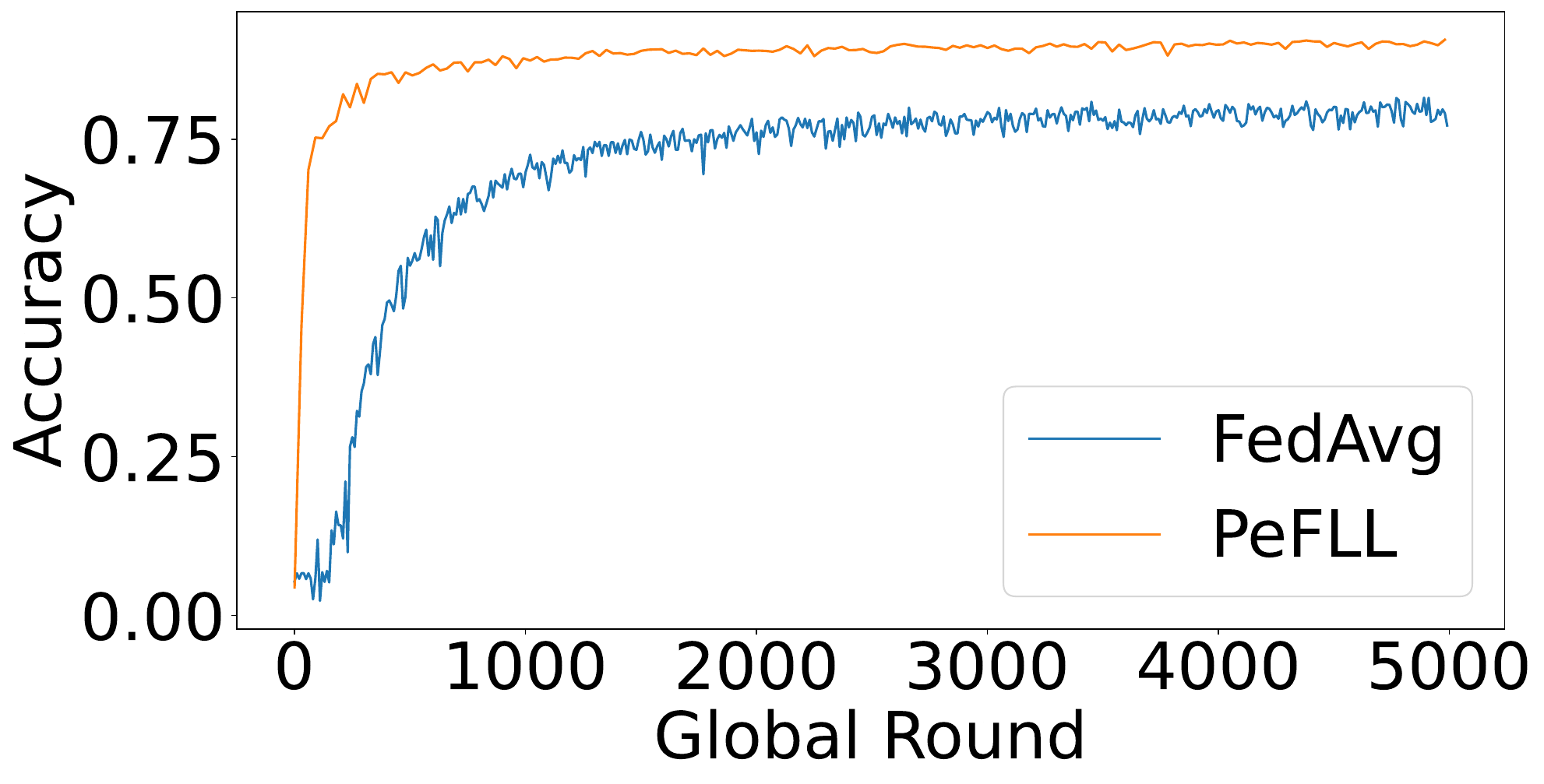}
\quad
\includegraphics[height=.25\linewidth]{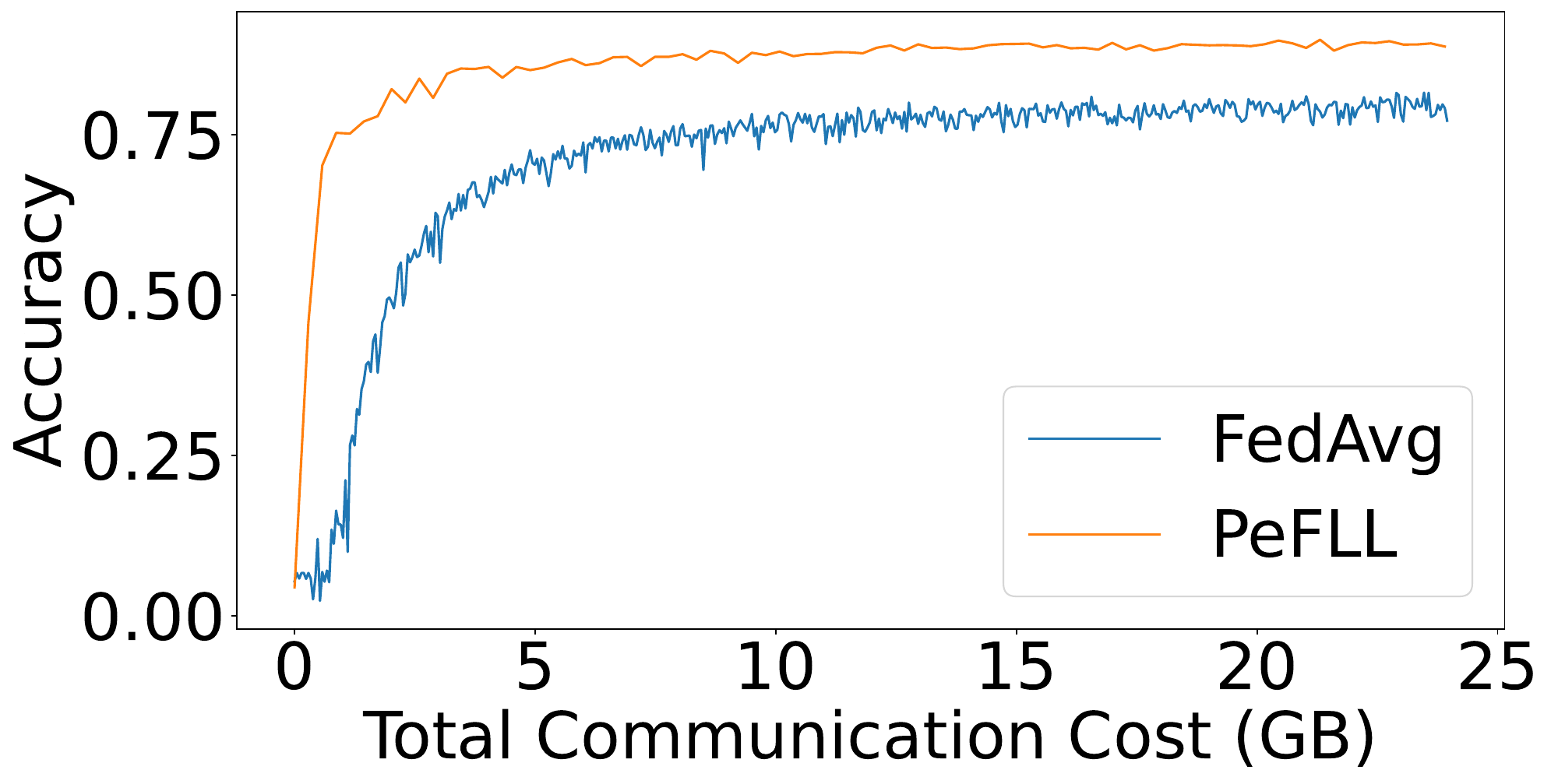}
\caption{{Test Accuracies for \emph{train} clients (top) and \emph{test} clients (bottom) during different steps of the training for \acronym and FedAvg, for the FEMNIST dataset. \acronym's accuracy is higher than FedAvg's with respect to the number of rounds (which roughly reflect the computational cost for the clients) as well as the communication cost.}}\label{fig:femnist}
\end{figure}

\paragraph{Integrating known client label sets}
In the vanilla setting of Section~\ref{sec:experiments},
only the client training data is used to create a model.
By design, each model was a multi-class classifier across
all possible classes, even if only a subset of those 
are relevant for the client.
However, it is also reasonable to assume that clients might
have prior knowledge which of the output classes they 
actually want to predict.
This information can be included by \emph{label masking (LM)},
in which the client overwrites the output probabilities 
of classes that cannot occur for it by $0$, such that 
they will never be predicted. 
We report results for this in Table~\ref{tab:label_masking}.
One can see that this has minor effects for CIFAR10, but the 
results for CIFAR100 are clear improved. We attribute this to 
the fact that the larger label set and the smaller number of 
examples per class increase the chances of spurious classes 
being predicted.

\nprounddigits{1}

\begin{table}[t]
\begin{tabular}{ |c|ccc|ccc| }
 \hline
 &\multicolumn{3}{c|}{CIFAR10} & \multicolumn{3}{c|}{CIFAR100}  \\
 \hline
\#trn.clients & $n=90$ & $450$ & $900$ & $90$ & $450$ & $900$ \\ 
\hline
 \acronym & $\np{88.96} \pm \np{0.84}$ & $\np{88.88} \pm \np{0.38}$ & $\np{87.78} \pm \np{0.41}$ & $\np{56.0} \pm \np{0.34}$ & $\np{43.2} \pm \np{0.79}$ & $\np{40.9} \pm \np{0.60}$ \\
 \hline
 \acronym  + LM & $\np{88.90} \pm \np{0.69}$ & $\np{88.94} \pm \np{0.31}$ & $\np{88.42} \pm \np{0.37}$ & $\np{59.46} \pm \np{0.74}$ & $\np{53.23} \pm \np{0.62}$ & $\np{51.38} \pm \np{0.25}$ \\
 \hline
\end{tabular}
\begin{tabular}{ |c|ccc|ccc| } \hline
 &\multicolumn{3}{c|}{CIFAR10} & \multicolumn{3}{c|}{CIFAR100} \\
 \hline
\#trn.clients & $n=90$ & $450$ & $900$ & $90$ & $450$ & $900$ \\ 
\hline
   \acronym & $\np{89.30} \pm \np{2.16}$ & $\np{88.93} \pm \np{0.62}$ & $\np{88.53} \pm \np{0.25}$ & $\np{35.17} \pm \np{1.43}$ & $\np{38.10} \pm \np{0.41}$ & $\np{38.37} \pm \np{0.90}$\\
 \hline
 { \acronym + LM} & $\np{88.90} \pm \np{2.40}$ & $\np{88.23} \pm \np{0.93}$ & $\np{87.43} \pm \np{1.25}$ & $\np{55.60} \pm \np{1.63}$ & $\np{53.37} \pm \np{0.12}$ & $\np{51.10} \pm \np{0.24}$ \\
 \hline
\end{tabular}
\caption{ Test accuracy on CIFAR10 and CIFAR100 for training clients (top) and test clients (bottom) without and with label masking (LM).}\label{tab:label_masking}
\end{table}

\color{black}

\subsection{Additional figures}
We include additional figures for the correlation between the distances between client descriptors and the true client distributional distances as described in Section \ref{sec:experiments}. In Figures \ref{fig:correlation_100_sup} - \ref{fig:correlation_1000_sup} we show the results of the experiment for different numbers of clients, 100, 500 and 1000. As we can see in all cases high correlation is achieved. With more clients we obtain higher levels of correlation and less variability in the results.

\begin{figure}[h]
    \centering
    \includegraphics[scale=0.48]{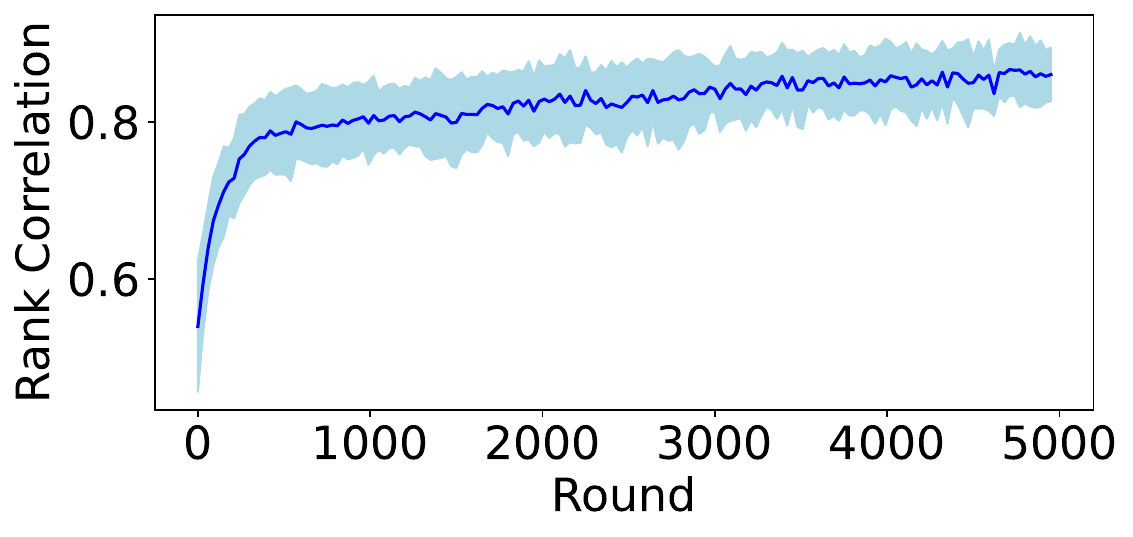}
    \caption{Correlation between \emph{client descriptor similarity} obtained from the embedding network and \emph{ground truth similarity} over the course of training. CIFAR10, 100 Clients.}
    \label{fig:correlation_100_sup}
\end{figure}
\begin{figure}[h]
    \centering
    \includegraphics[scale=0.48]{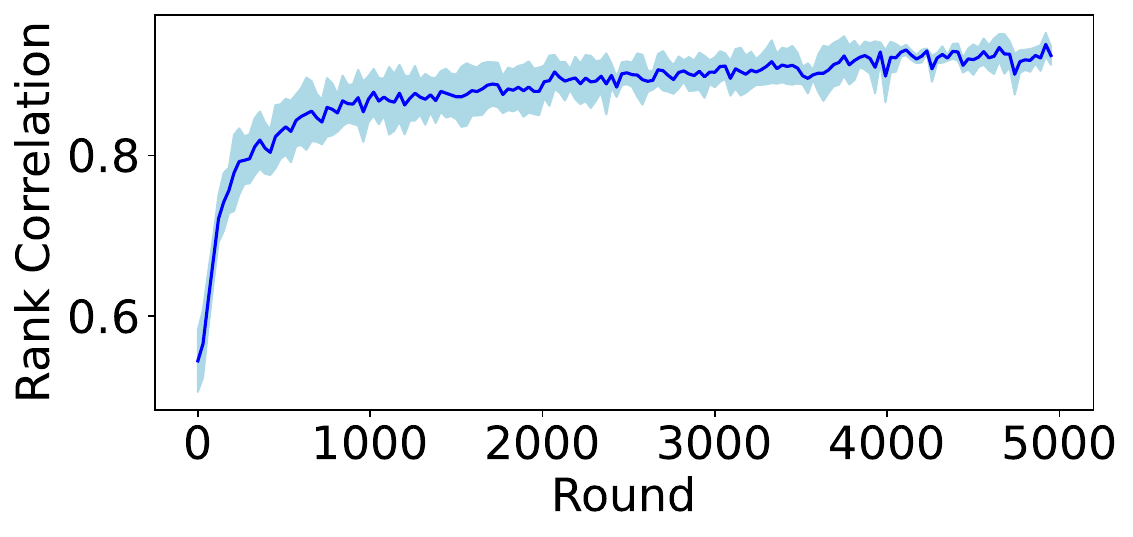}
    \caption{Correlation between \emph{client descriptor similarity} obtained from the embedding network and \emph{ground truth similarity} over the course of training. CIFAR10, 500 clients.}
    \label{fig:correlation_500_sup}
\end{figure}
\begin{figure}[h]
    \centering
    \includegraphics[scale=0.48]{cifar10_1000_clients_embedding_correlations.pdf}
    \caption{Correlation between \emph{client descriptor similarity} obtained from the embedding network and \emph{ground truth similarity} over the course of training. CIFAR10, 1000 clients.}
    \label{fig:correlation_1000_sup}
\end{figure}

\subsection{Ablation Studies}

We include several ablation studies to explore how different design choices impact \acronym's performance.

\paragraph{Hypernetwork Size.} We investigate how the size of the hypernetwork impacts performance. We keep the same hypernetwork architecture as described in Table \ref{tab:hypernet} but vary the number of hidden layers. We have a small network (S) with only a single hidden layer, a medium network (M) with 3 hidden layers and a large network (L) with 5 hidden layers. We evaluate on CIFAR10 and CIFAR100 using the same experimental setup as for previous experiments, with 100, 500 and 1000 clients. The results are shown in Table \ref{tab:hnet_size}. As we can see in the results, for CIFAR10 all 3 networks are reasonably similar with a slight drop in performance observed for the S network. For CIFAR100 it is more pronounced, with a modest increase in performance for the L network and a noticeable drop for the S network.

{\begin{table}[t]
    \centering
    \begin{tabular}{|c|ccc|ccc|}
        \hline
        &\multicolumn{3}{c|}{CIFAR10} & \multicolumn{3}{c|}{CIFAR100} \\
        \hline
        \#Clients & 100 & 500 & 1000 & 100 & 500 & 1000 \\
        \hline
        S & $88.7 \pm 0.3$ & $88.2 \pm 0.4$ & $87.7 \pm 0.4$ & $57.9 \pm 0.3$ & $51.9 \pm 0.6$ & $47.4 \pm 1.4$ \\
        M & $88.9 \pm 0.7$ & $88.9 \pm 0.3$ & $88.1 \pm 0.3$ & $59.4 \pm 0.2$ & $53.1 \pm 0.4$ & $50.8 \pm 0.5$ \\
        L & $88.7 \pm 0.8$ & $88.8 \pm 0.3$ & $88.2 \pm 0.6$ & $59.8 \pm 0.1$ & $53.8 \pm 0.2$ & $52.6 \pm 1.0$ \\
        \hline
    \end{tabular}
    \caption{Test accuracy on CIFAR10 and CIFAR100 using different sizes of hypernetworks, small (S), medium (M) and large (L).}
    \label{tab:hnet_size}
\end{table}}

\paragraph{Regularization Parameters.}
We examine the impact of varying the regularization parameters of the objective \eqref{eq:objective} on the performance of \acronym. We consider different values for the regularization parameters and study the values in $\{0, 5 \times 10^{-5}, 5 \times 10^{-3}, 5 \times 10^{-1}\}$ for client regularization parameter, and values in $\{0, 5 \times 10^{-5}, 5 \times 10^{-3}\}$ for embedding and hypernetwork regularization. The results are shown in Table~\ref{tab:reg_parameter}. 
As one can see, \acronym's performance is rather robust over the regularization strengthh.
However, consistent with the theory, modest amounts of hypernetwork regularization parameter help preventing overfitting on training clients and improve client-level generalization. 
Similarly, modest amounts of client regularization improve sample-level generalization.
As can be expected, very large values of this parameter 
leads to underfitting. 

\begin{table}[t]
    \centering
    \begin{tabular}{|c|ccc|}
    \hline
     & \multicolumn{3}{c|}{$\lambda_h = \lambda_v$} \\
    \hline
    $\lambda_\theta$& 0 & $10^{-5}$ & $10^{-3}$ \\
    \hline
    $0$ & $87.4 \pm 1.1$ & $87.4 \pm 1.0$ & $88.4 \pm 0.6$ \\
    $5\cdot 10^{-5}$ & $88.7 \pm 0.7$ & $88.7 \pm 0.6$ & $88.8 \pm 0.3$ \\
    $5\cdot 10^{-3}$ & $87.8 \pm 0.8$ & $87.6 \pm 0.8$ & $87.5 \pm 0.5$ \\
    $5\cdot 10^{-1}$ & $82.5 \pm 1.5$ & $83.8 \pm 1.1$ & $86.2 \pm 0.4$ \\
    \hline
    \end{tabular}
    \caption{Test accuracy on CIFAR10 with 100 clients for different settings of the regularization parameters. On the left is the regularization of the client network, and along the top is the regularization of the hypernetwork and embedding network.}
    \label{tab:reg_parameter}
\end{table}

\begin{table}[t]
    \centering
    \begin{tabular}{|c|ccc|ccc|}
        \hline
        &\multicolumn{3}{c|}{CIFAR10} & \multicolumn{3}{c|}{CIFAR100} \\
        \hline
        \#Clients & 100 & 500 & 1000 & 100 & 500 & 1000 \\
        \hline
        MLP & $88.5 \pm 1.2$ & $87.8 \pm 1.1$ & $87.5 \pm 2.1$ & $59.1 \pm 0.2$ & $53.4 \pm 0.4$ & $51.2 \pm 0.5$ \\
        CNN & $88.9 \pm 0.7$ & $88.9 \pm 0.3$ & $88.4 \pm 0.4$ & $59.4 \pm 0.2$ & $54.4 \pm 0.3$ & $53.2 \pm 0.3$ \\
        \hline
    \end{tabular}
    \caption{Test accuracy on CIFAR10 and CIFAR100 using different embedding network architectures. MLP is a single hidden layer fully connected network and CNN is a LeNet-style ConvNet.}
    \label{tab:embedding_network_ablation}
\end{table}

\paragraph{Embedding network architecture.}
We also study the effect of different architecture choices for the embedding network. 
We compare the performance of a single-layer MLP network layer and a LeNet-style 
ConvNet for CIFAR10 and CIFAR100. As we can see in Table~\ref{tab:embedding_network_ablation} 
the CNN embedding outperforms the MLP in all experiments.

\clearpage
\section{Appendix - Generalization}\label{app:generalization}
\addtolength{\abovedisplayskip}{4pt} 
\addtolength{\belowdisplayskip}{4pt}

In this section we prove a new generalization bound 
for the generic learning-to-learn setting that, 
in particular, will imply Theorem~\ref{thm:pacbayes} 
as a special case. 
Before formulating and proving our main results, we remind
the reader of the PAC-Bayesian learning framework 
 \citep{mcallester1998some}
and its use for obtaining guarantees for learning-to-learn.

\bigskip

\paragraph{PAC-Bayesian learning and learning-to-learn} 
In standard PAC-Bayesian learning, we are given a set of 
possible models, $\H$, and a prior distribution over 
these $P\in\M(\H)$, where $\M(\cdot)$ denotes the set 
of probability measures over a base set. 
Given a dataset, $S$, \emph{learning a model} means 
constructing a new (posterior) distribution, $Q\in\M(\H)$,
which is meant to give high probability to models 
with small loss.
The posterior distribution, $Q$, induces a stochastic 
predictor: for any input, one samples a specific 
model, $f\sim Q$, and outputs the result of this model 
applied to the input. 
Note that $Q$ can in principle be a Dirac delta 
distribution at a single model, resulting in a 
deterministic predictor.
However, for large (uncountably infinite) model such a 
choice typically does not lead to strong generalization 
guarantees. 
For conciseness of notation, in the following we do not
distinguish between the distributions over models and 
their stochastic predictors.

The quality of a stochastic predictor, $Q$, on a data 
point $(x,y)$ is quantified by its expected loss, 
$\ell_{(x,y)}(Q)=\E_{f\sim Q}\ell(x,y,f)$.
From this, we define the empirical error on a dataset, 
$S$, as 
$\frac{1}{|S|}\sum_{(x,y)\in S}\ell_{(x,y)}(Q)$,
and its expected loss with respect to a data distribution, 
$D$, as $\E_{(x,y)\sim D}\ell_{(x,y)}(Q)$.
Ordinary PAC-Bayesian generalization bounds provide 
high-probability upper bounds to the expected loss 
of a stochastic predictors by the corresponding 
empirical loss as well as some complexity terms,
which typically include the Kullback-Leibler divergence
between the chosen posterior distribution and the 
original (training-data independent) prior, $\KL(Q||P)$ 
\citep{mcallester1998some}.

Typically, the posterior distribution is not chosen 
arbitrarily, but it is the result of a 
\emph{learning algorithm}, $A:(\X\times\Y)^m\to \M(\H)$, 
which takes as input the training data and, potentially,
the prior distribution. 
The idea of \emph{learning-to-learn} (also called \emph{meta-learning} or \emph{lifelong learning} in the literature) 
is to \emph{learn the learning algorithm}~\citep{baxter2000model,pentina2015lifelong}.

To study this theoretically, we adopt the setting where $n$ learning tasks
is available, which we write as tuples, $(S_i,D_i)$, for $i\in\{1,\dots,n\}$, 
each with a data set $S_i\subset\X\times\Y$ that is sampled from a corresponding data distribution $D_i\in\M(\X\times\Y)$. For simplicity, we assume that all datasets are of the same size, $m$. 
We assume tasks are sampled \iid from a \emph{task environment}, $\T$,
which is simply a data distribution over such tuples. 

Again adopting the PAC-Bayesian framework, we assume that 
a data-independent (meta-)prior distribution over learning 
algorithms is available, $\P\in\M(\A)$, where $\A$ is the 
set of possible algorithms, and the goal is use the observed 
task data to construct a (meta-)posterior distribution, $\Q\in\M(\A)$.
As before, the resulting procedure is stochastic: at every 
invocation, the system samples an algorithm $A\sim\Q$. 
It applies this to the training data, obtaining a posterior 
distribution $A(S)$, and it makes predictions by sampling 
models accordingly, $f\sim A(S)$.

Analogously to above situation, we define two measures of 
quality for such a stochastic algorithms. Its 
\emph{empirical loss on the data of the observed clients}:
\begin{align}
\her(\Q) &:= \frac{1}{n}\sum_{i=1}^n\E_{A\sim\Q}\frac{1}{m}\sum_{j=1}^m\ell(x_{j}^{i}, y_{j}^{i}, A(S_i)),
\intertext{and its \emph{expected loss on future clients,}}
\er(\Q) &:= \E_{(D,S)\sim\T}\E_{A\sim\Q}\E_{(x,y)\sim D}\ell(x,y, A(S)).
\end{align}

The following theorem provides a connection between both quantities. 
\begin{theorem}\label{thm:pacbayes_general}
Let $\Pc\in\M(\A)$ and $P\in\M(\H)$ be a meta-prior and a prior 
distribution, respectively, which are chosen independently of the observed 
training data, $S_1,\dots,S_n$.
Assume that the loss function is bounded in $[0,M]$.
Then, for all $\delta\geq 0$ it holds with probability at least $1-\delta$ 
over the sampling of the datasets, that for all distributions $\Q\in\M(\A)$ 
over algorithms,
\begin{align}
    \er(\Q) &\leq \her(\Q) \!+\!M\sqrt{ \frac{\KL(\Q||\Pc) \!+\! \log(\frac{4\sqrt{n}}{\delta})}{2n}} \! +\!  M\!\!\!\!\E_{A \sim \Q}\!  \sqrt{\frac{\sum_{i=1}^{n} \KL(A(S_i) || P) \! +\!  \log(\frac{8mn}{\delta}) + 1}{2mn}}
\label{eq:pacbayes_general}
\end{align}
\end{theorem}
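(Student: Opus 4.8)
\emph{Proof sketch.}
The plan is to split the gap $\er(\Q)-\her(\Q)$ into a \emph{meta-level} part, quantifying generalization from the observed clients to future clients, and a \emph{within-client} part, quantifying for each observed client generalization from its finite sample to its true distribution. To this end I introduce the intermediate quantity
\[
\ter(\Q) := \frac{1}{n}\sum_{i=1}^n \E_{A\sim\Q}\E_{(x,y)\sim D_i}\ell(x,y,A(S_i)),
\]
so that $\er(\Q)-\her(\Q) = \bigl(\er(\Q)-\ter(\Q)\bigr) + \bigl(\ter(\Q)-\her(\Q)\bigr)$. I would bound each bracket on its own event of failure probability $\delta/2$ and combine by a union bound.

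For the meta-level part, view the tasks $(S_1,D_1),\dots,(S_n,D_n)$ as an i.i.d.\ sample from $\T$ and regard an algorithm $A$ as a single ``hypothesis'' whose loss on a task $(S,D)$ is $g(A;(S,D)):=\E_{(x,y)\sim D}\ell(x,y,A(S))\in[0,M]$. Then $\er(\Q)=\E_{(S,D)\sim\T}\E_{A\sim\Q}\,g(A;(S,D))$, and $\ter(\Q)$ is exactly the empirical average of $g$ over the $n$ observed tasks. Applying a standard McAllester-type PAC-Bayesian bound \citep{mcallester1998some} with the data-independent meta-prior $\Pc$ and meta-posterior $\Q$ over $\A$ gives, with probability at least $1-\delta/2$ over the tasks and simultaneously for all $\Q$,
\[
\er(\Q)\;\le\;\ter(\Q) + M\sqrt{\frac{\KL(\Q\|\Pc)+\log(4\sqrt{n}/\delta)}{2n}}.
\]

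For the within-client part, the key idea is to treat the $n$ clients \emph{jointly} rather than one at a time. Consider the product hypothesis space $\H^{n}$ with the data-independent product prior $P^{\otimes n}$, where a tuple $(f_1,\dots,f_n)$ incurs loss $\ell(x^i_j,y^i_j,f_i)$ on the $j$-th point of client $i$; the total sample size is $mn$, the points are independent (conditionally on $D_1,\dots,D_n$), and the loss is bounded by $M$. A single PAC-Bayesian bound over this product space holds, with probability at least $1-\delta/2$ over the draw of $S_1,\dots,S_n$, \emph{uniformly over all} posteriors on $\H^{n}$; instantiating it at the data-dependent product posterior $\bigotimes_{i=1}^n A(S_i)$ — a legitimate choice for every algorithm $A$, for which $\KL\bigl(\bigotimes_i A(S_i)\,\big\|\,P^{\otimes n}\bigr)=\sum_i \KL(A(S_i)\|P)$ — yields, for every $A$ at once,
\begin{align*}
\frac{1}{n}\sum_{i=1}^n \E_{f\sim A(S_i)}\E_{(x,y)\sim D_i}\ell(x,y,f)
&\le \frac{1}{n}\sum_{i=1}^n \frac{1}{m}\sum_{j=1}^m \E_{f\sim A(S_i)}\ell(x^i_j,y^i_j,f) \\
&\quad + M\sqrt{\frac{\sum_{i=1}^n \KL(A(S_i)\|P)+\log(8mn/\delta)+1}{2mn}}.
\end{align*}
Taking $\E_{A\sim\Q}$ of both sides turns the left side into $\ter(\Q)$, the first term on the right into $\her(\Q)$, and the last term into $M\,\E_{A\sim\Q}\sqrt{\cdots}$ exactly as in the claim. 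Adding the two displayed inequalities and the union bound over the two $\delta/2$-events proves the theorem; instantiating $\A$, $\Pc$, $P$ with the Gaussian families of Section~\ref{sec:generalization} and evaluating the KL terms as quadratic forms then specializes it to Theorem~\ref{thm:pacbayes}.

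I expect the within-client step to be the main obstacle. Bounding the clients separately and then averaging the resulting square-root penalties would, by concavity of $\sqrt{\cdot}$, only give the strictly weaker $M\,\E_{A\sim\Q}\frac1n\sum_i\sqrt{(\KL(A(S_i)\|P)+\cdots)/(2m)}$; obtaining the stated form, with the \emph{sum} of KL terms under one root and denominator $2mn$, forces the product-space construction above, together with the point that a PAC-Bayesian bound over $\H^{n}$ is uniform over all posteriors and hence applies to $\bigotimes_i A(S_i)$ for every $A$ simultaneously. A secondary technical point is that the $mn$ points are independent but not identically distributed across clients, so one should apply the version of the PAC-Bayesian bound valid for independent samples (conditioning on $D_1,\dots,D_n$); this, together with the choice of how to split $\delta$ and the elementary estimate $\log(4\sqrt{mn}/\delta)\le\log(8mn/\delta)$, accounts for the extra additive constant in the logarithmic term, and I would not belabor those calculations.
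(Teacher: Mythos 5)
Your proposal matches the paper's proof in both structure and key ideas: the same decomposition via the intermediate quantity $\ter(\Q)$, a standard PAC-Bayes bound for the meta-level gap $\er(\Q)-\ter(\Q)$, and for the within-client gap a single change-of-measure argument over the product space $\H^n$ with prior $P^{\otimes n}$ that holds uniformly over all product posteriors and is then instantiated at $\bigotimes_i A(S_i)$ before taking $\E_{A\sim\Q}$ --- which is precisely the mechanism the paper highlights in its post-proof discussion as the reason no $\KL(\Q\|\Pc)/m$ term appears. The only detail you gloss over is the exact origin of the additive $+1$ (in the paper it arises from a union bound over an integer grid of the free parameter $\lambda\in\{1,\dots,4mn\}$ followed by optimizing $\lambda$, not from the $\delta$-splitting), but this does not affect the correctness of your argument.
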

where $\KL$ denotes the Kullback-Leibler divergence.

\begin{proof}
The proof resembles previous proofs for PAC-Bayesian learning-to-learn learning, 
but it differs in some relevant steps. We discuss this in a \emph{Discussion} section after the proof itself.

First, we observe that \eqref{eq:pacbayes_general} depends linearly in $M$. 
Therefore, if suffices to prove the case of $M=1$, and the general case 
follows by applying the theorem to $\ell/M$ with subsequent rescaling.
Next, we define the \emph{expected loss for the $n$ training clients}, $\ter(\Q)$, 
as an intermediate object:
\begin{align}
\ter(\Q) =  \frac{1}{n} \sum_{i=1}^{n} \E_{A \sim \Q} \E_{(x,y) \sim D_i} \ell (x, y, A(S_i)).
\end{align}
To bound $\er(\Q) - \her(\Q)$, we divide it in two parts: $\er(\Q) - \ter(\Q)$ 
and $\ter(\Q) - \her(\Q)$. We then bound each part separately and combine the results.

\paragraph{Part I} For the former, for any $h\in\H$ let $\Delta_i(h)=\frac{1}{m}\sum_{j=1}^m  \ell (x^i_j, y^i_j, h) -\E_{(x,y)\sim D_i}\ell (x, y, h)$, and $\Delta_i(Q)=\E_{h\in Q}\Delta_i(h)$,
such that $\ter(\Q) - \her(\Q) = \E_{A\sim \Q}  \frac{1}{n}\sum_{i=1}^n\Delta_i(A(S_i))$.
Then, for any $\lambda>0$,
\begin{align}
        &\ter(\Q) - \her(\Q) - \frac{1}{\lambda} \E_{A\sim \Q} \KL(A(S_1) \times \dots \times A(S_n) || P \times \dots \times P)
        \\&  \le \sup_{Q_1, ..., Q_n} \frac{1}{n} \sum_{i=1}^{n} \Delta_i(Q_i)
        - \frac{1}{\lambda} \KL(Q_1 \times \dots \times Q_n || P \times ... \times P)
        \\& \le \frac{1}{\lambda} \log \E_{h_1 \sim P} \dots \E_{h_n \sim P} \prod_{i=1}^{n} e^{\frac{\lambda}{n} \Delta_i(h_i)},
        \label{eq:10}
\end{align}
where the second inequality is due to the \emph{change of measure inequality}  \citep{seldin2012pac}.

Because for each $i=1,\dots,n$, $P$ is independent of $S_i, ..., S_n$, 
we have
\begin{align}    
\E_{S_1, \dots, S_n} \E_{h_1 \sim P} \dots \E_{h_n \sim P} \prod_{i=1}^{n} e^{\frac{\lambda}{n} \Delta_i(h_i)} = \E_{S_1} \E_{h_1 \sim P} e^{\frac{\lambda}{n} \Delta_1(h_1)} \dots \E_{S_n}\E_{h_n \sim P} e^{\frac{\lambda}{n} \Delta_n(h_n)}
\end{align}
Each $\Delta_i(h_i)$ is a bounded random variable with support in 
an interval of size $1$. By Hoeffding's lemma we have 
\begin{align}
    \E_{S_i}\E_{h_i \sim P} e^{\frac{\lambda}{n} \Delta_i(h_i)} \le e^{\frac{\lambda^2}{8n^2m}}.
\end{align}
Therefore,
\begin{align}    
\E_{S_1, \dots, S_n} \E_{h_1 \sim P} \dots \E_{h_n \sim P} \prod_{i=1}^{n} e^{\frac{\lambda}{n} \Delta_i(h_i)}\le e^{\frac{\lambda^2}{8nm}},
\end{align}
and by Markov's inequality, for any $\epsilon\in\R$,
\begin{align}
    \mathbb{P}_{S_1, \dots, S_n}\Big(\E_{h_1 \sim P} \dots \E_{h_n \sim P} \prod_{i=1}^{n} e^{\frac{\lambda}{n} \Delta_i(h_i)} \ge e^\epsilon\Big) \le e^{\frac{\lambda^2}{8nm} - \epsilon}
\end{align}
Hence
\begin{align}
        &\mathbb{P}_{S_1, \dots, S_n}\Big(\exists{Q_1, ..., Q_n}: \frac{1}{n} \sum_{i=1}^{n} \Delta_i(Q_i)
        - \frac{1}{\lambda} \KL(Q_1 \times \dots \times Q_n || P \times ... \times P) \ge \frac{1}{\lambda} \epsilon\Big) \le e^{\frac{\lambda^2}{8nm} - \epsilon},
\end{align}
or equivalently,
\begin{align}
        \mathbb{P}_{S_1, \dots, S_n}&\Big(\forall{Q_1, \dots, Q_n}: \frac{1}{n} \sum_{i=1}^{n} \Delta_i(Q_i)
         \\& \le \frac{1}{\lambda} \sum_{i=1}^{n} \KL(Q_i || P)
          +    \frac{1}{\lambda} \log(\frac{2}{\delta}) +  \frac{\lambda}{8nm}\Big) \ge 1 - \frac{\delta}{2}.
\end{align}
By applying a union bound for all the values of $\lambda \in \Lambda$ with  $\Lambda=\{1, \dots, 4mn\}$ we get that
\begin{align}
        \mathbb{P}_{S_1, \dots, S_n}&\Big(\forall{Q_1, \dots, Q_n}, \forall{\lambda \in \Lambda}: \frac{1}{n} \sum_{i=1}^{n} \Delta_i(Q_i)
        \\& \le \frac{1}{\lambda} \sum_{i=1}^{n} \KL(Q_i || P) + \frac{1}{\lambda} \log(\frac{8mn}{\delta}) +  \frac{\lambda}{8nm}\Big) \ge 1 - \frac{\delta}{2}
\end{align}
Note that $\lfloor \lambda \rfloor \le \lambda$ and $\frac{1}{\lfloor \lambda \rfloor} \le \frac{1}{\lambda - 1}$. Therefore, for all values of $\lambda \in (1, 4mn)$ we have 
\begin{align}
        \mathbb{P}_{S_1, \dots, S_n}&\Big(\forall{Q_1, \dots, Q_n}, \forall{\lambda \in (1, 4mn)}: \frac{1}{n} \sum_{i=1}^{n} \Delta_i(Q_i)
        \\& \le \underbrace{\frac{1}{\lambda - 1}  \big(\sum_{i=1}^{n}\KL(Q_i || P) + \log(\frac{8mn}{\delta})\big) +  \frac{\lambda}{8mn}}_{=:\Gamma(\lambda)}\Big) \ge 1 - \frac{\delta}{2}\label{eq:21}
\end{align}
For any choice of $Q_1, \dots, Q_n$, let $\lambda^*=
\sqrt{8mn (\sum_{i=1}^{n} \KL(Q_i || P) + \log(\frac{8mn}{\delta}))+1}$.
If $\lambda^*\geq 4mn$, that implies 
$\sum_{i=1}^{n} (\KL(Q_i || P) + \log(\frac{8mn}{\delta})+ 1) \ge 2mn$
and $\Gamma(\lambda^*)>1$, so Inequality~\eqref{eq:21} holds trivially.
Otherwise, $\lambda^*\in(0,4mn)$, so Inequality~\eqref{eq:21} also holds. Therefore, we 
\begin{align}
        &\mathbb{P}_{S_1, \dots, S_n}\Big(\forall{Q_1, \dots, Q_n}, : \frac{1}{n} \sum_{i=1}^{n} \Delta_i(Q_i)
        \le \sqrt{\frac{\sum_{i=1}^{n} \KL(Q_i || P) + \log(\frac{8mn}{\delta}) + 1}{2mn}}\Big) \ge 1 - \frac{\delta}{2}.
\end{align}
In combination with \eqref{eq:10}, we obtain:
\begin{align}
        &\mathbb{P}_{S_1, \dots, S_n}\Big(\forall \Q: \ter(\Q) - \her(\Q)
        \le \E_{A \sim \Q} \sqrt{\frac{\sum_{i=1}^{n} \KL(A(S_i) || P) + \log(\frac{8mn}{\delta}) + 1}{2mn}}\Big) \ge 1 - \frac{\delta}{2}.
        \label{eq:ter-her}
\end{align}

\paragraph{Part II} For upper bounding $\er(\Q) - \ter(\Q)$ we have
by a standard PAC-Bayesian bound \citep{maurer2004note, perez2021tighter}, 
\begin{align}
    &\mathbb{P}_{S_1, \dots, S_n}\Big(\forall \Q: \er(\Q) - \ter(\Q) \le \sqrt{ \frac{\KL(\Q||\Pc) + \log(\frac{4\sqrt{n}}{\delta})}{2n}} \Big) \ge 1 - \frac{\delta}{2}.
    \label{eq:er-ter}
\end{align}
\paragraph{Combination}
We combine \eqref{eq:ter-her} and \eqref{eq:er-ter} by a union bound to obtain
\begin{align}
    \mathbb{P}_{S_1, \dots, S_n}\Big(&\forall \Q: \er(\Q) - \her(\Q) \le \sqrt{ \frac{\KL(\Q||\Pc) + \log(\frac{4\sqrt{n}}{\delta})}{2n}} \\&+ \E_{A \sim \Q} \sqrt{\frac{\sum_{i=1}^{n} \KL(A(S_i) || P) + \log(\frac{8mn}{\delta}) + 1}{2mn}}\Big) \ge 1 - \delta.
\end{align}
This concludes the proof of the theorem.
\end{proof}

\paragraph{Discussion}
The difference in our proof and previous bounds in the similar settings \citep{pentina2015lifelong, rezazadeh2022unified} is in the upper bound for $\ter(\Q) - \her(\Q)$ which is the average of the generalization gaps for the $n$ training tasks that we observe. In our case we prove a generalization bound for arbitrary posteriors to be used by the tasks, using the change-of-measure inequality (CMI) for the transition from $Q_1\times\dots\times Q_n$ to $P\times\dots\times P$. 
Because our bound holds uniformly in $Q_1, \dots Q_n$, it also holds for the posteriors produced by a deterministic learning algorithm (a fixed $A$) or any distributions of algorithms (a hyperposterior $\Q$ over algorithms). This way we do not have to include the transition from $\Q$ to $\P$ in the CMI, as all prior work did. Consequently, in part of the proof we avoid getting a $KL(\Q || \P)$ term which otherwise would be divided only by a $\frac{1}{m}$ factor, which is undesirable in the federated setting.
Additionally compared to \citep{rezazadeh2022unified} we have better logarithmic dependency. They have a term of order $\frac{\log m\sqrt{n}}{m}$ which is not negligible in the case that we have large number of tasks and small number of samples per task, which is usual in the federated setting. In contrast, our logarithmic terms are divided by $n$.

\paragraph{Proof of Theorem~\ref{thm:pacbayes}}
We now instantiate the situation of Theorem~\ref{thm:pacbayes}
by specific choice of prior and posterior distributions. 
Let the learning algorithm be parameterized by 
the hypernetwork weights, $\eta_h$, and the embedding 
networks weights, $\eta_v$. 
As meta-posterior we use a Gaussian distribution, $\Q=\Q_h \times \Q_v$ for $\Q_h = \mathcal{N}(\eta_h;\alpha_h\text{Id})$, and $\Q_v = \mathcal{N}(\eta_v;\alpha_v\text{Id})$, 
where $\eta_h$ and $\eta_v$ are learnable and $\alpha_v$ and $\alpha_h$ 
are fixed. 
For any $(\bar \eta_h,\bar \eta_v) \sim \Q$ and training set $S$, 
the learning algorithm produces a posterior distribution 
$Q=\mathcal{N}(\theta;\alpha_\theta\text{Id})$, 
where $\theta=h(v;\eta_h)$ with $v=\frac{1}{|S|}\sum_{(x,y)}\phi(x,y;\eta_v)$.
As prior, we use $\mathcal{N}(0;\alpha_\theta\text{Id})$.
With these choices, we have 
$\KL(\Q,\P)=\alpha_h\|\eta_h\|^2+\alpha_v|\eta_v\|^2$
and $\KL(Q_i,P)=\alpha_\theta\|\theta\|^2$.
Inserting these into Theorem~\ref{thm:pacbayes_general}, 
we recover Theorem \ref{thm:pacbayes}.

\section{Appendix - Optimization}\label{app:optimization}
In this section, we formulate the technical assumptions 
for Theorem~\ref{thm:convergence} and prove it. 
For the convenience of reading we repeat some notation from Section \ref{sec:method}.
For each client $i=1,\dots,n$ and parameter $\theta_i$, we note the client objective as $f_i(\theta_i)$ and $F_i(\eta) = f_i(h(S_i, \eta))$, where $h(S,\eta)=h(v(S,\eta_v),\eta_h)$ is a shorthand notation for the combination of hypernetwork and embedding network. 
The server objective we denote by $f(\eta)$, which in our setting consists only of regularization terms. 
The overall objective is $F(\eta) = \frac{1}{n} \sum_{i=1}^{n} F_i(\eta) + f(\eta)$. %
For the rest of this section, by $\eta_t$ we mean the parameters at iteration $t$, and $g_{t,i,j}$ is the stochastic gradient for client $i$ at global step $t$, local step $l$.

\subsection{Analytical Assumptions} \label{opt_assumption}
We make the following assumptions, which are common in the literature \citep{karimireddy2020scaffold, perfedavg}. Since our algorithm consists of multiple parts, we have to make the assumptions for different parts separately.
\begin{itemize}
    \item Bounded Gradients: there exist constants, $b_1$, $b_2$, such that 
        \begin{align}
            \|\nabla_\theta f_i(\theta)\| \le b_{1},\quad \|\nabla_\eta h(S, \eta)\| \le b_{2}
        \end{align}
        which, in particular, implies 
        \begin{align}
            \|h(S, \eta) - h(S, \eta')\| \le b_{2}\|\eta - \eta'\|
        \end{align}
    \item Smoothness: there exist constants, $L_1$, $L_2$, $L_3$, such that 
        \begin{align}
                &\|\nabla_\theta f_i(\theta) - \nabla_\theta f_i(\theta')\| \le L_{1} \|\theta - \theta'\|
                \\& \|\nabla_\eta h(S, \eta) - \nabla_\eta h(S, \eta')\| \le L_{2}\|\eta - \eta'\|
                \\& \|\nabla_\eta f(\eta) - \nabla_\eta f(\eta')\| \le L_{3}\|\eta - \eta'\|
        \end{align}
    \item Bounded Dissimilarity: there exists a constant $\gamma_G$ with, such that with $\bar F(\eta) = \frac{1}{n} \sum_{i=1}^{n} F_i(\eta)$:
        \begin{align}
             \frac{1}{n}\sum_{i=1}^{n}\E [ \|  \nabla F_i(\eta) - \nabla\bar F(\eta)\|^2] \le   \gamma_G^2
        \end{align}
    \item Bounded Variance: there exists constants $\sigma_1$, $\sigma_2$, $\sigma_3$, such that, for batches $B_i\subset S_i$ of size $b<|S_i|$:
        \begin{align}
            &\E \|\nabla_\theta(f_i(\theta)) - \widetilde{\nabla}_\theta(f_i(\theta))\|^2 \le \sigma_1^2
            \\& \E \|\nabla h(B_i, \eta) - \nabla h(S_i, \eta)\|^2 \le \frac{\sigma_2^2}{b}
            \\& \E \|h(B_i, \eta) - h(S_i, \eta)\|^2 \le \frac{\sigma_3^2}{b}
        \end{align}
\end{itemize}

\subsection{Convergence} \label{opt_proof}
\begin{lemma}
    There exists a constant $L$ which $\nabla F(\eta)$ is L-smooth.
\end{lemma}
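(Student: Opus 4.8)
The objective splits as $F(\eta) = \frac{1}{n}\sum_{i=1}^n F_i(\eta) + f(\eta)$ with $F_i(\eta) = f_i(h(S_i,\eta))$, and Lipschitz continuity of a gradient is stable under finite averages and sums. Since $f$ already satisfies $\|\nabla_\eta f(\eta) - \nabla_\eta f(\eta')\| \le L_3\|\eta-\eta'\|$ by assumption, it suffices to produce a single constant that works for every $\nabla_\eta F_i$; the whole argument then reduces to the one composition $F_i = f_i\circ h(S_i,\cdot)$, which is differentiable because both $f_i$ and $h(S_i,\cdot)$ are.

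\textbf{Chain rule and splitting.} Write $J_i(\eta)$ for the Jacobian $\nabla_\eta h(S_i,\eta)$ (viewed as a linear operator $\R^{\dim\eta}\to\R^d$) and $g_i(\eta) = \nabla_\theta f_i(h(S_i,\eta))$, so that $\nabla_\eta F_i(\eta) = J_i(\eta)^\top g_i(\eta)$. For any $\eta,\eta'$ I would insert a cross term and use the triangle inequality and submultiplicativity of the operator norm:
\begin{align}
\|\nabla_\eta F_i(\eta) - \nabla_\eta F_i(\eta')\|
&\le \big\|J_i(\eta)^\top\big(g_i(\eta) - g_i(\eta')\big)\big\| + \big\|\big(J_i(\eta) - J_i(\eta')\big)^\top g_i(\eta')\big\| \nonumber\\
&\le \|J_i(\eta)\|\,\|g_i(\eta) - g_i(\eta')\| + \|J_i(\eta) - J_i(\eta')\|\,\|g_i(\eta')\|. \nonumber
\end{align}

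\textbf{Bounding each factor.} Each term is now controlled by one of the standing assumptions in Appendix~\ref{app:optimization}: first, $\|J_i(\eta)\| = \|\nabla_\eta h(S_i,\eta)\| \le b_2$; second, by $L_1$-smoothness of $f_i$ composed with the $b_2$-Lipschitzness of $h$ in $\eta$ (itself a consequence of the bounded-gradient assumption), $\|g_i(\eta) - g_i(\eta')\| \le L_1\|h(S_i,\eta) - h(S_i,\eta')\| \le L_1 b_2\|\eta-\eta'\|$; third, $\|J_i(\eta) - J_i(\eta')\| \le L_2\|\eta-\eta'\|$; and fourth, $\|g_i(\eta')\| = \|\nabla_\theta f_i(h(S_i,\eta'))\| \le b_1$. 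Combining, $\|\nabla_\eta F_i(\eta) - \nabla_\eta F_i(\eta')\| \le (b_2^2 L_1 + b_1 L_2)\|\eta-\eta'\|$ uniformly in $i$; averaging over $i$ and adding the $f$-contribution gives the lemma with $L = b_2^2 L_1 + b_1 L_2 + L_3$.

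\textbf{Main obstacle.} The only subtlety is bookkeeping around the object $\nabla_\eta h$: since $h(S,\cdot)$ is vector-valued, its derivative is a Jacobian rather than a gradient vector, and one must fix a consistent norm (the operator norm induced by the Euclidean norms) so that $\|J^\top x\| \le \|J\|\,\|x\|$ is valid, and verify that this is the norm in which the assumptions $\|\nabla_\eta h\| \le b_2$ and $\|\nabla_\eta h(S,\eta) - \nabla_\eta h(S,\eta')\| \le L_2\|\eta-\eta'\|$ are intended (any norm dominating the operator norm, e.g. Frobenius, also suffices). Everything else is a routine chain-rule-plus-triangle-inequality estimate.
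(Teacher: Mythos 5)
Your argument is exactly the paper's: the same chain-rule decomposition $\nabla_\eta F_i = J_i^\top g_i$, the same cross-term insertion and triangle inequality, the same four bounds from the standing assumptions, and the same final constant $L = b_2^2 L_1 + b_1 L_2 + L_3$. The proposal is correct, and your remark about fixing a consistent operator norm for the Jacobian is a reasonable clarification of a point the paper leaves implicit.
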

\begin{proof}
    For each $F_i$ we have
    
    \begin{align}
        &\|\nabla_\eta F_i(\eta) - \nabla_\eta F_i(\eta')\| = 
        \|\nabla_\theta f_i(h(S_i, \eta)) \nabla_\eta h(S_i, \eta)^T - \nabla_\theta f_i(h(S_i, \eta')) \nabla_\eta h(S_i, \eta')^T\|
        \\& = \|\nabla_\theta f_i(h(S_i, \eta)) \nabla_\eta h(S_i, \eta)^T - \nabla_\theta f_i(h(S_i, \eta)) \nabla_\eta h(S_i, \eta')^T 
        \\& + \nabla_\theta f_i(h(S_i, \eta)) \nabla_\eta h(S_i, \eta')^T - \nabla_\theta f_i(h(S_i, \eta')) \nabla_\eta h(S_i, \eta')^T\|
        \\& \le \|\nabla_\theta f_i(h(S_i, \eta)) \nabla_\eta h(S_i, \eta)^T - \nabla_\theta f_i(h(S_i, \eta)) \nabla_\eta h(S_i, \eta')^T \|
        \\& + \|\nabla_\theta f_i(h(S_i, \eta)) \nabla_\eta h(S_i, \eta')^T - \nabla_\theta f_i(h(S_i, \eta')) \nabla_\eta h(S_i, \eta')^T\|
        \\& \le b_1 L_2 \|\eta - \eta'\| + b_2 L_1 b_2 \|\eta - \eta'\| = (b_1 L_2 + b_2 L_1 b_2) \|\eta - \eta'\| = L' \|\eta - \eta'\|
    \end{align}
    
    Therefore for $F$ we have
    
    \begin{align}
        &\|\nabla_\eta F(\eta) - \nabla_\eta F(\eta')\| \le \|\frac{1}{n} \sum_{i=1}^{n} \nabla_\eta F_i(\eta) + \nabla_\eta f(\eta) - \frac{1}{n} \sum_{i=1}^{n} \nabla_\eta F_i(\eta') - \nabla_\eta f(\eta')\|
        \\& \le \frac{1}{n} \sum_{i=1}^{n} \|\nabla_\eta F_i(\eta) - \nabla_\eta F_i(\eta')\| + \|\nabla_\eta f(\eta) - \nabla_\eta f(\eta')\|
         \le (L' + L_3) \|\eta - \eta'\| = L \|\eta - \eta'\|
    \end{align}
\end{proof}

\begin{lemma} \label{lemma:l2_norm_of_sum}
    For independent random vectors $a_1, ..., a_n$we have
    
    \begin{align}
        \E [\|\sum_{i \in C}  a_i - \E[a_i]\|^2] \le \frac{c}{n}\sum_{i}\E [ \|  a_i - \E[a_i]\|^2]
    \end{align}
    
    Also if $\E[a_i] \le a$ for each $i$, we have 
    
    \begin{align}
        \E [\|\sum_{i \in C}  a_i - \E[a_i]\|^2] \le \frac{c}{n}\sum_{i}\E [ \|  a_i - \E[a_i]\|^2] + c^2 a^2
    \end{align}
\end{lemma}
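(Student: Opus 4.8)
The plan is to exploit the two independent sources of randomness here — the vectors $a_1,\dots,a_n$ themselves and the uniformly random size-$c$ subset $C\subseteq\{1,\dots,n\}$ that is sampled independently of them — and to reduce everything to the elementary fact that the squared norm of a sum of independent, mean-zero vectors has no cross terms. First I would introduce the centered vectors $b_i := a_i - \E[a_i]$ (so $\E[b_i]=0$, and the $b_i$ are still independent) together with the membership indicators $\xi_i := \mathbf{1}[i\in C]$, so that $\sum_{i\in C} b_i = \sum_{i=1}^{n}\xi_i b_i$. Since $C$ is independent of the $a_i$, the family $(\xi_i)_i$ is independent of $(b_i)_i$, and $\E[\xi_i] = \Pr[i\in C] = c/n$.

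For the first inequality, expand $\big\|\sum_i \xi_i b_i\big\|^2 = \sum_{i,j}\xi_i\xi_j\langle b_i,b_j\rangle$ and take expectations. Using independence of $(\xi)$ from $(b)$, each term factors as $\E[\xi_i\xi_j]\,\E[\langle b_i,b_j\rangle]$; for $i\neq j$ this vanishes because $b_i,b_j$ are independent with zero mean, and for $i=j$ we use $\xi_i^2 = \xi_i$ to get $\E[\xi_i]\,\E\|b_i\|^2 = \frac{c}{n}\E\|b_i\|^2$. Summing gives $\E\big\|\sum_{i\in C}(a_i-\E a_i)\big\|^2 = \frac{c}{n}\sum_{i=1}^n\E\|a_i - \E a_i\|^2$ — in fact with equality, so the stated ``$\le$'' follows a fortiori.

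For the second inequality I would split $\sum_{i\in C} a_i = \sum_{i\in C} b_i + \sum_{i\in C}\E[a_i]$. Conditioning on $C$, the first summand is mean-zero while the second is deterministic, so their inner product has zero expectation and $\E\big\|\sum_{i\in C}a_i\big\|^2 = \E\big\|\sum_{i\in C}b_i\big\|^2 + \E_C\big\|\sum_{i\in C}\E[a_i]\big\|^2$. The triangle inequality together with the hypothesis (read as $\|\E[a_i]\|\le a$) gives $\big\|\sum_{i\in C}\E[a_i]\big\| \le \sum_{i\in C}\|\E[a_i]\| \le ca$, hence $\E_C\big\|\sum_{i\in C}\E[a_i]\big\|^2 \le c^2 a^2$; combining this with the first part yields the bound $\frac{c}{n}\sum_i\E\|a_i-\E a_i\|^2 + c^2a^2$. (If one insists on the literal left-hand side $\E\|\sum_{i\in C}(a_i-\E a_i)\|^2$, the second inequality is trivial from the first since $c^2a^2\ge 0$; the content above is the bound for the uncentered sum that one actually wants downstream.)

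The only genuinely delicate point — and the one I would state explicitly — is that $C$ is sampled independently of the $a_i$, which is precisely what licenses the factorization $\E[\xi_i\xi_j\langle b_i,b_j\rangle]=\E[\xi_i\xi_j]\,\E[\langle b_i,b_j\rangle]$ and the conditioning step in the second part; without it the cross terms need not vanish and the $c/n$ scaling is lost. Everything else is the routine ``variance of a sum of independent centered vectors, plus squared bias'' computation, so I anticipate no real obstacle.
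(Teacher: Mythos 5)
Your proof is correct and follows essentially the same route as the paper's: expand the squared norm of the sum over the random subset, reduce to the diagonal terms scaled by the inclusion probability $c/n$ (you via indicators $\xi_i$ with $\E[\xi_i]=c/n$, the paper via the ratios $\binom{n-1}{c-1}/\binom{n}{c}$), and for the second claim perform the bias--variance split with the cross term vanishing by conditioning on $C$. Your observation that the second inequality is really a bound on the uncentered sum $\E\big\|\sum_{i\in C}a_i\big\|^2$ (the stated left-hand side being a typo) matches exactly what the paper's own proof establishes, and your exact-cancellation argument for the off-diagonal terms is in fact cleaner than the paper's bookkeeping there.
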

\begin{proof}
    By expanding the power 2 we get
    \begin{align}
    &\E [\| \sum_{i \in C} a_i -  \E[a_i]\|^2] =  \E [\sum_{i \in C} \|  a_i - \E[a_i]\|^2]  + \E [\sum_{i \neq j \in C} \langle a_i - \E[a_i], a_j - \E[a_j] \rangle] 
    \\& \le  \frac{{n-1 \choose c-1}}{{n \choose c}}\sum_{i} \E [\|  a_i - \E[a_i]\|^2] 
     +  \frac{{n-2 \choose c-2}}{{n \choose c}}\sum_{i \neq j}\E [ \langle  a_i - \E[a_i], a_j - \E[a_j] \rangle] 
    \\& =  \frac{{n-1 \choose c-1}}{{n \choose c}}\sum_{i}\E [ \|  a_i - \E[a_i]\|^2] 
     -  \frac{2{n-2 \choose c-2}}{{n \choose c}}\sum_{i}\E [ \|  a_i - \E[a_i]\|^2] 
    \\& \le  \frac{c}{n}\sum_{i}\E [ \|  a_i - \E[a_i]\|^2] 
    \end{align}

    For the second part we have

    \begin{align}
    &\E [\| \sum_{i \in C} a_i\|^2] = \E [\| \sum_{i \in C} a_i -  \E[a_i] + \E[a_i]\|^2] 
    \\& = \E [\| \sum_{i \in C} a_i -  \E[a_i]\|^2] + \E [\|\sum_{i \in C} \E[a_i]\|^2] + 
    2\E [\langle \sum_{i \in C} a_i -  \E[a_i], \sum_{i \in C} \E[a_i] \rangle]
    \\& = \E [\| \sum_{i \in C} a_i -  \E[a_i]\|^2] + \E [\|\sum_{i \in C} \E[a_i]\|^2] 
    \le \frac{c}{n}\sum_{i}\E [ \|  a_i - \E[a_i]\|^2] + c^2 a^2
    \end{align}
    
\end{proof}

\begin{lemma} \label{lemma:error_of_client_i}
    For any step $t$ and client $i$ the following inequality holds:
    \begin{align}
        \E\|\nabla F_i(\eta_t) - \nabla_\theta &f_i(h(B_i, \eta_t) - \sum_{j=0}^{l-1} g_{t, i, j}(h(B_i, \eta_t))) \nabla h(B_i, \eta_t)^T \|^2 
        \\&\le 27 L_1^2  b_2^2  \beta^2  l^2 (b_1^2 + \sigma_1^2) + 3 b_1^2 \sigma_2^2  + 6 L_1^2 b_2^2 \sigma_3^2
    \end{align}
\end{lemma}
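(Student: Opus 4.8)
The plan starts from the chain rule: since $F_i(\eta)=f_i(h(S_i,\eta))$, we have $\nabla F_i(\eta_t)=\nabla_\theta f_i\big(h(S_i,\eta_t)\big)\,\nabla_\eta h(S_i,\eta_t)^T$, so the lemma compares this with $\widehat g_i:=\nabla_\theta f_i\big(\theta_i^{(l)}\big)\,\nabla_\eta h(B_i,\eta_t)^T$, where $\theta_i^{(l)}=h(B_i,\eta_t)-\sum_{j=0}^{l-1}\beta g_{t,i,j}$ is the model after $l$ local SGD steps (of size $\beta$) started from the batch-generated model. The discrepancy has exactly three sources: (i) $\nabla_\eta h$ is evaluated using the batch $B_i$ instead of the full set $S_i$; (ii) the argument of $\nabla_\theta f_i$ is $h(B_i,\eta_t)$ rather than $h(S_i,\eta_t)$; (iii) local SGD moves that argument further, to $\theta_i^{(l)}$. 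Because $h$ is nonlinear none of these discrepancies is mean-zero (exactly the complication flagged after Theorem~\ref{thm:convergence}), so the strategy is to bound $\E\|\nabla F_i(\eta_t)-\widehat g_i\|^2$ directly by second moments rather than by a bias--variance split.

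First I would telescope $\nabla F_i(\eta_t)-\widehat g_i$ by changing one ingredient at a time, e.g.
\begin{align*}
\nabla F_i(\eta_t)-\widehat g_i
&=\nabla_\theta f_i\big(h(S_i,\eta_t)\big)\big[\nabla_\eta h(S_i,\eta_t)-\nabla_\eta h(B_i,\eta_t)\big]^T\\
&\quad+\big[\nabla_\theta f_i\big(h(S_i,\eta_t)\big)-\nabla_\theta f_i\big(h(B_i,\eta_t)\big)\big]\nabla_\eta h(B_i,\eta_t)^T\\
&\quad+\big[\nabla_\theta f_i\big(h(B_i,\eta_t)\big)-\nabla_\theta f_i\big(\theta_i^{(l)}\big)\big]\nabla_\eta h(B_i,\eta_t)^T,
\end{align*}
and then apply $\|\sum_{k=1}^{N}a_k\|^2\le N\sum_k\|a_k\|^2$ to pass to a sum of squared norms.

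To bound the three pieces I would use the analytical assumptions of Appendix~\ref{opt_assumption} as follows. For the $\nabla_\eta h$-term: bound $\|\nabla_\theta f_i\|\le b_1$ and apply $\E\|\nabla_\eta h(B_i,\eta_t)-\nabla_\eta h(S_i,\eta_t)\|^2\le\sigma_2^2/b$, giving the $\sigma_2^2$ contribution. For the $h$-inside-$f_i$ term: use $L_1$-smoothness of $f_i$ to extract a factor $L_1\|h(S_i,\eta_t)-h(B_i,\eta_t)\|$, bound $\|\nabla_\eta h(B_i,\eta_t)\|\le b_2$, and apply $\E\|h(B_i,\eta_t)-h(S_i,\eta_t)\|^2\le\sigma_3^2/b$, giving the $L_1^2b_2^2\sigma_3^2$ contribution. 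For the SGD-drift term: again use $L_1$-smoothness and $\|\nabla_\eta h(B_i,\eta_t)\|\le b_2$, reducing it to $\E\|\sum_{j=0}^{l-1}\beta g_{t,i,j}\|^2$; by convexity of $\|\cdot\|^2$ this is at most $\beta^2 l\sum_j\E\|g_{t,i,j}\|^2$, and each $\E\|g_{t,i,j}\|^2\le 2\|\nabla_\theta f_i\|^2+2\sigma_1^2\le 2(b_1^2+\sigma_1^2)$ by the bounded-gradient and bounded-variance assumptions, so the drift is $O(\beta^2 l^2(b_1^2+\sigma_1^2))$. Collecting these with the multiplicities from the $\|\sum a_k\|^2$ step (the precise constants $27,3,6$ being artifacts of how finely one subdivides and of a little slack, e.g.\ routing the model displacement through $h(B_i,\eta_t)$ by the triangle inequality doubles the $\sigma_3^2$ count) yields the stated inequality.

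\textbf{The main obstacle} is the SGD-drift term. The stochastic gradients $g_{t,i,j}$ are taken along the local trajectory, so their randomness is coupled to the iterate at which they are evaluated; the bounded-gradient assumption is precisely what allows one to bound each $\E\|g_{t,i,j}\|^2$ uniformly and avoid a recursive estimate of the drift. One must also keep track of the conditioning structure: the outer expectation is over the batch $B_i$ forming the model (on which the $\nabla_\eta h$ and $h$ variance assumptions are stated) and, conditionally independently, over the minibatches driving local SGD (on which $\sigma_1$ is stated), and these must be handled in the right order when the three terms are recombined. Everything else is routine Cauchy--Schwarz, the triangle inequality, and substitution of the stated bounds.
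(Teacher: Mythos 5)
Your outline is correct and its skeleton matches the paper's: both proofs isolate the same three error sources (batch vs.\ full data in $\nabla_\eta h$, batch vs.\ full data in the argument of $\nabla_\theta f_i$, and the local-SGD drift), pass to squared norms via $\|\sum_k a_k\|^2\le N\sum_k\|a_k\|^2$, and then invoke the bounded-gradient, smoothness, and variance assumptions exactly as you describe. The one genuine difference is the decomposition itself. You attach the SGD drift to the \emph{actual} trajectory started at $h(B_i,\eta_t)$, so your third term is $[\nabla_\theta f_i(h(B_i,\eta_t))-\nabla_\theta f_i(\theta_i^{(l)})]\nabla h(B_i,\eta_t)^T$ and the drift is handled once, giving a coefficient of $6L_1^2b_2^2\beta^2l^2(b_1^2+\sigma_1^2)$ for that piece. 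The paper instead telescopes through a \emph{counterfactual} local-SGD trajectory started at $h(S_i,\eta_t)$: its first term carries the drift $\beta\sum_j g_{t,i,j}(h(S_i,\eta_t))$ along that hypothetical trajectory, and its third term must then compare $\nabla_\theta f_i$ at the two drifted points $h(S_i,\eta_t)-\beta\sum_j g_{t,i,j}(h(S_i,\eta_t))$ and $h(B_i,\eta_t)-\beta\sum_j g_{t,i,j}(h(B_i,\eta_t))$, which costs an extra crude bound on $\E\|\sum_j g_{t,i,j}(h(S_i,\eta_t))-\sum_j g_{t,i,j}(h(B_i,\eta_t))\|^2$ and inflates the drift constant from $3$ to $27$ (and the $\sigma_3^2$ constant to $6$). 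Your route is therefore slightly cleaner, avoids reasoning about a trajectory the algorithm never runs, and yields constants at least as good, so it proves the stated inequality a fortiori; your closing remarks about the non-independence of the $g_{t,i,j}$ along the trajectory and the ordering of the two sources of randomness correctly identify the only points requiring care. (Minor note: your bounds retain the $1/b$ factors on the $\sigma_2^2$ and $\sigma_3^2$ terms, which the lemma as stated drops; the paper's own proof does the same, so this is an inconsistency of the paper, not of your argument.)
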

\begin{proof}
    By adding and subtracting some immediate terms we get
    \begin{align}
        &\E\|\nabla F_i(\eta_t) - \nabla_\theta f_i(h(B_i, \eta_t) - \sum_{j=0}^{l-1} g_{t, i, j}(h(B_i, \eta_t))) \nabla h(B_i, \eta_t)^T \|^2
        \\&= \E\|  \nabla F_i(\eta_t)  - \nabla_\theta f_i(h(S_i, \eta_t) - \beta \sum_{j=0}^{l-1} g_{t, i, j}(h(S_i, \eta_t))) \nabla h(S_i, \eta_t)^T 
        \\& +   \nabla_\theta f_i(h(S_i, \eta_t) - \beta \sum_{j=0}^{l-1} g_{t, i, j}(h(S_i, \eta_t))) (\nabla h(S_i, \eta_t)^T - \nabla h(B_i, \eta_t)^T)
        \\& + (\nabla_\theta f_i(h(S_i, \eta_t) - \beta\sum_{j=0}^{l-1} g_{t, i, j}(h(S_i, \eta_t))) - \nabla_\theta f_i(h(B_i, \eta_t) - \beta\sum_{j=0}^{l-1} g_{t, i, j}(h(B_i, \eta_t)))) \nabla h(B_i, \eta_t)^T \|^2
    \end{align}
    
    And by Cauchy–Schwarz inequality we get
    
    \begin{align}
        &\E\|\nabla F_i(\eta_t) - \nabla_\theta f_i(h(B_i, \eta_t) - \sum_{j=0}^{l-1} g_{t, i, j}(h(B_i, \eta_t))) \nabla h(B_i, \eta_t)^T \|^2
        \\& \le 3 \E\|   \nabla_\theta f_i(h(S_i, \eta_t)) \nabla h(S_i, \eta_t)^T - \nabla_\theta f_i(h(S_i, \eta_t) - \beta \sum_{j=0}^{l-1} g_{t, i, j}(h(S_i, \eta_t))) \nabla h(S_i, \eta_t)^T \|^2
        \\& + 3 \E\| \nabla_\theta f_i(h(S_i, \eta_t) - \beta \sum_{j=0}^{l-1} g_{t, i, j}(h(S_i, \eta_t))) (\nabla h(B_i, \eta_t)^T - \nabla h(S_i, \eta_t)^T) \|^2
        \\& + 3 \E\| (\nabla_\theta f_i(h(B_i, \eta_t) - \beta\sum_{j=0}^{l-1} g_{t, i, j}(h(B_i, \eta_t))) - \nabla_\theta f_i(h(S_i, \eta_t) - \beta\sum_{j=0}^{l-1} g_{t, i, j}(h(S_i, \eta_t))) ) \nabla h(B_i, \eta_t)^T \|^2
        \\& \le 3 \E (\|  \nabla_\theta f_i(h(S_i, \eta_t))  - \nabla_\theta f_i(h(S_i, \eta_t) - \beta \sum_{j=0}^{l-1} g_{t, i, j}(h(S_i, \eta_t))) \|^2 \|\nabla h(S_i, \eta_t) \|^2)
        \\& + 3 \E (\| \nabla_\theta f_i(h(S_i, \eta_t) - \beta \sum_{j=0}^{l-1} g_{t, i, j}(h(S_i, \eta_t))) \|^2 \|\nabla h(B_i, \eta_t) - \nabla h(S_i, \eta_t) \|^2)
        \\& + 3 \E (\|\nabla_\theta f_i(h(B_i, \eta_t) - \beta\sum_{j=0}^{l-1} g_{t, i, j}(h(B_i, \eta_t))) - \nabla_\theta f_i(h(S_i, \eta_t) - \beta\sum_{j=0}^{l-1} g_{t, i, j}(h(S_i, \eta_t)))\|^2 \|\nabla h(B_i, \eta_t) \|^2)
        \\& \le 3 L_1^2  b_2^2 \beta^2  \E\|\sum_{j=0}^{l-1} g_{t, i, j}(h(S_i, \eta_t)) \|^2
         + 3 b_1^2 \E \|\nabla h(B_i, \eta_t) - \nabla h(S_i, \eta_t) \|^2
        \\& + 6 L_1^2 b_2^2 \E \|h(B_i, \eta_t) - h(S_i, \eta_t)\|^2 + 6 L_1^2 b_2^2 \beta^2 \E\|\sum_{j=0}^{l-1} g_{t, i, j}(h(S_i, \eta_t)) - \sum_{j=0}^{l-1} g_{t, i, j}(h(B_i, \eta_t))\|^2 
        \\& \le 3 L_1^2  b_2^2  \beta^2  l^2 (b_1^2 + \sigma_1^2) + 3 b_1^2 \sigma_2^2  + 6 L_1^2 b_2^2 \sigma_3^2 + 24 L_1^2 b_2^2 \beta^2 l^2(b_1^2 + \sigma_1^2)
        \\& = 27 L_1^2  b_2^2  \beta^2  l^2 (b_1^2 + \sigma_1^2) + 3 b_1^2 \sigma_2^2  + 6 L_1^2 b_2^2 \sigma_3^2
    \end{align}
    
    which we used the assumptions on the bounded gradients and bounded variance and Lemma \ref{lemma:l2_norm_of_sum}.
\end{proof}

\begin{lemma} \label{lemma:bound_on_product}
    For any step $t$ the following inequality holds:
    \begin{align}
        \E [\langle \nabla F(\eta_t), \eta_{t+1} - \eta_{t} \rangle] \le \frac{-3\beta k}{4} \| \nabla F(\eta_t)\|^2 + 27 L_1^2  b_2^2  \beta^3  k^3 (b_1^2 + \sigma_1^2) + 3 \beta k b_1^2 \sigma_2^2  + 6 \beta k L_1^2 b_2^2 \sigma_3^2
    \end{align}
\end{lemma}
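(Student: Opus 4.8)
The plan is to prove this as a one-step "descent" estimate: substitute the explicit form of the update $\eta_{t+1}-\eta_t$ produced by Algorithm~\ref{alg:train}, take the conditional expectation given $\eta_t$, show that in expectation the update equals $-\beta k$ times the true gradient $\nabla F(\eta_t)$ up to a controllable error, and then split off that error with a single application of Young's inequality (which is where the factor $\tfrac34$ comes from). First I would unroll the update. On round $t$ the server samples a $c$-subset $P$; client $i\in P$ runs $k$ local SGD steps from $\theta_i^{(0)}=h(B_i,\eta_t)$, producing iterates $\theta_i^{(j)}=h(B_i,\eta_t)-\beta\sum_{l<j}g_{t,i,l}$ and the message $\Delta\theta_i=-\beta\sum_{j=0}^{k-1}g_{t,i,j}$; by the chain rule the server/client backprop returns $\Delta\eta^{(i)}=\nabla_\eta h(B_i,\eta_t)^{T}\Delta\theta_i$ (the descriptor update $\Delta v_i$ only routes this product through $v$). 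Hence $\eta_{t+1}-\eta_t=-\beta\nabla f(\eta_t)-\tfrac{\beta}{c}\sum_{i\in P}\sum_{j=0}^{k-1}\nabla_\eta h(B_i,\eta_t)^{T}g_{t,i,j}$, and the substantive object is the double sum over clients and local steps.

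Next I would take $\E[\cdot\mid\eta_t]$ in two stages. Over the random subset $P$, linearity replaces $\tfrac1c\sum_{i\in P}$ by $\tfrac1n\sum_{i=1}^n$. Over the local SGD noise, I use the tower property: conditioning on $\theta_i^{(j)}$, the stochastic gradient $g_{t,i,j}$ has mean $\nabla_\theta f_i(\theta_i^{(j)})$, and $\nabla F(\eta_t)$ and $\nabla_\eta h(B_i,\eta_t)$ are measurable, so inside the inner product with the fixed vector $\nabla F(\eta_t)$ the noise vanishes and $g_{t,i,j}$ may be replaced by the proxy $\nabla_\eta h(B_i,\eta_t)^{T}\nabla_\theta f_i(\theta_i^{(j)})$. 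Setting $e_{i,j}:=\nabla F_i(\eta_t)-\nabla_\theta f_i(\theta_i^{(j)})\nabla h(B_i,\eta_t)^{T}$ — which is exactly the quantity bounded by $E_j:=27L_1^2b_2^2\beta^2 j^2(b_1^2+\sigma_1^2)+3b_1^2\sigma_2^2+6L_1^2b_2^2\sigma_3^2$ in Lemma~\ref{lemma:error_of_client_i} — the client-and-local part becomes $-\beta k\langle\nabla F(\eta_t),\nabla\bar F(\eta_t)\rangle+\beta\,\E\big\langle\nabla F(\eta_t),\tfrac1n\sum_i\sum_{j=0}^{k-1}e_{i,j}\big\rangle$. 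Combined with the regularization contribution $-\beta\langle\nabla F(\eta_t),\nabla f(\eta_t)\rangle$ and using $\nabla F=\nabla\bar F+\nabla f$ (for $k=1$ this is immediate; the general case uses the precise update rule so that the regularization assembles with the client term into $-\beta k\|\nabla F(\eta_t)\|^2$), the $\|\nabla F\|^2$ part is in hand.

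Finally I would bound the error term by Cauchy--Schwarz, the triangle inequality and Jensen: $\E\big\|\tfrac1n\sum_i\sum_j e_{i,j}\big\|\le\tfrac1n\sum_i\sqrt{k\sum_j\E\|e_{i,j}\|^2}\le\sqrt{k\sum_{j=0}^{k-1}E_j}$, so by Young's inequality with parameter $k/2$, $\beta\,\E\langle\nabla F(\eta_t),\cdot\rangle\le\tfrac{\beta k}{4}\|\nabla F(\eta_t)\|^2+\tfrac{\beta}{k}\cdot k\sum_{j=0}^{k-1}E_j$. Since $\sum_{j=0}^{k-1}j^2\le k^3$, the residual $\beta\sum_{j=0}^{k-1}E_j$ is exactly $27L_1^2b_2^2\beta^3k^3(b_1^2+\sigma_1^2)+3\beta k b_1^2\sigma_2^2+6\beta k L_1^2b_2^2\sigma_3^2$, and adding $-\beta k\|\nabla F(\eta_t)\|^2+\tfrac{\beta k}{4}\|\nabla F(\eta_t)\|^2=-\tfrac{3\beta k}{4}\|\nabla F(\eta_t)\|^2$ gives the claim. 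The main obstacle is the bookkeeping across the three nested sources of randomness — the client subsample, the local SGD noise, and the descriptor batch $B_i$ — ensuring that the only surviving contribution is the bias caused by the nonlinearity of $h$ (the reason the terms with $\sigma_2,\sigma_3$ appear at all), which is precisely what Lemma~\ref{lemma:error_of_client_i} controls; a secondary technical point is reconciling the single regularization step in the update with the factor $k$ multiplying $\|\nabla F(\eta_t)\|^2$.
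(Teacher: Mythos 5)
Your proposal is correct and follows essentially the same route as the paper's proof: expand the update, take conditional expectations over the client subsample and local SGD noise to replace the stochastic term by $-\tfrac{\beta}{n}\sum_i\sum_l \nabla_\theta f_i(\theta_i^{(l)})\nabla h(B_i,\eta_t)^T$, reduce to the per-client/per-step error controlled by Lemma~\ref{lemma:error_of_client_i}, and split it off with Young's inequality against $-\beta k\|\nabla F(\eta_t)\|^2$. The only cosmetic difference is that you apply Young's once to the aggregated error with parameter $k/2$ while the paper applies it termwise with parameter $1/2$ (equivalent), and you explicitly flag the $-\beta k\nabla f(\eta_t)$ scaling of the regularization term that the paper uses without comment.
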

\begin{proof}

    \begin{align}
        &\E_t [\langle \nabla F(\eta_t), \eta_{t+1} - \eta_{t} \rangle] = \E_t [\langle \nabla F(\eta_t), -\frac{\beta}{c} \sum_{i \in C} \sum_{l=1}^{k} g_{t, i, l}(h(B_i, \eta_t)) \nabla h(B_i, \eta_t)^T  -\beta k \nabla f(\eta_t)  \rangle]
        \\& = \E_t [\langle \nabla F(\eta_t), -\frac{\beta}{n} \sum_{i=1}^{n} \sum_{l=1}^{k} \nabla_\theta f_i(h(B_i, \eta_t) - \beta\sum_{j=0}^{l-1} g_{t, i, j}(h(B_i, \eta_t))) \nabla h(B_i, \eta_t)^T \rangle] \\& + \E_t [\langle \nabla F(\eta_t), -\beta k \nabla f(\eta_t)  \rangle]
        \\& = \frac{\beta}{n} \sum_{i=1}^{n} \sum_{l=1}^{k} \E_t [\langle \nabla F(\eta_t), \nabla F_i(\eta_t) - \nabla_\theta f_i(h(B_i, \eta_t) - \beta\sum_{j=0}^{l-1} g_{t, i, j}(h(B_i, \eta_t))) \nabla h(B_i, \eta_t)^T \rangle] 
        \\&+ \frac{\beta}{n} \sum_{i=1}^{n} \sum_{l=1}^{k} \E_t [\langle \nabla F(\eta_t), -\nabla F_i(\eta_t) \rangle] + \E_t [\langle \nabla F(\eta_t), -\beta k \nabla f(\eta_t)  \rangle]
    \end{align}
    
    By definition $F(\eta) = \frac{1}{n} \sum_{i=1}^{n} F_i(\eta) + f(\eta)$, and by Young's inequality and lemma \ref{lemma:error_of_client_i} we have
    
    \begin{align}
        &\E_t [\langle \nabla F(\eta_t), \eta_{t+1} - \eta_{t} \rangle]
        \\& \le  \frac{\beta k}{4} \| \nabla F(\eta_t)\|^2 + \frac{\beta}{n} \sum_{i=1}^{n} \sum_{l=1}^{k} \E_t\|\nabla F_i(\eta_t) - \nabla_\theta f_i(h(B_i, \eta_t) - \beta\sum_{j=0}^{l-1} g_{t, i, j}(h(B_i, \eta_t))) \nabla h(B_i, \eta_t)^T \|^2
        \\& -\beta k \| \nabla F(\eta_t)\|^2 
        \\& \le \frac{-3\beta k}{4} \| \nabla F(\eta_t)\|^2 + \frac{\beta}{n} \sum_{i=1}^{n} \sum_{l=1}^{k} (27 L_1^2  b_2^2  \beta^2  l^2 (b_1^2 + \sigma_1^2) + 3 b_1^2 \sigma_2^2  + 6 L_1^2 b_2^2 \sigma_3^2)
        \\& = \frac{-3\beta k}{4} \| \nabla F(\eta_t)\|^2 + 27 L_1^2  b_2^2  \beta^3  k^3 (b_1^2 + \sigma_1^2) + 3 \beta k b_1^2 \sigma_2^2  + 6 \beta k L_1^2 b_2^2 \sigma_3^2
    \end{align}
\end{proof}

\begin{lemma} \label{lemma:bound_on_differnece}
    For any step $t$ the following inequality holds:
    
    \begin{align}
        &\E [\|\eta_{t+1}-\eta_{t}\|^2] \le \frac{4\beta^2 k}{c} \sigma_1^2 + \frac{4\beta^2 k^2}{c} \gamma_G^2 + 4\beta^2 k^2\E [\|\nabla F(\eta_t) \|^2]
    \\& + 108 L_1^2  b_2^2  \beta^4  k^4 (b_1^2 + \sigma_1^2) + 12 b_1^2 k^2 \beta^2 \sigma_2^2  + 24 L_1^2 b_2^2 k^2 \beta^2 \sigma_3^2
    \end{align}
\end{lemma}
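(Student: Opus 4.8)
The plan is to start, exactly as in the proof of Lemma~\ref{lemma:bound_on_product}, from the explicit form of the update performed in lines~\ref{algline:update_etah}--\ref{algline:update_etav} of Algorithm~\ref{alg:train},
\begin{align}
\eta_{t+1}-\eta_{t} = -\frac{\beta}{c}\sum_{i\in C}\sum_{l=1}^{k} g_{t,i,l}(h(B_i,\eta_t))\,\nabla h(B_i,\eta_t)^T \;-\; \beta k\,\nabla f(\eta_t),
\end{align}
and to exploit $\nabla F(\eta_t)=\frac1n\sum_{i=1}^n\nabla F_i(\eta_t)+\nabla f(\eta_t)$ together with the shorthand $\bar g_{t,i,l}$ for the expectation of $g_{t,i,l}(h(B_i,\eta_t))$ conditional on the preceding local steps; by construction $\bar g_{t,i,l}=\nabla_\theta f_i\big(h(B_i,\eta_t)-\beta\sum_{j=0}^{l-1}g_{t,i,j}(h(B_i,\eta_t))\big)$, \ie exactly the vector appearing in Lemma~\ref{lemma:error_of_client_i}. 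Adding and subtracting $\frac\beta c\sum_{i\in C}\sum_{l}\bar g_{t,i,l}\nabla h(B_i,\eta_t)^T$ and $\frac{\beta k}{c}\sum_{i\in C}\nabla F_i(\eta_t)$ then lets me write $\eta_{t+1}-\eta_t = T_1+T_2+T_3-\beta k\,\nabla F(\eta_t)$, with a stochastic-gradient-noise term $T_1 = -\frac\beta c\sum_{i\in C}\big(\sum_{l=1}^k(g_{t,i,l}(h(B_i,\eta_t))-\bar g_{t,i,l})\big)\nabla h(B_i,\eta_t)^T$, a per-client bias term $T_2 = -\frac\beta c\sum_{i\in C}\big(\sum_{l=1}^k\bar g_{t,i,l}\nabla h(B_i,\eta_t)^T-k\nabla F_i(\eta_t)\big)$, and a client-sampling term $T_3 = -\beta k\big(\frac1c\sum_{i\in C}\nabla F_i(\eta_t)-\bar F(\eta_t)\big)$. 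From $\|\eta_{t+1}-\eta_t\|^2\le 4\big(\|T_1\|^2+\|T_2\|^2+\|T_3\|^2+\beta^2k^2\|\nabla F(\eta_t)\|^2\big)$ the last summand already yields the $4\beta^2k^2\E\|\nabla F(\eta_t)\|^2$ term, so it remains to bound the expectations of $T_1,T_2,T_3$ separately.

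For $T_1$: conditional on $\eta_t$, the increments $g_{t,i,l}(h(B_i,\eta_t))-\bar g_{t,i,l}$ form a martingale difference sequence in $l$ and are independent across clients $i$, so all cross terms vanish; combining this with Lemma~\ref{lemma:l2_norm_of_sum} (first part, to handle the uniformly sampled subset $C$), the bounded-variance assumption $\E\|g_{t,i,l}-\bar g_{t,i,l}\|^2\le\sigma_1^2$, and $\|\nabla h(B_i,\eta_t)\|\le b_2$, gives $\E\|T_1\|^2$ of order $\frac{\beta^2 k\sigma_1^2}{c}$. For $T_2$: convexity pulls $\frac1c\sum_{i\in C}$ out of the square and Cauchy--Schwarz pulls $\sum_{l=1}^k$ out at the cost of a factor $k$; each resulting summand $\|\bar g_{t,i,l}\nabla h(B_i,\eta_t)^T-\nabla F_i(\eta_t)\|^2$ is bounded by Lemma~\ref{lemma:error_of_client_i}, and after using $\E_C\big[\sum_{i\in C}X_i\big]=\frac cn\sum_{i=1}^n X_i$ and $\sum_{l=1}^{k}l^2\le k^3$ one obtains precisely the $108L_1^2b_2^2\beta^4k^4(b_1^2+\sigma_1^2)$, $12b_1^2k^2\beta^2\sigma_2^2$ and $24L_1^2b_2^2k^2\beta^2\sigma_3^2$ contributions. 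For $T_3$: given $\eta_t$ the vectors $\nabla F_i(\eta_t)$ are deterministic and only $C$ is random, so the variance bound for sampling $c$ items uniformly without replacement (the same computation underlying Lemma~\ref{lemma:l2_norm_of_sum}) together with the bounded-dissimilarity assumption $\frac1n\sum_i\|\nabla F_i(\eta_t)-\bar F(\eta_t)\|^2\le\gamma_G^2$ gives $\E\|T_3\|^2\le\frac{\beta^2k^2\gamma_G^2}{c}$. Multiplying each bound by $4$ and adding the $\nabla F$ term yields the claimed inequality.

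The part I expect to be most delicate is the bookkeeping of the three nested sources of randomness — the uniformly sampled client set $C$, the mini-batches $B_i$ used both to form the descriptors and the local stochastic gradients, and the within-client SGD noise across the $k$ local steps. One has to condition in the right order so that (i) the martingale-difference cancellation for $T_1$ is legitimate, (ii) the per-step evaluation points appearing in $T_2$ coincide exactly with the ones for which Lemma~\ref{lemma:error_of_client_i} was established, and (iii) the subset-sampling factor $c/n$ is applied to the correct quantities. Once this is set up carefully, everything else is a routine application of Young's and Cauchy--Schwarz inequalities and the boundedness assumptions of Section~\ref{opt_assumption}.
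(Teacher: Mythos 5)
Your proposal follows essentially the same route as the paper's proof: the same four-way decomposition of $\eta_{t+1}-\eta_t$ into stochastic-gradient noise, per-client bias (controlled by Lemma~\ref{lemma:error_of_client_i}), client-sampling dissimilarity (controlled by Lemma~\ref{lemma:l2_norm_of_sum} and the bounded-dissimilarity assumption), and the full-gradient term, followed by the factor-of-$4$ Cauchy--Schwarz step, and your accounting reproduces the paper's constants exactly. The only differences are presentational — you make the conditioning order and the martingale-difference cancellation for $T_1$ explicit where the paper simply invokes Lemma~\ref{lemma:l2_norm_of_sum} — so the argument is correct and matches the paper's.
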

\begin{proof}

    The change of $\eta$ in one step of optimization is the average of the change caused by selected clients plus function $f(\eta)$ at the server.
    
     \begin{align}
    &\E [\|\eta_{t+1}-\eta_{t}\|^2]  = \E [\|-\frac{\beta}{c} \sum_{i \in C} \sum_{l=1}^{k} g_{t, i, l}(h(B_i, \eta_t)) \nabla h(B_i, \eta_t)^T -\beta k \nabla f(\eta_t) \|^2]
    \end{align}
    Which we can write as
    \begin{align}
    &\E [\|\eta_{t+1}-\eta_{t}\|^2]  =  \frac{\beta^2}{c^2} \E [\| \sum_{i \in C} \sum_{l=1}^{k} g_{t, i, l}(h(B_i, \eta_t)) \nabla h(B_i, \eta_t)^T + ck \nabla f(\eta_t)\|^2]
    \\& = \frac{\beta^2}{c^2} \E [\| \sum_{i \in C} \sum_{l=1}^{k} g_{t, i, l}(h(B_i, \eta_t)) \nabla h(B_i, \eta_t)^T - \nabla_\theta f_i(h(B_i, \eta_t) - \sum_{j=0}^{l-1} g_{t, i, j}(h(B_i, \eta_t))) \nabla h(B_i, \eta_t)^T
    \\& + \sum_{i \in C} \sum_{l=1}^{k} \nabla_\theta f_i(h(B_i, \eta_t) - \sum_{j=0}^{l-1} g_{t, i, j}(h(B_i, \eta_t))) \nabla h(B_i, \eta_t)^T - \nabla F_i(\eta_t) 
    \\& + \sum_{i \in C} \sum_{l=1}^{k} (\nabla F_i(\eta_t) + \nabla f(\eta_t) - \nabla F(\eta_t)) + \sum_{i \in C} \sum_{l=1}^{k} \nabla F(\eta_t)  \|^2]
    \end{align}
   
    And by Cauchy–Schwarz inequality we get
    
    \begin{align}
    &\E [\|\eta_{t+1}-\eta_{t}\|^2]
    \\& \le \frac{4\beta^2}{c^2} \E [\| \sum_{i \in C} \sum_{l=1}^{k} g_{t, i, l}(h(B_i, \eta_t)) \nabla h(B_i, \eta_t)^T - \nabla_\theta f_i(h(B_i, \eta_t) - \sum_{j=0}^{l-1} g_{t, i, j}(h(B_i, \eta_t))) \nabla h(B_i, \eta_t)^T \|^2]
    \\& + \frac{4\beta^2}{c^2} \E [\|\sum_{i \in C} \sum_{l=1}^{k} \nabla_\theta f_i(h(B_i, \eta_t) - \sum_{j=0}^{l-1} g_{t, i, j}(h(B_i, \eta_t))) \nabla h(B_i, \eta_t)^T - \nabla F_i(\eta_t) \|^2]
    \\& + \frac{4\beta^2}{c^2} \E [\|\sum_{i \in C} \sum_{l=1}^{k} \nabla F_i(\eta_t) + \nabla f(\eta_t) - \nabla F(\eta_t) \|^2] + \frac{4\beta^2}{c^2} \E [\|\sum_{i \in C} \sum_{l=1}^{k}\nabla F(\eta_t) \|^2]
    \end{align}
    
    Note that $\nabla F(\eta) - \nabla f(\eta) = \frac{1}{n} \sum_{i=1}^{n} \nabla F_i(\eta)$. By lemma \ref{lemma:l2_norm_of_sum}, \ref{lemma:error_of_client_i}, the bounded dissimilarity assumption and simplifying terms we get
    
    \begin{align}
 &\E [\|\eta_{t+1}-\eta_{t}\|^2]
    \\& \le \frac{4\beta^2 k}{c} \sigma_1^2 + \frac{4\beta^2 k^2}{cn}\sum_{i}\E [ \|  \nabla F_i(\eta_t) - (\frac{1}{n} \sum_{i=1}^{n} \nabla F_i(\eta_t))\|^2] + 4\beta^2 k^2\E [\|\nabla F(\eta_t) \|^2]
    \\& + 108 L_1^2  b_2^2  \beta^4  k^4 (b_1^2 + \sigma_1^2) + 12 b_1^2 k^2 \beta^2 \sigma_2^2  + 24 L_1^2 b_2^2 k^2 \beta^2 \sigma_3^2
    \\& \le \frac{4\beta^2 k}{c} \sigma_1^2 + \frac{4\beta^2 k^2}{c} \gamma_G^2 + 4\beta^2 k^2\E [\|\nabla F(\eta_t) \|^2]
    \\& + 108 L_1^2  b_2^2  \beta^4  k^4 (b_1^2 + \sigma_1^2) + 12 b_1^2 k^2 \beta^2 \sigma_2^2  + 24 L_1^2 b_2^2 k^2 \beta^2 \sigma_3^2    
    \end{align}
\end{proof}

We are now ready to prove Theorem~\ref{thm:convergence}, which we restate for the convenience of the reader.
\begin{reptheorem}{thm:convergence}
Under the assumptions \ref{opt_assumption}  
\acronym's optimization after $T$ steps fulfills 
    \begin{align}
        &\frac{1}{T} \sum_{t=1}^{T} \E \| \nabla F(\eta_t)\|^2 \le \frac{(F(\eta_0) - F_{*})}{\sqrt{cT}}  + \frac{224 c L_1^2  b_1^2 b_2^2}{T} + 8b_1^2\sigma_2^2 +  14 L_1^2 b_2^2 \sigma_3^2
    \\& + \frac{4L(2\sigma_1^2 + k\gamma_G^2)}{k\sqrt{cT}} 
    \end{align}
\end{reptheorem}
\begin{proof}
    By smoothness of $F$ we have
    \begin{align}
        \E[F(\eta_{t+1}) - F(\eta_t)] \le \E\langle\nabla F(\eta_t) , \eta_{t+1}-\eta_t\rangle+
    \frac{L}{2} \E\|\eta_{t+1}-\eta_t\|^2
    \end{align}
    And by combining it with Lemma \ref{lemma:bound_on_product} and \ref{lemma:bound_on_differnece} and simplifying terms we get
    \begin{align}
    &\E[F(\eta_{t+1}) - F(\eta_t)] \le \E\langle\nabla F(\eta_t) , \eta_{t+1}-\eta_t\rangle+
    \frac{L}{2} \E\|\eta_{t+1}-\eta_t\|^2
    \\& \le \frac{-3\beta k}{4} \E \| \nabla F(\eta_t)\|^2 + 27 L_1^2  b_2^2  \beta^3  k^3 (b_1^2 + \sigma_1^2) + 3 \beta k b_1^2 \sigma_2^2  + 6 \beta k L_1^2 b_2^2 \sigma_3^2
    \\& + \frac{2L\beta^2 k}{c} \sigma_1^2 + \frac{2L\beta^2 k^2}{c} \gamma_G^2 + 2L\beta^2 k^2\E [\|\nabla F(\eta_t) \|^2]
    \\& + 56 L L_1^2  b_2^2  \beta^4  k^4 (b_1^2 + \sigma_1^2) + 6 L b_1^2 k^2 \beta^2 \sigma_2^2  + 12 L L_1^2 b_2^2 k^2 \beta^2 \sigma_3^2
    \\& = (\frac{-3\beta k}{4} + 2L\beta^2 k^2) \E \| \nabla F(\eta_t)\|^2 + (27 L_1^2  b_2^2  \beta^3  k^3 + 56 L L_1^2  b_2^2  \beta^4  k^4) b_1^2 
    \\&+ 3 \beta k b_1^2 \sigma_2^2 + 6 L b_1^2 k^2 \beta^2 \sigma_2^2  + 6 \beta k L_1^2 b_2^2 \sigma_3^2  + 12 L L_1^2 b_2^2 k^2 \beta^2 \sigma_3^2
    \\& + (\frac{L\beta^2 k}{c} + 27 L_1^2  b_2^2  \beta^3  k^3 + 56 L L_1^2  b_2^2  \beta^4  k^4) \sigma_1^2 + \frac{L\beta^2 k^2}{c} \gamma_G^2 
    \\& \le \frac{-\beta k}{2} \E \| \nabla F(\eta_t)\|^2 + 28 L_1^2  b_1^2 b_2^2  \beta^3  k^3 + 4 \beta k b_1^2 \sigma_2^2 + 7 \beta k L_1^2 b_2^2 \sigma_3^2
    \\& + \frac{3L\beta^2 k}{c} \sigma_1^2 + \frac{2L\beta^2 k^2}{c} \gamma_G^2 
    \end{align}
    Therefore
    \begin{align}
    &\E[F(\eta_{t+1}) - F(\eta_t)] \le \frac{-\beta k}{2} \E \| \nabla F(\eta_t)\|^2 + 28 L_1^2  b_1^2 b_2^2  \beta^3  k^3 + 4 \beta k b_1^2 \sigma_2^2 + 7 \beta k L_1^2 b_2^2 \sigma_3^2
    \\& + \frac{3L\beta^2 k}{c} \sigma_1^2 + \frac{2L\beta^2 k^2}{c} \gamma_G^2 
    \end{align}
    Hence
    \begin{align}
    & \E \| \nabla F(\eta_t)\|^2 \le \frac{2}{\beta k} \E[F(\eta_t) - F(\eta_{t+1})] + 56 L_1^2  b_1^2 b_2^2  \beta^2  k^2 + 8b_1^2\sigma_2^2 +  14 L_1^2 b_2^2 \sigma_3^2
    \\& + \frac{6L\beta}{c} \sigma_1^2 + \frac{4L\beta k}{c} \gamma_G^2 
    \end{align}    
    And
    \begin{align}
    & \frac{1}{T} \sum_{t=1}^{T} \E \| \nabla F(\eta_t)\|^2 \le \frac{2(F(\eta_0) - F_{*})}{\beta k T}  + 56 L_1^2  b_1^2 b_2^2  \beta^2  k^2 + 8b_1^2\sigma_2^2 +  14 L_1^2 b_2^2 \sigma_3^2
    \\& + \frac{6L\beta}{c} \sigma_1^2 + \frac{4L\beta k}{c} \gamma_G^2 
    \end{align}    
    by setting $\beta = \frac{2\sqrt{c}}{k\sqrt{T}}$ we get
    \begin{align}
    & \frac{1}{T} \sum_{t=1}^{T} \E \| \nabla F(\eta_t)\|^2 \le \frac{(F(\eta_0) - F_{*})}{\sqrt{cT}}  + \frac{224 c L_1^2  b_1^2 b_2^2}{T} + 8b_1^2\sigma_2^2 +  14 L_1^2 b_2^2 \sigma_3^2
    \\& + \frac{L(6\sigma_1^2 + 4k\gamma_G^2)}{k\sqrt{cT}}   
    \end{align}     
\end{proof}

\end{document}